\newtheorem{proposition}{Proposition}
\newtheorem{lemma}{Lemma}
\newcommand{\hdr}{HDR}
\newcommand{\Schoberl}{Sch\"oberl}
\newcommand{\N}{\mathcal{N}}
\newcommand{\R}{\mathbb{R}}
\DeclareMathOperator*{\argmin}{arg\,min}
\DeclareMathOperator*{\argmax}{arg\,max}
\definecolor{myRed}{RGB}{229,28,35}
\definecolor{myViolet}{RGB}{74,20,140}
\definecolor{myGreen}{RGB}{0,137,123}
\definecolor{myYellow}{RGB}{255,193,7}
\newcommand{\psnr}{PSNR}
\newcommand{\hpnlb}{HBE}%{HPNLB}%{HPBE}
\newcommand\sR{\mathbb{R}}
\newcommand\sN{\mathbb{N}}
\renewcommand \L {\boldsymbol{\Lambda}}
\renewcommand \k {\kappa}
\renewcommand{\S}{\boldsymbol{\Sigma}}
\newcommand{\mean}{\boldsymbol{\mu}}
\renewcommand{\a}{g}
\newcommand{\f}{\mathbf{C}}
\renewcommand{\c}{\mathbf{c}}
\newcommand{\fa}{o}
\newcommand{\gp}{a}
\newcommand{\C}{C}
\newcommand{\Cp}{\mathrm{C}}
\newcommand{\U}{\boldsymbol{D}}
\newcommand{\E}{\mathbb{E}}
\newcommand{\ns}{\mathbf{N}}
\newcommand{\Yp}{\mathrm{Y}}
\newcommand{\y}{\mathbf{Z}}
\newcommand{\z}{\mathbf{z}}               % samples z
\newcommand{\Z}{\mathbf{Z}}  % random variable z
\newcommand{\Zp}{\mathrm{Z}}  % random variable z
\newcommand{\prnu}{\textsc{prnu}}
\newcommand{\Np}{n}
\newcommand{\meanR}{\mean_R}
\newcommand{\sn}{\sigma^2_R}
\begin{document}

\title{A Bayesian Hyperprior Approach for Joint Image Denoising and Interpolation, \\
with an Application to HDR Imaging}

\author{Cecilia~Aguerrebere, Andr\'es~Almansa, Julie~Delon, Yann~Gousseau and Pablo~Mus\'e\thanks{C. Aguerrebere is with the Department of Electrical and Computer Engineering, Duke University, Durham NC 27708, US (e-mail: cecilia.aguerrebere@duke.edu)}\thanks{Y. Gousseau is with the LTCI, T\'el\'ecom ParisTech, Universit\'e Paris-Saclay, 75013 Paris, France (e-mail: gousseau@telecom-paristech.fr).}\thanks{A. Almansa and J. Delon are with MAP5 (CNRS UMR 8145), Universit\'e Paris Descartes, 75270 Paris Cedex 06 (e-mail: andres.almansa,julie.delon@parisdescartes.fr)}\thanks{P. Mus\'e is with the Department of Electrical Engineering, Universidad de la Rep\'ublica, 11300 Montevideo, Uruguay (e-mail: pmuse@fing.edu.uy)}}

% The paper headers
\markboth{Journal of \LaTeX\ Class Files,~Vol., No., March~2017}%
{Shell \MakeLowercase{\textit{et al.}}: Bare Demo of IEEEtran.cls for Journals}

\maketitle

\begin{abstract}
Recently, impressive denoising results have been achieved by Bayesian approaches which assume Gaussian models for the image patches.
This improvement in performance can be attributed to the use of per-patch models.
Unfortunately such an approach is particularly unstable for most inverse problems beyond denoising.
In this work, we propose the use of a hyperprior to model image patches, in order to stabilize the estimation procedure.
There are two main advantages to the proposed restoration scheme:
Firstly it is adapted to diagonal degradation matrices, and in particular to missing data problems (e.g. inpainting of missing pixels or zooming).
Secondly it can deal with signal dependent noise models, particularly suited to digital cameras. As such, the scheme is especially adapted to computational photography. In order to illustrate this point, we provide
an application to high dynamic range imaging from a single image
taken with a modified sensor, which shows the effectiveness of the proposed scheme. 
\end{abstract}

\begin{IEEEkeywords}
Non-local patch-based restoration, Bayesian restoration, Maximum a Posteriori, Gaussian Mixture Models, hyper-prior, conjugate distributions, high dynamic range imaging, single shot HDR, hierarchical models.
\end{IEEEkeywords}

\IEEEpeerreviewmaketitle

\section{Introduction}

\IEEEPARstart{D}{igital} images are subject to a wide variety of degradations, which in most cases can be modeled as 
\begin{equation}
\y = \U \f + \ns,
\label{eq:modelNewI}
\end{equation}
where $\y$ is the observation, $\U$ is the degradation operator, $\f$ is the underlying ground-truth image and $\ns$ is additive noise. Different settings of the degradation matrix $\U$ model different problems such as zooming, deblurring or missing pixels. Different versions of the noise term $\ns$ include  the classical additive Gaussian noise with constant variance or more complicated and realistic models such as signal dependent noise.

Due to the inherent ill-posedness of such inverse problems, standard approaches impose some prior on the image, in either variational or Bayesian approaches. Popular image models have been proposed through the total variation~\cite{rudin1992nonlinear}, wavelet decompositions~\cite{portilla2003image} or the sparsity of image patches~\cite{elad2006image}. 
Buades et al.~\cite{buades05} introduced the use of patches and the self-similarity hypothesis to the denoising problem leading to a new era of patch-based image restoration techniques.  
A major step forward in fully exploiting the  potential of patches was achieved by several state-of-the-art restoration methods with the introduction of patch prior models, in a Bayesian framework. Some methods are devoted to the denoising problem~\cite{lyu09,chatterjee12,lebrun13,wang13}, while others propose a more general framework for the solution of image inverse problems~\cite{zoran11,yu12}, including inpainting, deblurring and zooming. The work by Lebrun et al.~\cite{lebrun12,lebrun13} presents a thorough analysis of several recent restoration methods, revealing their common roots and their relationship with the Bayesian approach. 

Among the state-of-the-art restoration methods, two noticeable approaches are the patch-based Bayesian approach by Yu et al.~\cite{yu12}, namely the piece-wise linear estimators (PLE), and the non-local Bayes (NLB) algorithm by Lebrun et al.~\cite{lebrun13}. PLE is a general framework for the solution of image inverse problems under Model~\eqref{eq:modelNewI}, while NLB is a denoising method ($\U = Id$). Both methods use a Gaussian patch prior learnt from image patches through iterative procedures. In the case of PLE, patches are modeled according to a Gaussian Mixture Model (GMM), with a relatively small number of classes (19 in all their experiments), whose parameters are learnt from all image patches\footnote{Actually, the authors report the use of $128 \times 128$ image sub-regions in their experiments, so we may consider PLE as a semi-local approach.}. In the case of NLB, each patch is associated with a single Gaussian model, whose parameters are computed from similar patches chosen from a local neighbourhood, i.e., a search window centered at the patch. We refer hereafter to this kind of per-patch modelling as \textit{local}. 

Zoran and Weiss~\cite{zoran11} (EPLL) follow a similar approach, but instead of iteratively updating the GMM from image patches, they use a larger number of classes that are fixed and learnt from a large database of natural image patches. Wang and Morel~\cite{wang13} claim that, in the case of denoising, it is better to have fewer models that are updated with the image patches (as in PLE) than having a large number of fixed models (as in EPLL). 

All of the previous restoration approaches share a common Bayesian framework based on Gaussian patch priors. Relying on local priors~\cite{lebrun13,wang13} has proven more accurate for the task of image denoising than relying on a mixture of a limited number of Gaussian models~\cite{zoran11,yu12}. In particular, NLB outperforms PLE for this task~\cite{wang13c}, mostly due to its local model estimation. On the other hand, PLE yields state-of-the-art results in other applications such as interpolation of missing pixels (especially with high masking rates), deblurring and zooming. 

As a consequence we are interested in taking advantage of a local patch modelling for more general inverse problems than denoising. The main difficulty lies in the estimation of the models, especially when the image degradations involve a high rate of missing pixels, in which case the estimation is seriously ill-posed. 

In this work we propose to model image patches according to a Gaussian prior, whose parameters, the mean $\mean$ and the covariance matrix $\S$, will be estimated locally from similar patches. In order to tackle this problem, we include prior knowledge on the model parameters making use of a hyperprior, i.e. a probability distribution on the parameters of the prior. In Bayesian statistics, $\mean$ and $\S$ are known as hyperparameters, while the prior on them is called a hyperprior. Such a framework is often called hierarchical Bayesian modelling~\cite{gelman2014bayesian}. Its application to inverse problems in imaging is not new. In particular, in the field of image restoration, this methodology was proposed by Molina et al.~\cite{Molina1994,Molina1999}, and was more recently applied to image unmixing problems~\cite{dobigeon2008semi} and to image deconvolution and the estimation of the point spread function of a camera~\cite{Orieux2010}. However, to our knowledge, this is the first time that such a hierarchical Bayesian methodology is used to reduce ill-posedness in patch-based image restoration. In this context, the use of a hyperprior 
compensates for the patches missing information. 

There are two main contributions of this work:
First, as described above, we propose a robust framework enabling the use of Gaussian local priors on image patches for solving a useful family of restoration problems by drawing on a hierarchical Bayesian approach. 
The second advantage of the proposed framework is its ability to deal with
signal dependent noise, therefore making it adapted to realistic digital photography applications. 

Experiments on both synthetic and real data show that the approach is well suited to various problems involving a diagonal degradation operator. First, we show state-of-the-art results in image restoration problems such as denoising, zooming and interpolation of missing pixels. Then we consider the generation of high dynamic range (HDR) images from a single snapshot using spatially varying pixel exposures~\cite{nayar00} and demonstrate that our approach significantly outperforms existing methods to deal with this inverse problem. It is worth mentioning that modified sensors enabling such approaches have been recently made available by Sony but are not yet fully exploited by available smartphones and digital cameras.

The article is organized as follows. Section~\ref{sec:newMethod} introduces the proposed approach while Section~\ref{sec:implDetails} presents the main implementation aspects. Supportive experiments are presented in Section~\ref{ssec:expsNewMethod}. Section~\ref{sec:HDR} is devoted to the application of the proposed framework to the HDR imaging problem. Last, conclusions are summarized in Section~\ref{sec:conclusions}.

\section{Hyperprior Bayesian Estimator}
\label{sec:newMethod}
\begin{figure}
\centering
\includegraphics[width=\linewidth]{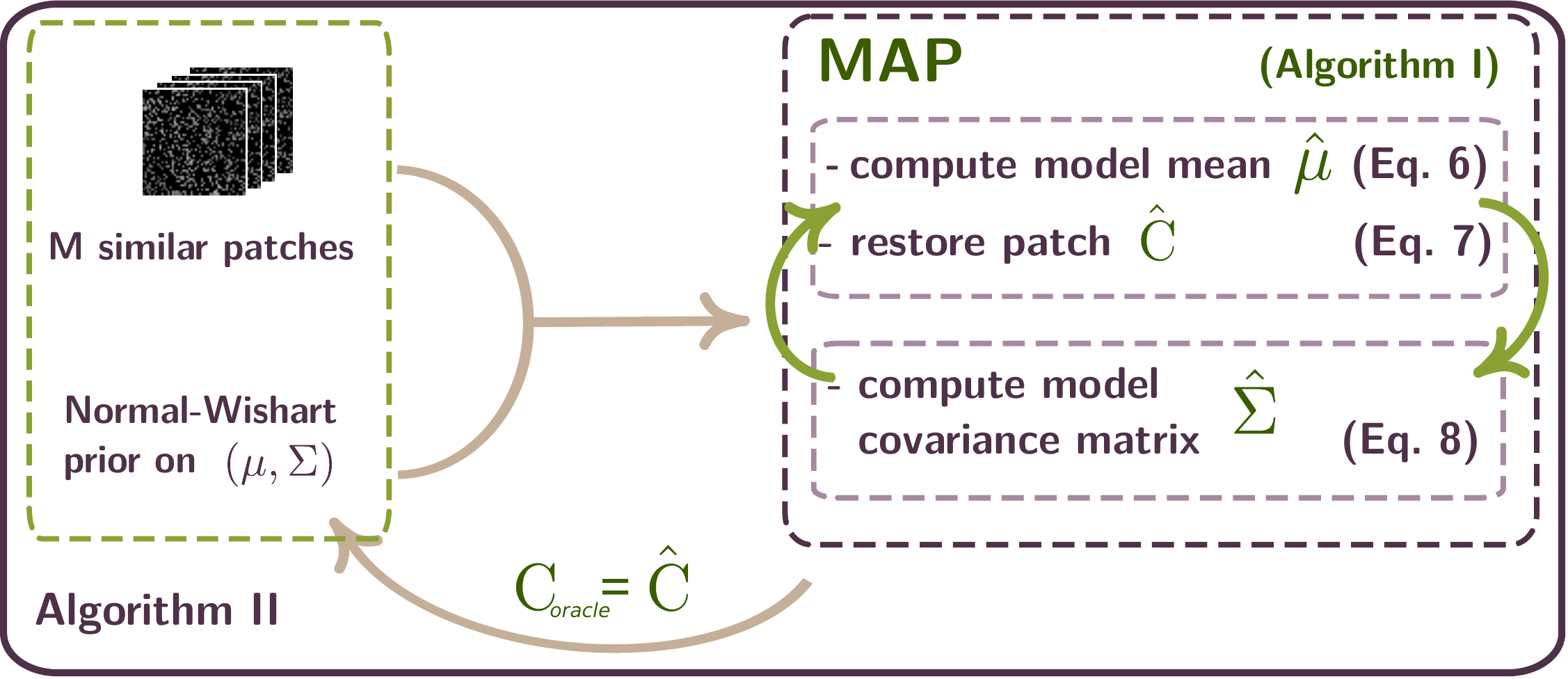} 
\caption{Diagram of the proposed iterative approach.}
\label{fig:methodDiagram}
\end{figure}
The proposed restoration method, called Hyperprior Bayesian Estimator (\hpnlb{}), assumes a Gaussian prior for image patches, with parameters $\mean$ and $\S$. A \textbf{joint maximum a posteriori} formulation is used to estimate both the image patches and the parameters $\mean$ and $\S$, thanks to a Bayesian hyperprior model on these parameters, stabilizing the local estimation of the Gaussian statistics. As a consequence, we can exploit the accuracy of local model estimation for general restoration problems, in particular with missing values (e.g. for interpolation or zooming). Figure~\ref{fig:methodDiagram} illustrates the proposed approach which is described in detail in the following.

\subsection{Patch degradation model} 
\label{ssec:degradation}
The observed image $\z$ is decomposed into $I$ overlapping patches $\{\z_i\}_{i=1,\dots,I}$ of size $\sqrt{\Np}\times\sqrt{\Np}$. Each patch $\z_i \in \sR^{\Np \times 1}$ is considered to be a realization of the random variable $\y_i$ given by
\begin{equation}
\y_i = \U_i \f_i + \ns_i,
\label{eq:modelNew}
\end{equation}
where $\U_i \in \sR^{\Np \times \Np}$ is a degradation operator, $\f_i \in \sR^{\Np  \times 1}$ is the original patch we seek to estimate and $\ns_i \in \sR^{\Np \times 1}$ is an additive noise term, modeled by a Gaussian distribution $\ns_i\sim \N(0,\S_{N_i})$.
Therefore, the distribution of $\y_i$ given $\f_i$ can be written as
\begin{flalign}
  &p(\y_i \;|\; \f_i) \sim \N(\U_i\f_i,\S_{N_i})& \nonumber\\
  &\propto|\S_{N_i}^{-1}|^{\frac 1 2} \exp\left(-\frac 1 2 (\y_i - \U_i
    \f_i)^T \S_{N_i}^{-1} (\y_i - \U_i \f_i)\right).&
\label{eq:modelPatch}
\end{flalign}
In this noise model, the matrix $\S_{N_i}$ is only assumed to be diagonal (the noise is uncorrelated). It can represent a constant variance, spatially variable variances or even variances dependent on the pixel value (to approximate Poisson noise). 

This degradation model is deliberately generic. We will see in Section~\ref{ssec:expsNewMethod} that keeping a broad noise model is essential to properly tackle the problem of HDR imaging from a single image. The model also includes the special case of multiplicative noise. 

\subsection{Joint Maximum A Posteriori }
\label{ssec:jointmap}
We assume a Gaussian prior for each patch, with unknown mean $\mean$ and covariance matrix $\S$, $p(\f_i \;|\; \mean,\S) \sim \N(\mean,\S).$ 
To simplify calculations we work with the precision matrix $\L = \S^{-1}$. As it is usual when considering hyperpriors, we assume that the parameters $\mean$ and $\S$ follow a conjugate distribution. In our case, that boils down to assuming a Normal-Wishart\footnote{The Normal-Wishart distribution is the conjugate prior of a multivariate normal distribution with unknown mean and covariance matrix. $\mathcal{W}$ denotes the Wishart distribution~\cite{nla.cat-vn751926}.} prior for the couple ($\mean,\L$),
\begin{flalign}
\label{eq:pmuL}
p(\mean,\L) &= \N(\mean | \mean_0,(\k\L)^{-1})\mathcal{W}(\L|(\nu \S_0)^{-1},\nu)\\
	  &\propto |\L|^{1/2} \exp{\left( -\frac{\k}{2}(\mean - \mean_0) \L (\mean - \mean_0)^T \right)} \nonumber \\
          &\phantom{\propto } |\L|^{(\nu-n-1)/2} \exp{ \left( -\frac{1}{2} \text{tr}(\nu \S_0 \L) \right) }, \nonumber 
\end{flalign}
with parameters $\mean_0$, $\S_0$, scale parameter $\k > 0$ and $\nu > n - 1$ degrees of freedom.

Now, assume that we observe a group $\{\y_i\}_{i=1,\dots,M}$ of similar patches and that we want to recover the restored patches $\{\f_i\}_{i=1,\dots,M}$. If these unknown $\{\f_i\}$ are independent\footnote{We rely on the classical independence assumption made in the patch-based literature, even if it is wrong in case patches overlap.} and follow the same  Gaussian model, we can compute the joint maximum a posteriori
\begin{flalign}
&\argmax\limits_{\{\f_i\},\mean,\L} \;\;p( \{\f_i\},\mean,\L \;|\;\{\y_i\}) = \\ 
\;\;\;\;=&\;\;p(\{\y_i\} \;|\; \{\f_i\},\mean,\L)\;  \; p(\{\f_i\} \;|\; \mean,\L)\; \; p(\mean,\L)& \nonumber\\
\;\;\;\;=&\;\;p(\{\y_i\} \;|\; \{\f_i\})\;  \; p(\{\f_i\} \;|\; \mean,\L)\; \; p(\mean,\L).& \nonumber
\end{flalign}
In this product, the first term is given by the noise model~\eqref{eq:modelPatch}, the second one is the Gaussian prior on the set of patches $\{\f_i\}$ and the third one is the hyperprior~\eqref{eq:pmuL}. In the last equality we omit the explicit dependence on $\mean$ and $\L$ in $p(\{\y_i\} \;|\; \{\f_i\},\mean,\L)$, since these parameters are completely determined by the set $\{\f_i\}$.   
\subsection{Optimality conditions}
Computing the joint maximum a posteriori amounts to minimizing
{\small
\begin{eqnarray*}
\label{eq:pmuLAp}
f(\{\f_i\},\mean,\L) &:=& -\log p( \{\f_i\},\mean,\L \;|\;\{\y_i\}) \\
&=&\frac 1 2 (\y_i - \U_i
    \f_i)^T \S_{N_i}^{-1} (\y_i - \U_i \f_i)\\
&-&  {\frac{\nu -n +M}{2}}\log |\L|\\
&+&\frac 1 2 \sum_{i=1}^M (\f_i - \mean)^T \L (\f_i - \mean)  \\
&+&\frac {\kappa}{2}(\mean-\mean_0)^T\L(\mean-\mean_0)+\frac {1}{2}\mathrm{trace}[\nu\S_0 \L ],
\end{eqnarray*}
} over the set $\R^{nM} \times \R^n \times S_n^{++}(\R)$, with  $S_n^{++}(\R)$ the set of real symmetric positive definite matrices of size $n$.

The function $f$ is biconvex respectively in the variables $(\{\f_i\},\mean)$ and $\mathbf \L$. To minimize this energy for a given set of hyper-parameters $(\mean_0,\L_0)$, we will use an alternating convex minimization scheme. At each iteration, $f$ is first minimized with respect to $(\{\f_i\},\mean)$ with $\L$ fixed, then viceversa.

Differentiating $f$ with respect to each variable, we get explicit optimality equations for the minimization scheme. The proofs of the following propositions are straightforward and available in the supplementary material. 
\begin{proposition}
Assume that $\L$ is fixed and that the covariance $\S_{N_i}$ does not depend on the $\{\f_i\}$. The function  $(\{{\f_i}\},\mean) \mapsto f(\{{\f_i}\},\mean,\L)$ is convex on $\R^{n(M+1)}$ and attains its minimum at $(\{\hat{\f_i}\},\hat{\mean})$, given by
\begin{align}
 \widehat{\mean} &=&  \left(\kappa \mathrm{Id} + \sum_{i=1}^M  \mathbf{A}_i\U_i \right)^{-1} \left(\sum_{i=1}^M \mathbf{A}_i \y_i +\kappa \mean_0 \right). 
\label{eq:muhat}
\end{align}
\begin{align}
\widehat{\f_i}  &=& \mathbf{A}_i (\y_i - \U_i \hat{\mean}) + \hat{\mean}, \;\;\;\;\;\;\forall i \in \{1,\dots M\}
\label{eq:Ci1}
\end{align}
with $\mathbf{A}_i = \L^{-1} \U_i^T( \U_i \L^{-1} \U_i^T + \S_{N_i})^{-1}$.
\end{proposition}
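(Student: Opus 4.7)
The plan is to verify convexity and then derive the stated closed forms by writing down the first-order optimality conditions, using a matrix inversion identity to bring the expression for $\widehat{\f_i}$ into the form involving $\mathbf{A}_i$, and finally eliminating the patches to obtain a linear system for $\widehat{\mean}$.

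First, for the convexity claim, I would observe that each summand in $f$ (viewed as a function of $(\{\f_i\},\mean)$ with $\L$ fixed and symmetric positive definite) is a quadratic form with a positive semidefinite Hessian: the data-fidelity term $(\y_i - \U_i \f_i)^T \S_{N_i}^{-1} (\y_i - \U_i \f_i)$ has Hessian $\U_i^T \S_{N_i}^{-1}\U_i \succeq 0$ in $\f_i$, the coupling term $(\f_i - \mean)^T \L (\f_i - \mean)$ is PSD jointly in $(\f_i,\mean)$, and the hyperprior term $(\mean-\mean_0)^T \L (\mean-\mean_0)$ is PD in $\mean$ because $\L \succ 0$. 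Hence $f$ is convex in $(\{\f_i\},\mean)$, and I would note that the $\f_i$-block Hessian $\U_i^T \S_{N_i}^{-1}\U_i + \L$ and the $\mean$-block Hessian $(M+\kappa)\L$ are both strictly positive definite, which will give existence and uniqueness of the minimizer.

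Next, I would compute the first-order conditions. Setting $\nabla_{\f_i} f = 0$ yields
\begin{equation*}
(\U_i^T \S_{N_i}^{-1}\U_i + \L)\,\f_i \;=\; \U_i^T \S_{N_i}^{-1}\y_i + \L\mean,
\end{equation*}
from which $\f_i - \mean = (\U_i^T \S_{N_i}^{-1}\U_i + \L)^{-1}\U_i^T \S_{N_i}^{-1}(\y_i - \U_i\mean)$. The key algebraic step is applying the push-through (matrix inversion) identity
\begin{equation*}
(\L + \U_i^T \S_{N_i}^{-1}\U_i)^{-1}\U_i^T \S_{N_i}^{-1} \;=\; \L^{-1}\U_i^T(\U_i \L^{-1}\U_i^T + \S_{N_i})^{-1},
\end{equation*}
which transforms the right-hand side into $\mathbf{A}_i(\y_i - \U_i\mean)$, yielding the formula \eqref{eq:Ci1}. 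I expect this identity to be the main (yet routine) obstacle, because one must check that $\S_{N_i}$ is invertible (it is diagonal with positive diagonal in the relevant setting) and that $\L$ is invertible (guaranteed by $\L \in S_n^{++}(\R)$).

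Finally, $\nabla_\mean f = 0$ gives $-\L\sum_{i=1}^M(\f_i - \mean) + \kappa\L(\mean - \mean_0) = 0$, and cancelling $\L$ leads to
\begin{equation*}
(M+\kappa)\mean \;=\; \sum_{i=1}^M \f_i + \kappa\mean_0.
\end{equation*}
I would then substitute $\f_i = \mathbf{A}_i(\y_i - \U_i\mean) + \mean$ from the previous step into this equation. After collecting the terms containing $\mean$ on the left-hand side, the $M\mean$ contributions cancel and one is left with
\begin{equation*}
\Bigl(\kappa\,\Id + \sum_{i=1}^M \mathbf{A}_i \U_i\Bigr)\mean \;=\; \sum_{i=1}^M \mathbf{A}_i \y_i + \kappa \mean_0,
\end{equation*}
which is exactly \eqref{eq:muhat}. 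Invertibility of $\kappa\,\Id + \sum_i \mathbf{A}_i \U_i$ follows from the strict convexity argument at the start (it is the Schur complement of the $\mean$-block after eliminating the $\f_i$ variables), closing the proof.
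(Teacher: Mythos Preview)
Your proposal is correct and follows the standard route the paper alludes to (the authors call the proofs ``straightforward'' and relegate them to the supplementary material): establish convexity from the positive-semidefinite quadratic terms, write the first-order conditions in $\f_i$ and $\mean$, apply the push-through identity to recast $(\U_i^T\S_{N_i}^{-1}\U_i+\L)^{-1}\U_i^T\S_{N_i}^{-1}$ as $\mathbf{A}_i$, and back-substitute into the $\mean$-equation. Your justification of the invertibility of $\kappa\,\Id+\sum_i\mathbf{A}_i\U_i$ via strict convexity is also sound; concretely, $\L\bigl(\kappa\,\Id+\sum_i\mathbf{A}_i\U_i\bigr)$ equals the Schur complement $(M+\kappa)\L-\sum_i\L(\U_i^T\S_{N_i}^{-1}\U_i+\L)^{-1}\L$, which is positive definite.
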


\begin{proposition}
Assume that  the variables $(\{{\f_i}\},\mean)$ are fixed. The function $\L \rightarrow f(\{{\f_i}\},\mean,\L)$ is convex on $S_n^{++}(\R)$ and attains its minimum at  $\hat{\L}$ such that  
{\small \begin{equation}
\hat{\L}^{-1} = \frac{\nu \S_0 + \kappa(\mean - \mean_0) (\mean - \mean_0)^T + \sum_{i=1}^M (\f_i - \mean) (\f_i - \mean)^T}{\nu +M-n}.
\label{eq:Lhat}
\end{equation}}
\end{proposition}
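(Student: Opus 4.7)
The plan is to restrict $f$ to its $\L$-dependent part, rewrite all quadratic forms as traces so that the problem collapses to a single well-studied scalar function on $S_n^{++}(\R)$, and then read off convexity and the minimizer from standard matrix calculus.

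First I would isolate the terms that actually depend on $\L$. Once $(\{\f_i\},\mean)$ are fixed, the noise-fit term $\frac{1}{2}(\y_i-\U_i\f_i)^T\S_{N_i}^{-1}(\y_i-\U_i\f_i)$ drops out, and the four remaining terms give
\[
g(\L) \;=\; -\tfrac{\nu+M-n}{2}\log|\L| \;+\; \tfrac{1}{2}\operatorname{tr}(\mathbf{S}\,\L),
\]
where, using the identity $x^T \L x = \operatorname{tr}(\L\, x x^T)$ on each quadratic form and adding the $\nu\S_0\L$ trace that is already present,
\[
\mathbf{S} \;=\; \nu\S_0 + \kappa(\mean-\mean_0)(\mean-\mean_0)^T + \sum_{i=1}^M (\f_i-\mean)(\f_i-\mean)^T.
\]
Note $\mathbf{S}$ is symmetric and, as soon as $\S_0 \succ 0$ and $\nu>0$, strictly positive definite (the remaining summands are positive semidefinite).

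Second I would establish convexity. The function $\L\mapsto -\log|\L|$ is a standard strictly convex function on the open cone $S_n^{++}(\R)$, and $\L\mapsto \operatorname{tr}(\mathbf{S}\L)$ is linear. Since $\nu+M-n>0$ under the stated hypothesis $\nu>n-1$, the coefficient in front of $-\log|\L|$ is positive, so $g$ is strictly convex on $S_n^{++}(\R)$; any critical point is therefore the unique global minimizer.

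Third, I would compute $\nabla_\L g$ using the classical identities $\nabla_\L\log|\L| = \L^{-1}$ and $\nabla_\L\operatorname{tr}(\mathbf{S}\L) = \mathbf{S}$ (valid because $\mathbf{S}$ is symmetric). Setting the gradient to zero yields
\[
-\tfrac{\nu+M-n}{2}\,\L^{-1} + \tfrac{1}{2}\,\mathbf{S} \;=\; 0,
\]
which rearranges directly to the stated formula for $\hat{\L}^{-1}$. Finally, I would verify admissibility: since $\mathbf{S}\succ 0$, the candidate $\hat{\L}=(\nu+M-n)\,\mathbf{S}^{-1}$ lies in $S_n^{++}(\R)$, so it is an interior critical point and hence the minimizer. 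The only real bookkeeping obstacle is making sure the $-n$ in the exponent from the Wishart, the $+\tfrac{1}{2}\log|\L|$ from the Normal part of the hyperprior, and the $+\tfrac{M}{2}\log|\L|$ from the prior on the $\f_i$ combine correctly to $\tfrac{\nu+M-n}{2}$; everything else is routine.
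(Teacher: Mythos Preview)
Your proposal is correct and follows essentially the same approach the paper indicates: the paper simply states that the optimality conditions are obtained by differentiating $f$ with respect to each variable, calling the proof ``straightforward'' and deferring details to the supplementary material. Your trace reformulation, appeal to the convexity of $-\log|\L|$, and first-order condition are exactly the standard computation one expects there, and your checks on the sign of $\nu+M-n$ and on $\mathbf{S}\succ 0$ ensuring $\hat{\L}\in S_n^{++}(\R)$ are the right hygiene steps.
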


The expression of $\widehat{\f}_i$ in~\eqref{eq:Ci1} is obtained under the hypothesis that the noise covariance matrix $\mathbf{\Sigma}_{N_i}$ does not depend on $\f_i$. Under the somewhat weaker hypothesis that the noise $N_i$ and the signal $\f_i$ are uncorrelated, this estimator is also the affine estimator $\tilde{\f}_i$ that minimizes the Bayes risk $\E[(\tilde{\f}_i - \f_i)^2]$ (c.f. supplementary material). 
The uncorrelatedness of $N_i$ and $C_i$ is a reasonable hypothesis in practice. This includes various noise models, such as $N_i = f(C_i) \varepsilon_i$ with $\varepsilon_i$ independent of $C_i$, which approximates CMOS and CCD raw data noise~\cite{aguerrebere12}.

From~\eqref{eq:muhat}, we find that the MAP estimator of $\mean$ is a weighted average of two terms: the mean estimated from the similar restored patches and the prior $\mean_0$. The parameter $\k$ controls the confidence level we have on the prior $\mean_0$. With the same idea, we observe that the MAP estimator for $\L$ is a combination of the prior $\L_0$ on $\L$, the covariance imposed by ${\mean}$ and the covariance matrix estimated from the patches ${\f}_i$.

\subsection{Alternating convex minimization of $f$}
The previous propositions imply that we can derive an explicit alternating convex minimization scheme for $f$, presented in Algorithm~\ref{algo:alternate}. Starting with a given value of $\L$, at each step, $\mean^l$ and $\f^l$ are computed according to Equations~\eqref{eq:muhat} and ~\eqref{eq:Ci1}, then $\L^l$ is updated according to~\eqref{eq:Lhat}. 
\begin{algorithm}
\KwIn{$\Z$, $\U$, $\mean_0, \S_0, \kappa, \nu$}
\KwOut{ $\{\hat{\f_i}\}$,$\hat{\mean}$, $\hat{\L}$, } 
\textbf{Initialization:} Set $\L^0=\S_0^{-1}$\\
\For{l = 1 to $\text{maxIts}$}{
Compute $({\f^{l}},{\mean}^{l}) = \argmin_{(\f,\mean)} f(\f,\mean,\L^{l-1})$ by equations~\eqref{eq:muhat} and~\eqref{eq:Ci1}
 
Compute ${\L}^{l} = \argmin_{\L} f({\f}^l,\mean^l,\L)$ by Eq.~\eqref{eq:Lhat}}
$\{\hat{\f_i} = \f_i^{\text{maxIts}}\}$, $\hat{\mean} = {\mean}^{\text{maxIts}}$, $\hat{\L} = \L^{\text{maxIts}}$
\caption{Alternating convex minimization for $f$}
\label{algo:alternate}
\end{algorithm}
We show in Appendix~\ref{ap:mapEst} the following convergence result for the previous algorithm. The proof adapts the arguments in~\cite{Gorski2007} to our particular case.
\begin{proposition}
 The sequence $f( \{\f_i^l\},\mean^l,\L^l )$ converges monotonically when $l\rightarrow +\infty$.  The sequence $\{\{\f_i^l\},\mean^l,\L^l\}$ generated by the alternate minimization scheme has at least one accumulation point. The set of its accumulation points forms a connected and compact set of partial optima and stationary points of $f$, and they all have the same function value. 
\end{proposition}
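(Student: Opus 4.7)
The plan is to follow Gorski et al.'s framework for alternate convex minimization of biconvex objectives~\cite{Gorski2007}. Three ingredients must be checked: (i) $f$ is bounded below and has compact sublevel sets on $\R^{nM}\times\R^{n}\times S_{n}^{++}(\R)$; (ii) each partial minimization has a unique minimizer, so that the iteration is well defined and produces a monotonically decreasing sequence $f(\f^l,\mean^l,\L^l)$; (iii) the abstract result of~\cite{Gorski2007} can then be invoked to conclude connectedness, compactness, partial optimality and stationarity for the accumulation-point set.

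I expect step (i) to be the main difficulty, because $S_n^{++}(\R)$ is not closed and the $\L$-dependent part of $f$ contains the competing terms $-\tfrac{\nu-n+M}{2}\log|\L|$ and $\tfrac12\text{tr}(\nu\S_0\L)$. The coefficient $\nu-n+M$ is strictly positive since $\nu>n-1$ and $M\ge 1$, and since $\S_0\succ 0$ the trace term is coercive in the eigenvalues of $\L$. The classical inequality $\inf_{\L\succ 0}\{-\alpha\log|\L|+\tfrac12\text{tr}(A\L)\}>-\infty$ for $\alpha>0$ and $A\succ 0$ produces a global lower bound for $f$ and, more importantly, forces the eigenvalues of $\L$ into a compact interval $[m,M']\subset(0,+\infty)$ on any sublevel set $\{f\le \alpha\}$. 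With $\L$ thus bounded away from $0$ and from $\infty$, the term $\tfrac{\k}{2}(\mean-\mean_0)^T\L(\mean-\mean_0)$ combined with $\k>0$ bounds $\mean$, and then $\tfrac12\sum_i(\f_i-\mean)^T\L(\f_i-\mean)$ bounds each $\f_i$. Together with the continuity of $f$, this yields compact sublevel sets.

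For (ii), Propositions~1 and~2 show that each block problem is strictly convex (strict convexity of the $(\f,\mean)$-problem comes from $\L\succ 0$ and $\k>0$, that of the $\L$-problem from the $-\log|\L|$ term) with the explicit unique minimizers~\eqref{eq:muhat}--\eqref{eq:Lhat}; hence $f$ is non-increasing along the iteration and, by (i), bounded below, so the sequence $f(\f^l,\mean^l,\L^l)$ converges. Compactness of the trajectory established in (i) then guarantees the existence of at least one accumulation point. For (iii), I would transpose the alternate-convex-search theorem of~\cite{Gorski2007}: for a sequence contained in a compact subset of the (open) domain, with $f$ of class $C^1$, strictly biconvex and with uniquely solvable block problems, the set of accumulation points is a connected, compact set of partial optima sharing the common limit value of $f$. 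Because $f$ is differentiable on the open set $\R^{nM}\times\R^n\times S_n^{++}(\R)$, ``partial optimum'' automatically upgrades to ``stationary point of $f$'', since canceling $\nabla_{(\f,\mean)}f$ and $\nabla_{\L}f$ in each block cancels the full gradient.
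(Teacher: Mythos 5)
Your proposal is correct and follows essentially the same route as the paper: a coercivity lemma for $f$, monotone decrease and boundedness of the iterates, and then the alternate-convex-search analysis of Gorski et al.\ for the structure of the accumulation set (which the paper re-derives explicitly via the continuity of the update map $G$, a fixed-point argument, and Ostrowski's connectedness lemma, rather than citing the theorem as a black box). Your observation that sublevel sets stay away from the boundary of $S_n^{++}(\R)$ thanks to the $-\log|\L|$ term is a point the paper's coercivity lemma leaves implicit, and is worth making.
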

In practice, we observe in our experiments that the algorithm always converges after a few iterations. 

\subsection{Full restoration  algorithm}
\label{ssec:pasosAlgo}
The full restoration algorithm used in our experiments is summarized in Algorithm~\ref{algo:newMethod} and  illustrated by Figure~\ref{fig:methodDiagram}. It alternates between two stages: the minimization of $f$ using Algorithm~\ref{algo:alternate}, and the estimation of the hyper-parameters $\mean_0, \S_0$. In order to estimate these parameters, we rely on an \textit{oracle image} computed by aggregation of all the patches estimated on the first stage (details are provided in Section~\ref{ssec:initialization}).
\begin{algorithm}
\KwIn{$\Z$, $\U$, $\mean_0, \S_0, \kappa, \nu$ (see details in Section~\ref{ssec:paramsAnalisis})}
\KwOut{$\tilde{\f}$}
Decompose $\Z$ and $\U$ into overlapping patches. \\
\textbf{Initialization:} Compute first oracle image $\f_\mathrm{oracle}$ (see details in Section~\ref{ssec:initialization})\\
\For{it = 1 to $\text{maxIts}_2$}{
\For{all patches not yet restored}{
Find patches similar ($L^2$ distance) to the current $\z_i$ in $\f_\mathrm{oracle}$ (see details in Section~\ref{sec:implDetailsSearch}).\\
Compute $\mean_0$ and $\S_0$ from $\f_\mathrm{oracle}$ (see details in Section~\ref{ssec:paramsAnalisis}).\\
Compute $(\{\hat{\f}_i\},\hat{\mean},\hat{\S})$ following Algorithm~\ref{algo:alternate}.\\
}
Perform aggregation to restore the image. \\
Set $\f_\mathrm{oracle} = \tilde{\f}$.\\
}
\caption{HBE algorithm.}
\label{algo:newMethod}
\end{algorithm}

\section{Implementation details}
\label{sec:implDetails}
\subsection{Search for similar patches} 
\label{sec:implDetailsSearch}
The similar patches are all patches within a search window centered at the current patch,
whose $L_2$ distance to the central patch is less than a given threshold. This threshold is given by a tolerance parameter $\varepsilon$ times the distance to the nearest neighbour (the most similar one). In all our experiments, the search window was set to size $25 \times 25$ (with a patch size of $8\times 8$) and $\varepsilon=1.5$. The patch comparison is performed in an oracle image (i.e. the result of the previous iteration), so all pixels are known. However, it may be useful to assign different confidence levels to the known pixels and to those originally missing and then restored. For all the experimental results presented in Section~\ref{ssec:expsNewMethod}, the distance between patches $\c_p$ and $\c_q$ in the oracle image $\f_\mathrm{oracle}$ is computed as
\begin{equation}
d(p,q) = \frac{\sum_{j=1}^N (\c^j_p - \c^j_q)^2 \omega^j_{p,q}}{\sum_{j=1}^N \omega^j_{p,q}},
\end{equation}  
where $j$ indexes the pixels in the patch, $\omega^j_{p,q}=1$ if $\U^j_p = \U^j_q = 1$ (known pixel) and $\omega^j_{p,q}=0.01$ otherwise (originally missing then restored pixel)~\cite{arias12}. With this formulation, known pixels are assigned a much higher priority than unknown ones. Variations on these weights could be explored.

\subsection{Optional speed-up by Collaborative Filtering}
\label{ssec:colabFilt}
The proposed method computes one Gaussian model per image patch according to Equations~\eqref{eq:muhat} and~\eqref{eq:Lhat}. In order to reduce the computational cost, we can rely on the collaborative filtering idea previously introduced for patch-based denoising techniques~\cite{lebrun13,dabov07}. Based on the hypothesis that similar patches share the same model, we assign the same model to all patches in the set of similar patches (as defined in Section~\ref{sec:implDetailsSearch}). 

The number of similar patches jointly restored depends on the image and the tolerance parameter $\varepsilon$, but it is often much smaller than what would result from the patch clustering performed by methods that use global GMMs such as PLE or EPLL. Performance degradation is observed in practice when using a very large tolerance parameter ($\varepsilon>3$), showing that mixing more patches than needed is detrimental. The collaborative filtering strategy helps accelerating the algorithm up to a certain point, but a trade-off with performance needs to be considered. 
\subsection{Parameter choices}
\label{ssec:paramsAnalisis}
The four parameters of the Normal-Wishart distribution: $\k$, $\nu$, the prior mean $\mean_0$ and the prior covariance matrix $\S_0$, must be set in order to compute $\mean$ and $\S$. 
\paragraph{Choice of $\k$ and $\nu$}
The computation of $\mean$ according to~\eqref{eq:muhat} combines the mean $\sum_{i=1}^M \mathbf{A}_i \y_i $ estimated from the similar patches and the prior mean $\mean_0$. The parameter $\k$ is related to the degree of confidence we have on the prior $\mean_0$. Hence, its value should be a trade-off between the confidence we have in the prior accuracy vs. the one we have in the information provided by the similar patches.
The latter improves when both $M$ (\emph{i.e.} the number of similar patches) and $P=\operatorname{trace}(\U_i)$ (\emph{i.e.} the number of known pixels in the current patch) increase. These intuitive insights suggest the following rule to set $\kappa$:
\begin{equation}
\kappa = M\alpha,  \quad \alpha = \left\{
\begin{array}{rl}
\alpha_L & \text{if $P$ and $M$} > \text{threshold} \\
\alpha_H & \text{otherwise}.
\end{array} \right.
\end{equation}
A similar reasoning leads to the same rule for $\nu$,
\begin{equation}
\nu = M\alpha + n
\end{equation}
where the addition of $n$ ensures the condition $\nu > n - 1$ required by the Normal-Wishart prior to be verified.

This rule is used to obtain the experimental results presented in Section~\ref{ssec:expsNewMethod}, and proved to be a consistently good choice despite its simplicity. However, setting these parameters in a more general setting is not a trivial task and should be the subject of further study. In particular we could explore a more continuous dependence of $\alpha$ on $P$, $M$, and possibly a third term $Q=\sum_{i=1}^n S_{ii}$ where $S = \sum_{j=1}^M \L^{-1} \U_j \L^{*}_j \U_j$. This term estimates to what an extent similar patches cover the missing pixels in the current patch. 

\paragraph{Setting of $\mean_0$ and $\S_0$} Assuming an oracle image $\f_\mathrm{oracle}$ is available (see details in Section~\ref{ssec:pasosAlgo}), $\mean_0$ and $\S_0$ can be computed using the classical MLE estimators from a set of similar patches $(\tilde{\c}_1,\dots,\tilde{\c}_M)$ taken from $\f_\mathrm{oracle}$
\begin{equation}
\mean_0 = \frac{1}{M} \sum_{j=1}^{M} \tilde{\c}_j, \quad \S_0 = \frac{1}{M-1} \sum_{j=1}^{M} (\tilde{\c}_j - \mean_0)(\tilde{\c}_j - \mean_0)^T. 
\label{eq:musec}
\end{equation}
This is the same approach followed by Lebrun et al.~\cite{lebrun13} to estimate the patch model parameters in the case of denoising. 

\subsection{Initialization}
\label{ssec:initialization}
A good initialization is crucial since we aim at solving a non-convex problem through an iterative procedure. To initialize the proposed algorithm we follow the approach proposed by Yu et al.~\cite{yu12} (described in detail in Appendix A in the supplementary material). They propose to initialize the PLE algorithm by learning the $K$ GMM covariance matrices from synthetic images of edges with different orientations as well as from the DCT basis to represent isotropic patterns. As they state, in dictionary learning, the most prominent atoms represent local edges which are useful to represent and restore contours. Hence, this initialization helps to correctly restore corrupted patches even in quite extreme cases. The oracle of the first iteration of the proposed approach is the output of the first iteration of the PLE algorithm.

\subsection{Computational complexity}
With the original per-patch strategy, the complexity of the algorithm is given by step 3 in Algorithm 1: $[(4n^3 + n^3/3)M + n^3/3]$, so the total complexity is $[(4n^3 + n^3/3)M + n^3/3] \times maxIts \times maxIts_2 \times T$ (where $T =$ total number of patches to be restored and assuming the Cholesky factorization is used for matrix inversion). The collaborative filtering strategy reduces this value by a factor that depends on the number of groups of similar patches, which depends on the image contents and the distance tolerance parameter $\varepsilon$. The main difference with the PLE algorithm complexity ($(3n^3 + n^3/3) \times its_{PLE} \times T$) is a factor given by the number of groups defined by the collaborative filtering approach and the ratio between $its_{PLE}$ and $maxIts \times maxIts_2$. As mentioned by Yu et al.~\cite{yu12}, computational complexity can be further reduced in the case of binary masks by removing the zero rows and inverting a matrix of size $n^2/S \times n^2/S$ instead of $n^2 \times n^2$ where $S$ is the masking ratio. Moreover, the proposed algorithm can be run in parallel in different image subregions thus allowing for even further acceleration in multiple-core architectures. 
The complexity comparison to NLB needs to be made in the case where the degradation is additive noise with constant variance (translation invariant degradation), which is the task performed by NLB. In that case, the complexity of the proposed approach (without considering collaborative filtering nor parallelization, which are both done also in NLB), is $11n^3/3 \times maxIts \times maxIts_2 \times T$ whereas that of NLB is $2 \times (4n^3/3)$. 

\section{Image Restoration Experiments}
\label{ssec:expsNewMethod}
In this section we illustrate the ability of the proposed method to solve several image inverse problems. Both synthetic (i.e., where we have added the degradation artificially) and real data (i.e., issued from a real acquisition process) are used. The considered problems are: interpolation, combined interpolation and denoising, denoising, and zooming. The reported values of peak signal-to-noise ratio ($\psnr{} = 20\log_{10}(255/\sqrt{MSE})$) are averaged over 10 realizations for each experiment (variance is below 0.1 for interpolation and below 0.05 for combined interpolation and denoising and for denoising only). Similar results are obtained with the structural similarity index (SSIM) which is included in the supplementary material (Appendix B).
\begin{table*}
\footnotesize
\setlength{\tabcolsep}{2.2pt}
\centering
\begin{tabular}[h]{c c c c c c c c c c c c c c c c c c c c c}
\toprule
  & \multicolumn{12}{c}{\textbf{Interpolation - PSNR (dB)}} & \multicolumn{8}{|c}{\textbf{Interpolation \& Denoising - PSNR (dB)}} \\
  \cmidrule{1-21}
 & \multicolumn{4}{c}{50\%} & \multicolumn{4}{c}{70\%} & \multicolumn{4}{c|}{90\%} & \multicolumn{4}{c}{70\%} & \multicolumn{4}{c}{90\%}\\
\cmidrule{2-21}
   & \hpnlb{} & PLE & EPLL & \multicolumn{1}{c|}{E-PLE} & \hpnlb{} & PLE & EPLL & \multicolumn{1}{c|}{E-PLE} & \hpnlb{} & PLE & EPLL & \multicolumn{1}{c|}{E-PLE} & \hpnlb{} & PLE & EPLL & \multicolumn{1}{c|}{E-PLE} & \hpnlb{} & PLE & EPLL & E-PLE\\
\cmidrule{2-21}
barbara   & \textbf{39.11}	 & 	36.93	 & 	32.99	 & 	\multicolumn{1}{c|}{35.43}	 & 	\textbf{34.69}	 & 	32.50	 & 	27.96	 & 	\multicolumn{1}{c|}{28.77} & \textbf{24.86}	& 23.62 & 23.30 & 	\multicolumn{1}{c|}{23.26}	 & \textbf{33.34} & 31.99 & 27.63 & \multicolumn{1}{c|}{27.75} & \textbf{24.57} & 23.53 & 23.27 & 23.20 \\
boat      & \textbf{34.92}	 & 	34.32	 & 	34.21	 & 	\multicolumn{1}{c|}{33.59}	 & 	\textbf{31.37}	 & 	30.74	 & 	30.38	 & 	\multicolumn{1}{c|}{30.26}	& \textbf{25.96} & 25.35 & 24.72 & 	\multicolumn{1}{c|}{25.43}  & \textbf{30.61} & 30.41 & 30.15 & \multicolumn{1}{c|}{29.54} & \textbf{25.78} & 25.45 & 24.71 & 25.47  \\
traffic   & 30.17	 & 	30.12	 & 	\textbf{30.19}	 & 	\multicolumn{1}{c|}{28.86}	 & 	\textbf{27.27}	 & 	27.12	 & 	27.13	 & 	\multicolumn{1}{c|}{26.64} & \textbf{22.84}	& 22.34 & 21.85 & 	\multicolumn{1}{c|}{22.27}	& 26.99 & 26.98 & \textbf{27.05} & \multicolumn{1}{c|}{26.35} & \textbf{22.87}	& 22.43 & 22.21 & 22.35 \\
\toprule
  & \multicolumn{12}{c|}{\textbf{Denoising - PSNR (dB)}} & & \multicolumn{5}{c}{\textbf{Zooming - PSNR (dB)}} \\
  \cmidrule{1-21}
$\sigma^2$ & \multicolumn{3}{c}{10} & \multicolumn{3}{c}{30} & \multicolumn{3}{c}{50} & \multicolumn{3}{c|}{80} & & \multicolumn{5}{c}{$\times 2$}\\
\cmidrule{1-21}
& \hpnlb{} & NLB & \multicolumn{1}{c|}{EPLL} & \hpnlb{} & NLB & \multicolumn{1}{c|}{EPLL} & \hpnlb{} & NLB & \multicolumn{1}{c|}{EPLL} & \hpnlb{} & NLB & \multicolumn{1}{c|}{EPLL} & & \hpnlb{} & PLE & EPLL & E-PLE & Lanczos\\
\cmidrule{2-21} 
barbara  & \textbf{41.26}	 & 	41.20	 & 	\multicolumn{1}{c|}{40.56}	 & 	\textbf{38.40}	 & 	38.26	 & 	\multicolumn{1}{c|}{37.32}	 & 	\textbf{37.13}	 & 	36.94	 & 	\multicolumn{1}{c|}{35.84}	 & 	\textbf{35.96}	 & 	35.73	 & 	\multicolumn{1}{c|}{34.51}	& & \textbf{38.17} &  37.11 & 31.34 & 36.51 & 28.01 & &  \\
boat      & \textbf{40.05}	 & 	39.99	 & 	\multicolumn{1}{c|}{39.47}	 & 	36.71	 & 	\textbf{36.76}	 & 	\multicolumn{1}{c|}{36.34}	 & 	35.41	 & 	\textbf{35.46}	 & 	\multicolumn{1}{c|}{35.13}	 & 	34.30	 & 	\textbf{34.33}	 & 	\multicolumn{1}{c|}{34.12}	& & \textbf{32.35} & 31.96 & 31.95 & 32.08 & 29.60 & &  \\
traffic   & 40.73	 & 	\textbf{40.74}	 & 	\multicolumn{1}{c|}{40.55}	 & 	\textbf{37.03}	 & 	36.99	 & 	\multicolumn{1}{c|}{36.86}	 & 	\textbf{35.32}	 & 	35.26	 & 	\multicolumn{1}{c|}{35.20}	 & 	\textbf{33.78}	 & 	33.70	 & 	\multicolumn{1}{c|}{33.72}	& & 25.05 & 24.78  & \textbf{25.17} & 24.91 & 21.89 & &  \\
\toprule
\end{tabular}
\caption{Results of the interpolation, combined interpolation and denoising, denoising and zooming tests described in Section~\ref{ssec:synthExps}. Patch size of $8 \times 8$ for all methods in all tests. Parameter setting for interpolation, combined interpolation and denoising, and zooming, \hpnlb{}: $\alpha_H = 1$, $\alpha_L = 0.5$, PLE: $\sigma = 3$, $\varepsilon = 30$, $K = 19$~\cite{yu12}, EPLL: default parameters~\cite{zoran11_web}, E-PLE: parameters set as specified in~\cite{wang13b}. Parameter setting for denoising, \hpnlb{}: $\alpha_H = \alpha_L = 100$, NLB: code provided by the authors~\cite{lebrun13IPOL} automatically sets parameters from input $\sigma^2$, EPLL: default parameters for the denoising example~\cite{zoran11_web}}
\label{tab:psnrInterp}
\end{table*}

\subsection{Synthetic degradation}
\label{ssec:synthExps}
\paragraph{Interpolation} Random masks with 50\%, 70\% and 90\% missing pixels are applied to the tested ground-truth images. The interpolation performance of the proposed method is compared to that of PLE~\cite{yu12}, EPLL~\cite{zoran11} and E-PLE~\cite{wang13b} using a patch size of $8 \times 8$ for all methods. PLE parameters are set as indicated in~\cite{yu12} ($\sigma = 3$, $\varepsilon = 30$, $K = 19$). We used the EPLL code provided by the authors~\cite{zoran11_web} with default parameters and the E-PLE code available in~\cite{wang13b} with the parameters set as specified in the companion demo. The parameters for the proposed method are set to $\alpha_H = 1$, $\alpha_L = 0.5$ ($\alpha_H$ and $\alpha_L$ define the values for $\kappa$ and $\nu$, see Section~\ref{ssec:paramsAnalisis}). The \psnr{} results are shown in Table~\ref{tab:psnrInterp}. Figure~\ref{fig:syntheticExpsInterp} shows some extracts of the obtained results, the \psnr{} values for the extracts and the corresponding difference images with respect to the ground-truth. The proposed method gives sharper results than the other considered methods. This is specially noticeable on the reconstruction of the texture of the fabric of Barbara's trousers shown in the first row of Figure~\ref{fig:syntheticExpsInterp} or on the strips that appear through the car's window shown in the second row of the same figure. 
\begin{figure*}
\centering
\subfigure[Ground-truth]{\includegraphics[width=0.17\linewidth]{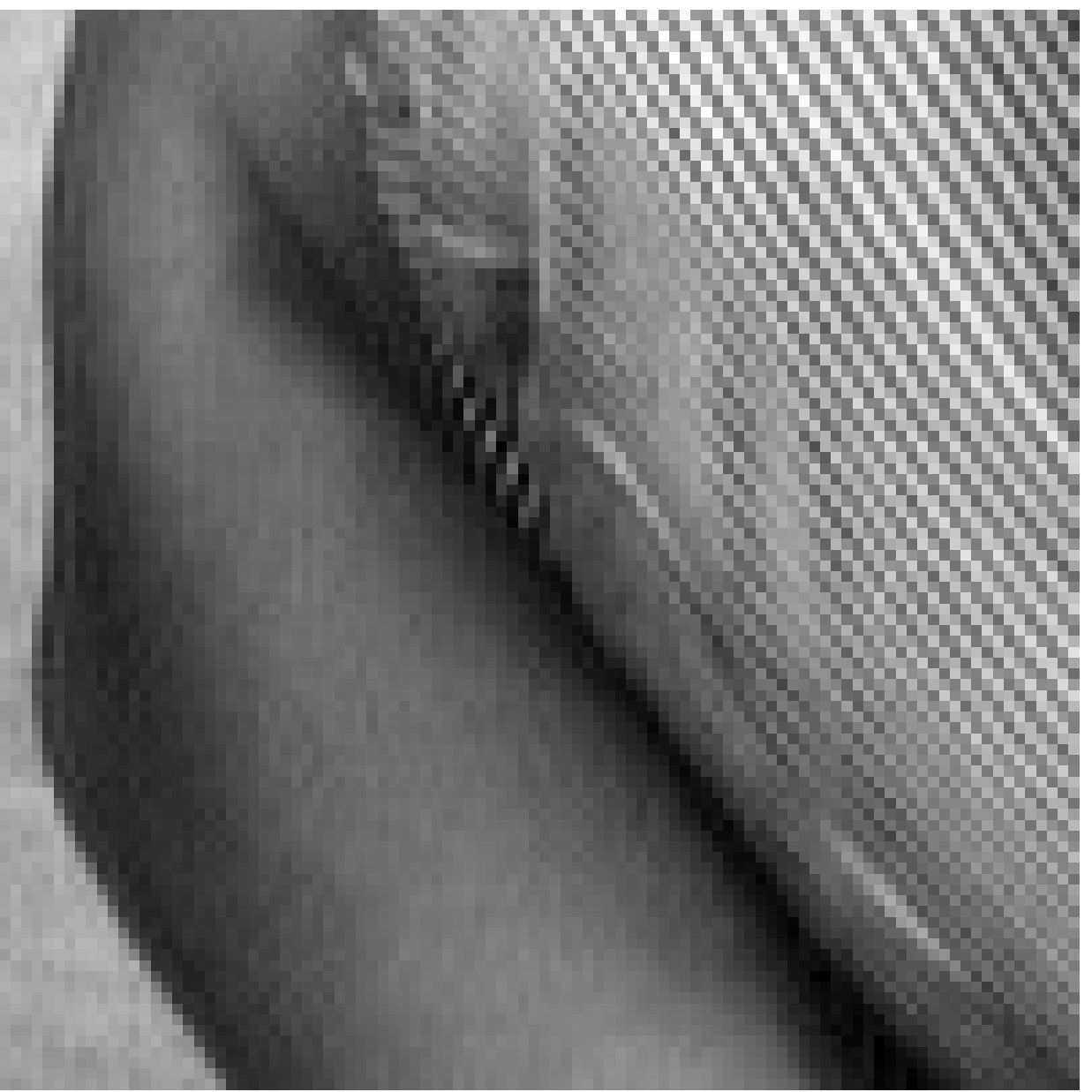}}\subfigure[\hpnlb{} (\textbf{30.01 dB})]{\includegraphics[width=0.17\linewidth]{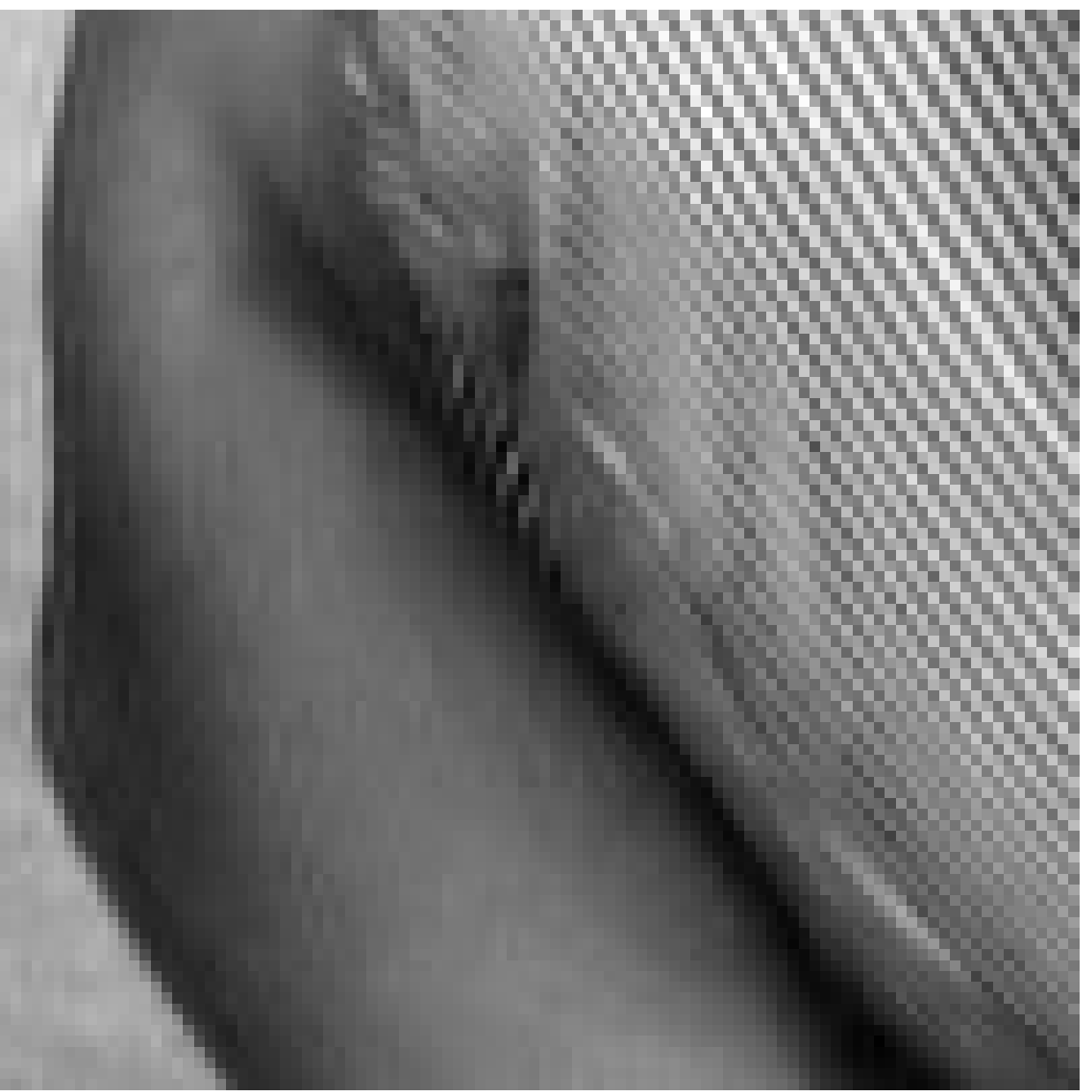}}\subfigure[PLE (26.78 dB)]{\includegraphics[width=0.17\linewidth]{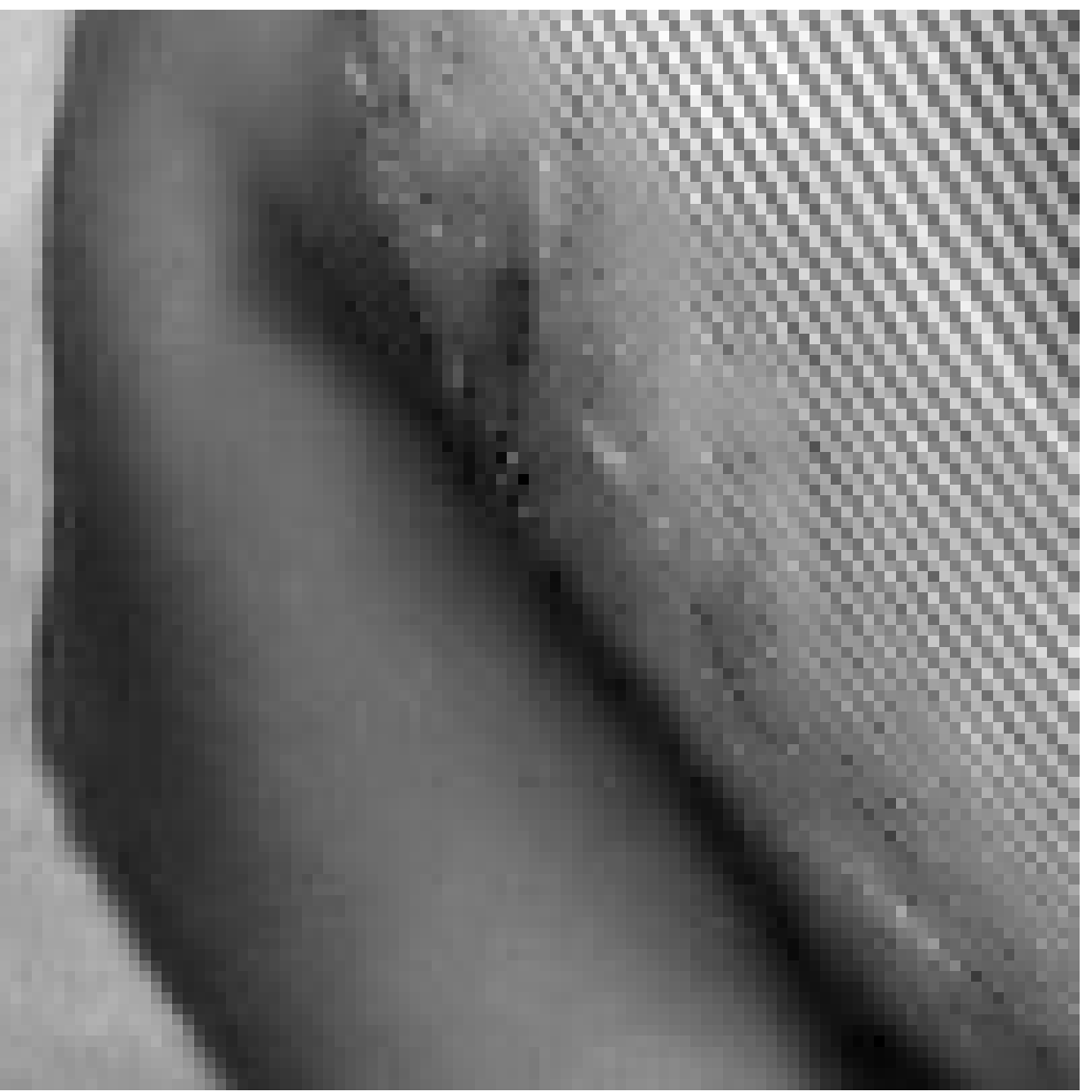}}\subfigure[EPLL (21.12 dB)]{\includegraphics[width=0.17\linewidth]{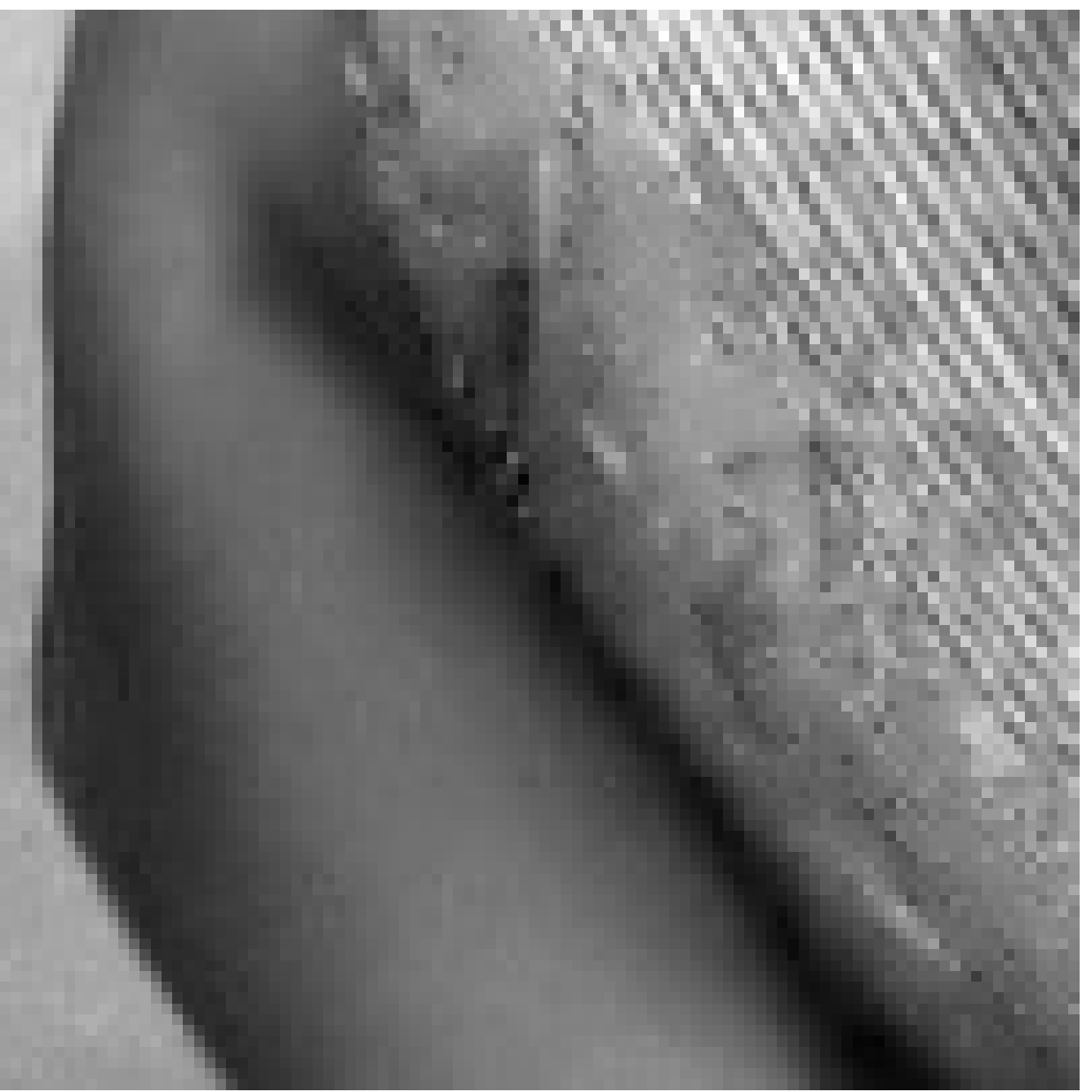}}\subfigure[E-PLE (23.12 dB)]{\includegraphics[width=0.17\linewidth]{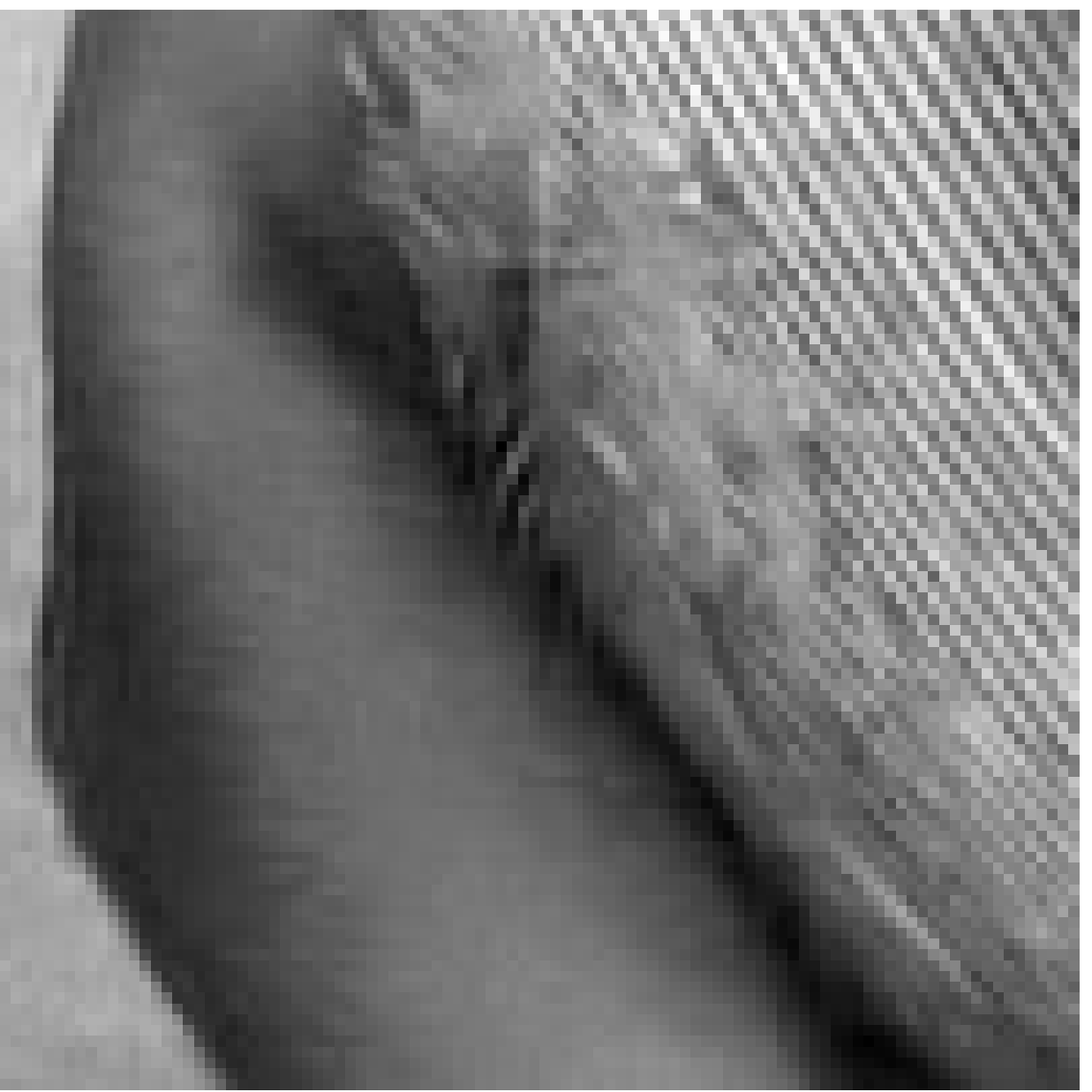}}
\includegraphics[width=0.17\linewidth]{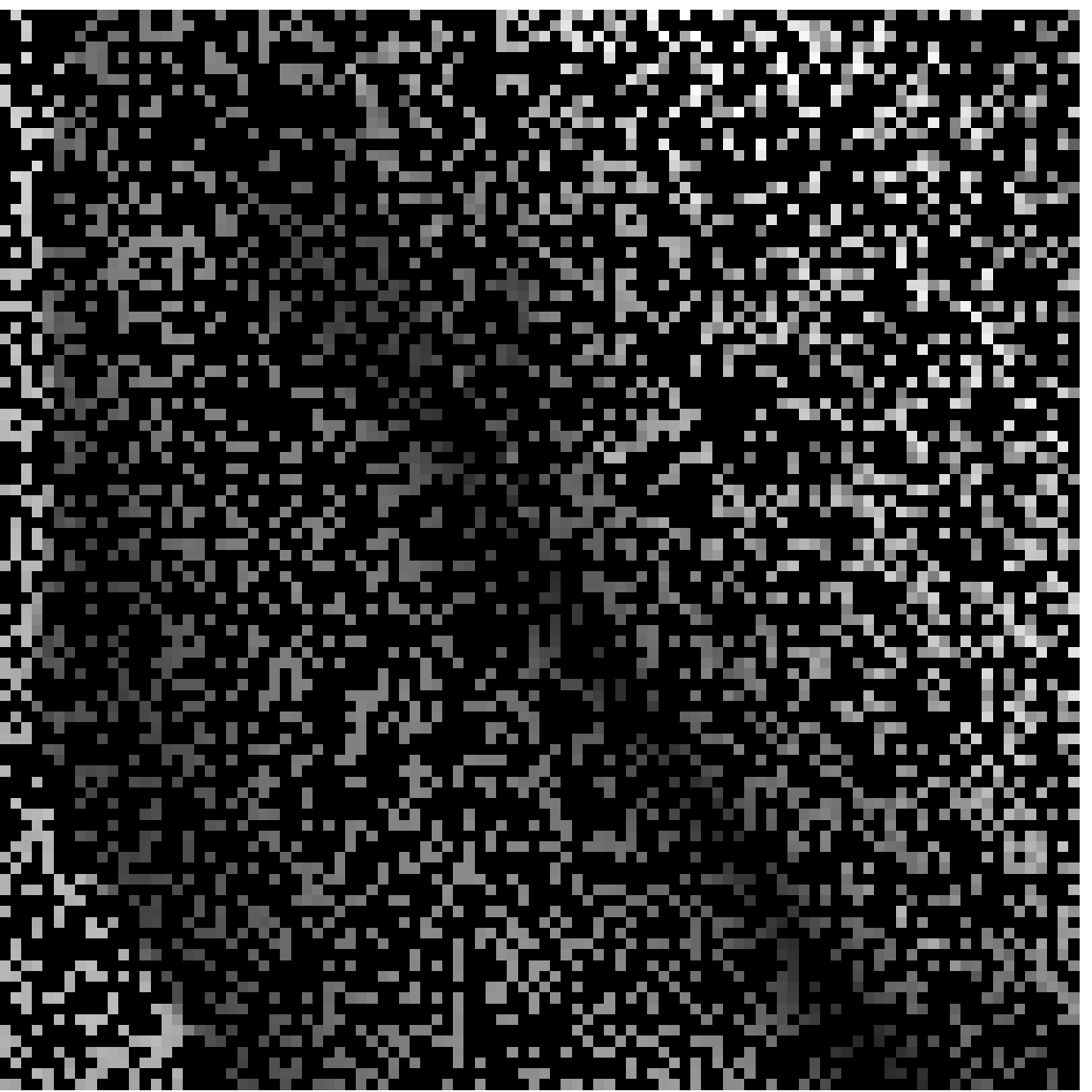}\includegraphics[width=0.17\linewidth]{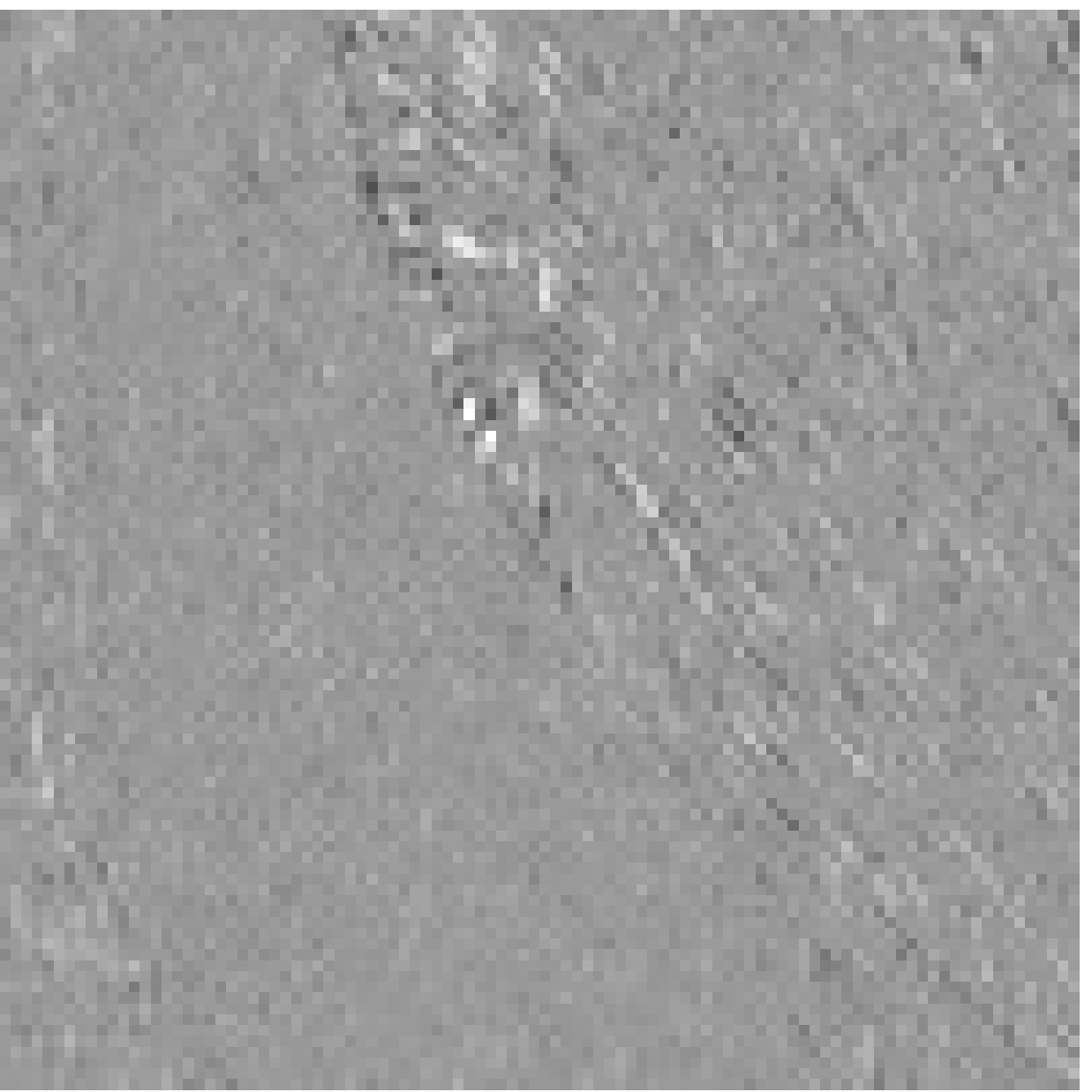}\includegraphics[width=0.17\linewidth]{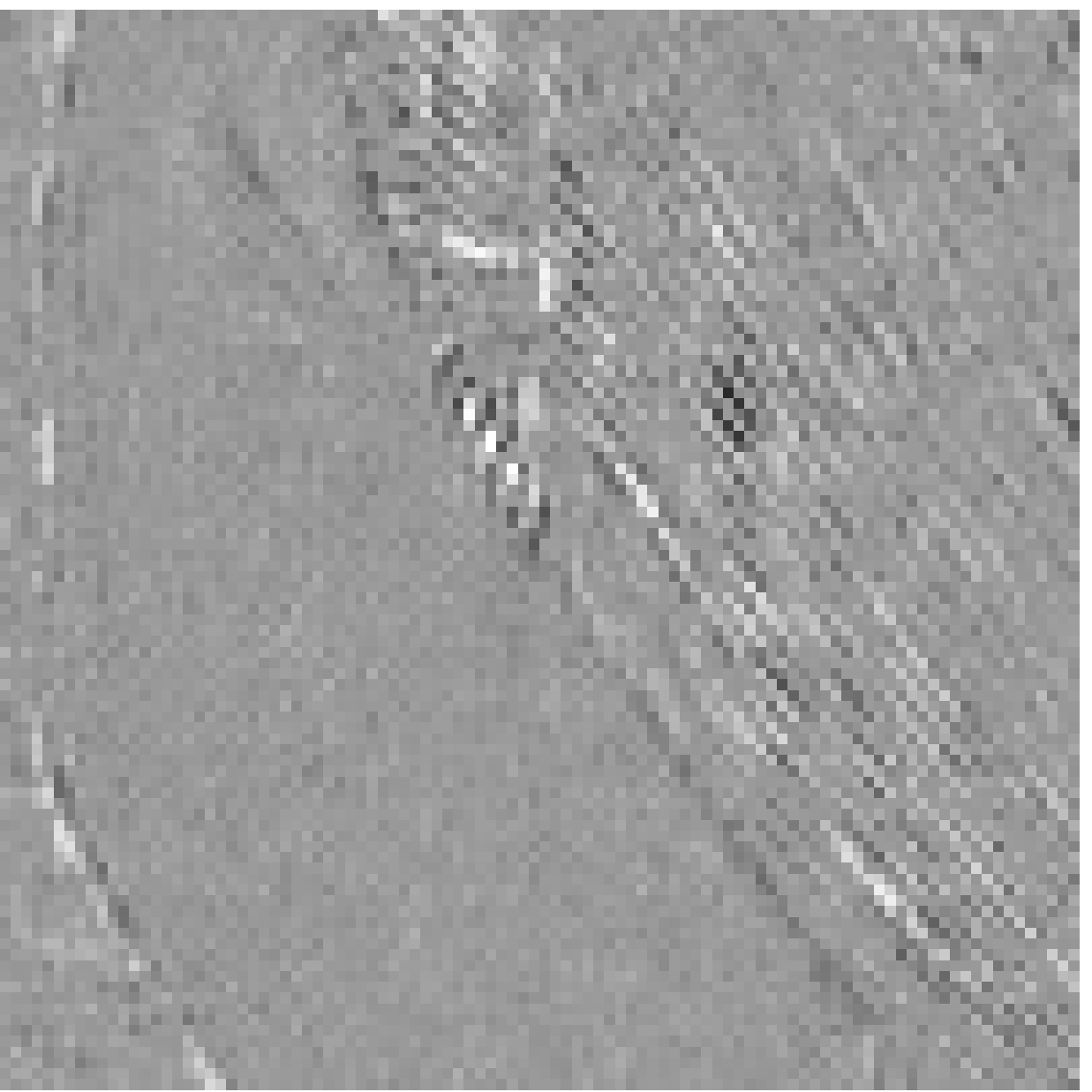}\includegraphics[width=0.17\linewidth]{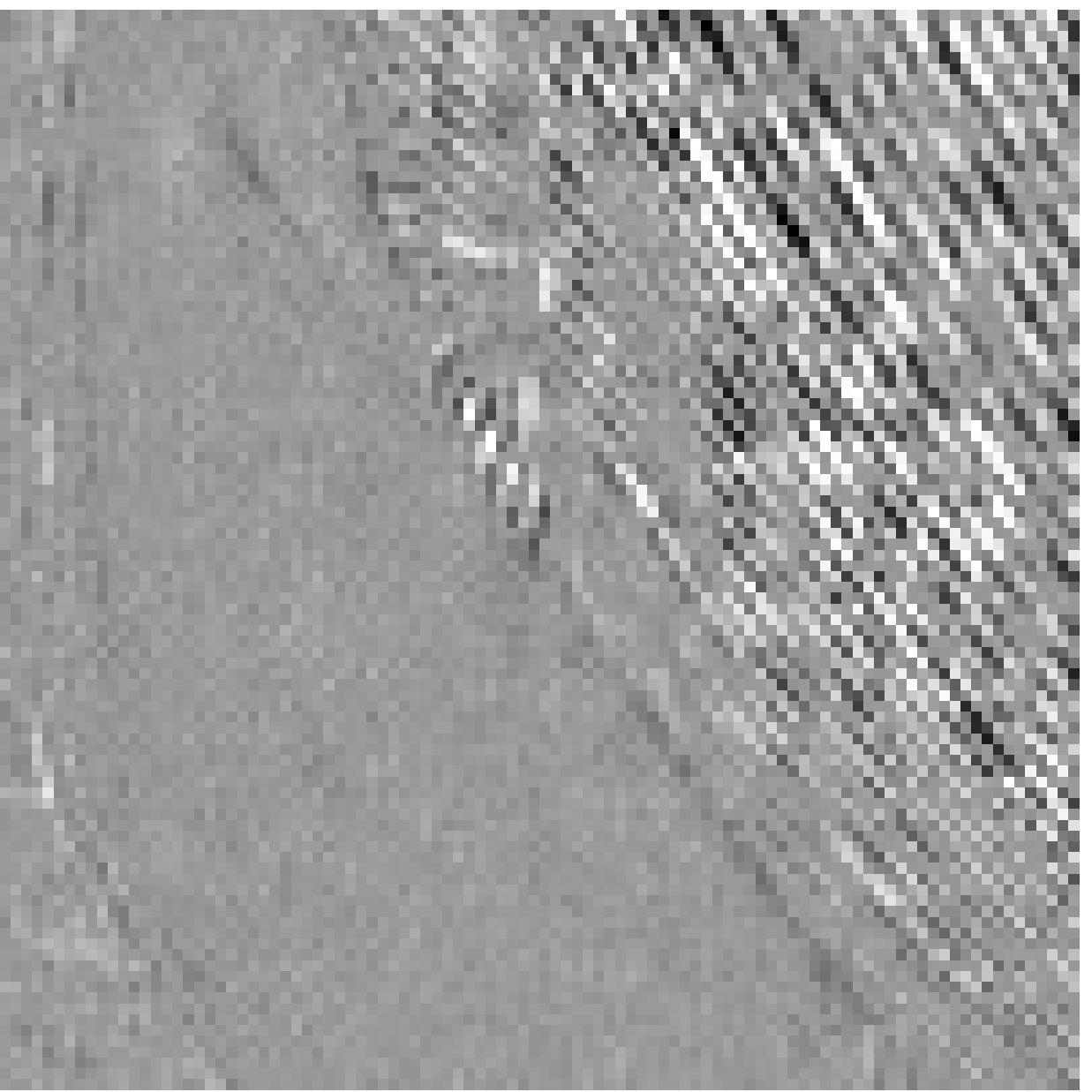}\includegraphics[width=0.17\linewidth]{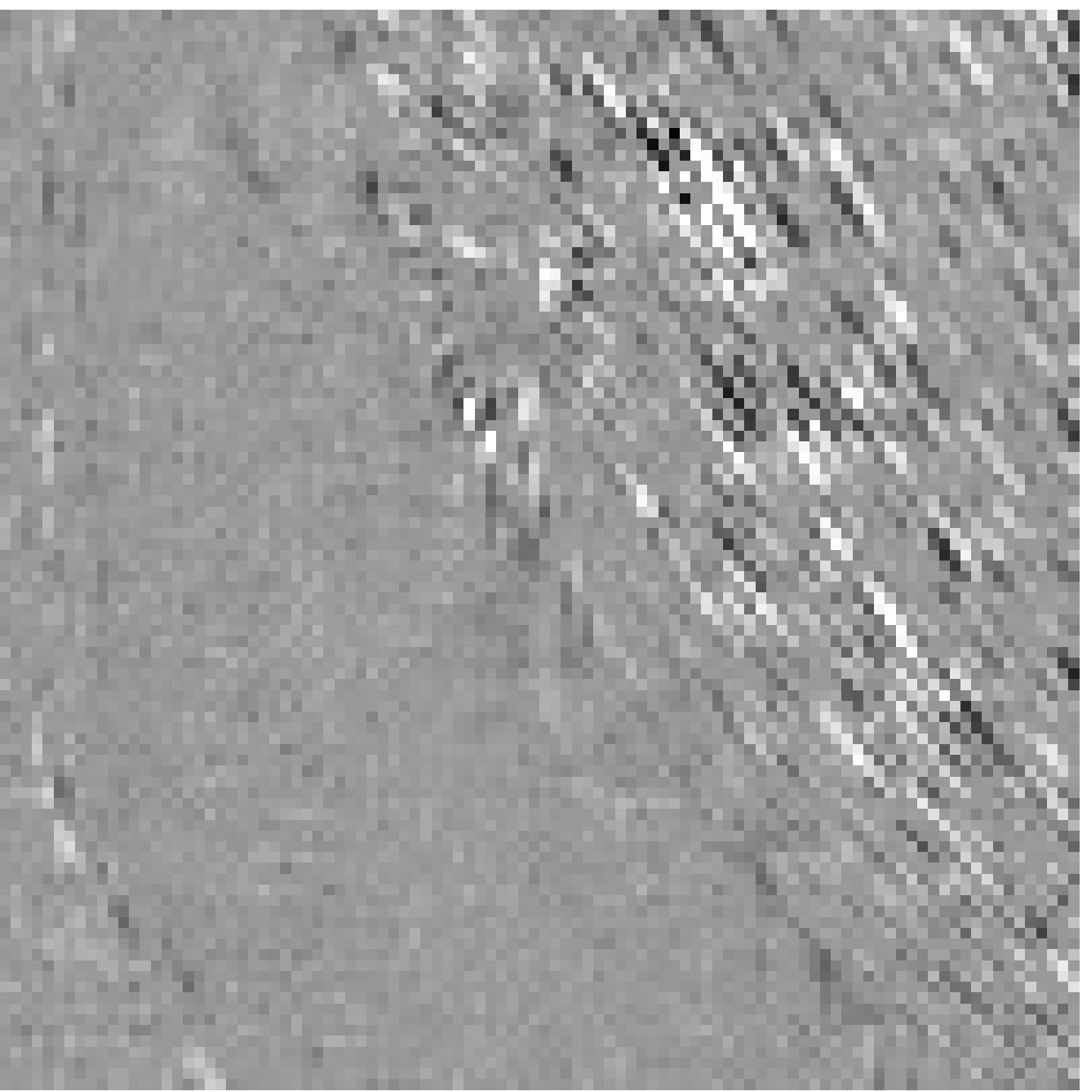}\\
\subfigure[Ground-truth]{\includegraphics[width=0.17\linewidth]{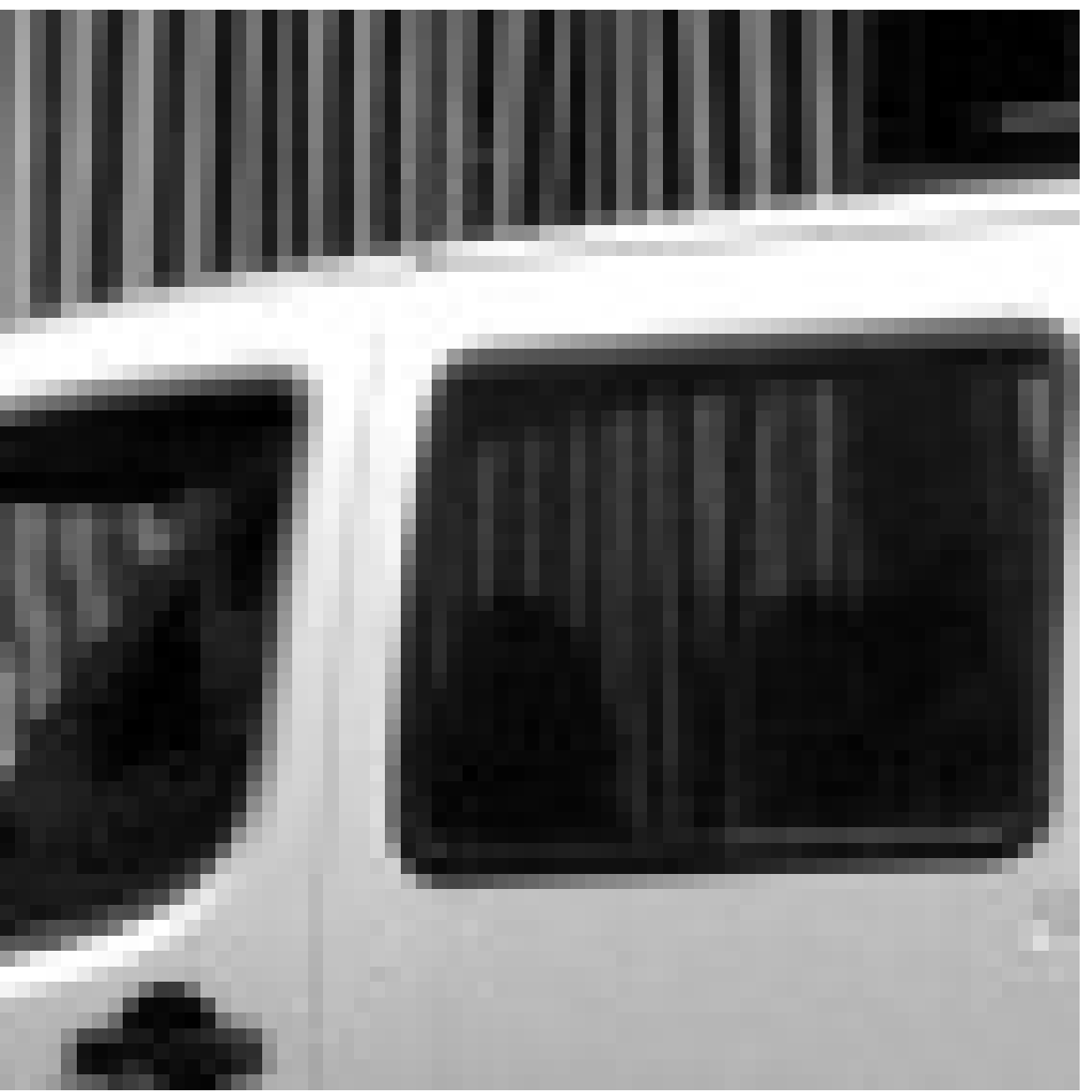}}\subfigure[\hpnlb{} (\textbf{30.20 dB})]{\includegraphics[width=0.17\linewidth]{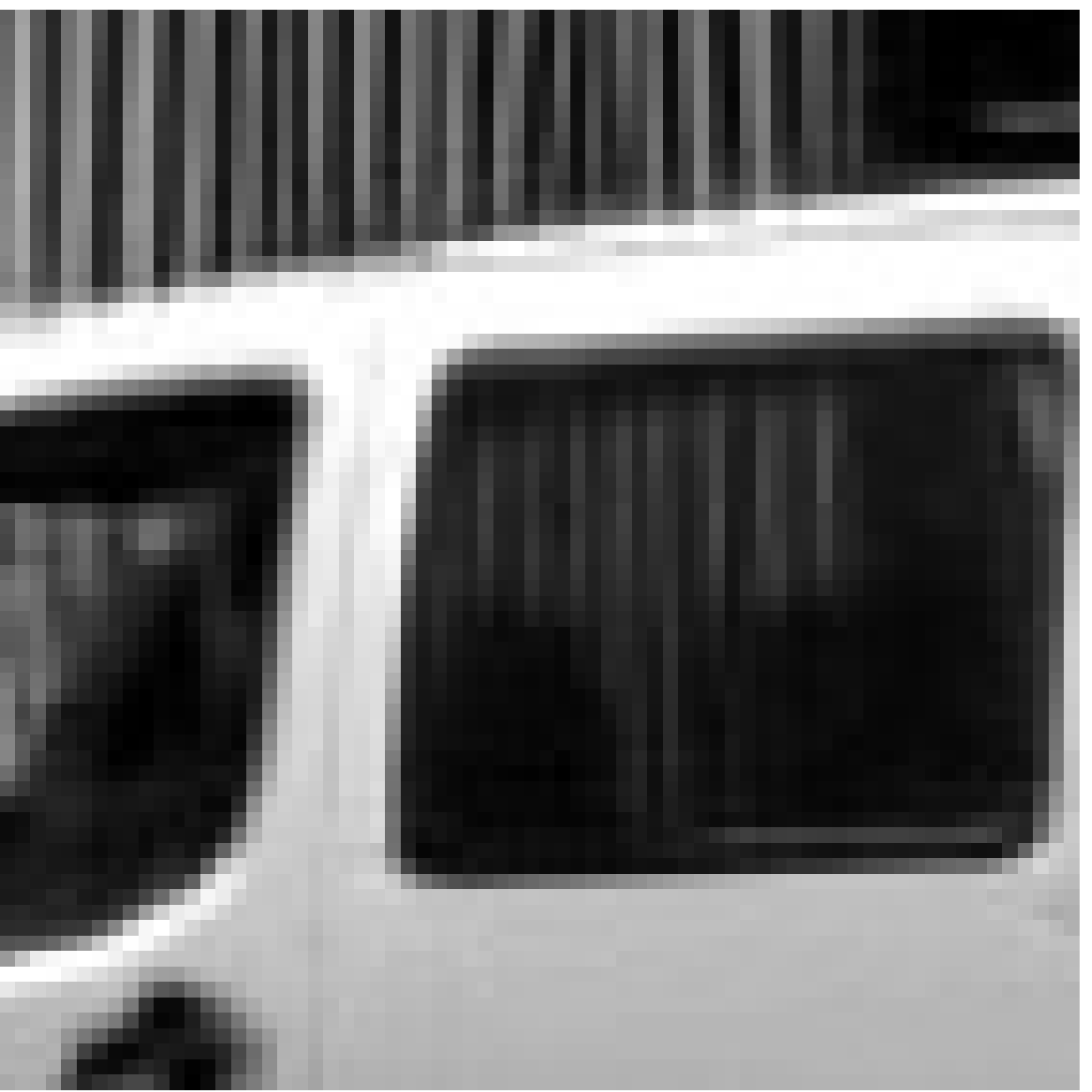}}\subfigure[PLE (27.89 dB)]{\includegraphics[width=0.17\linewidth]{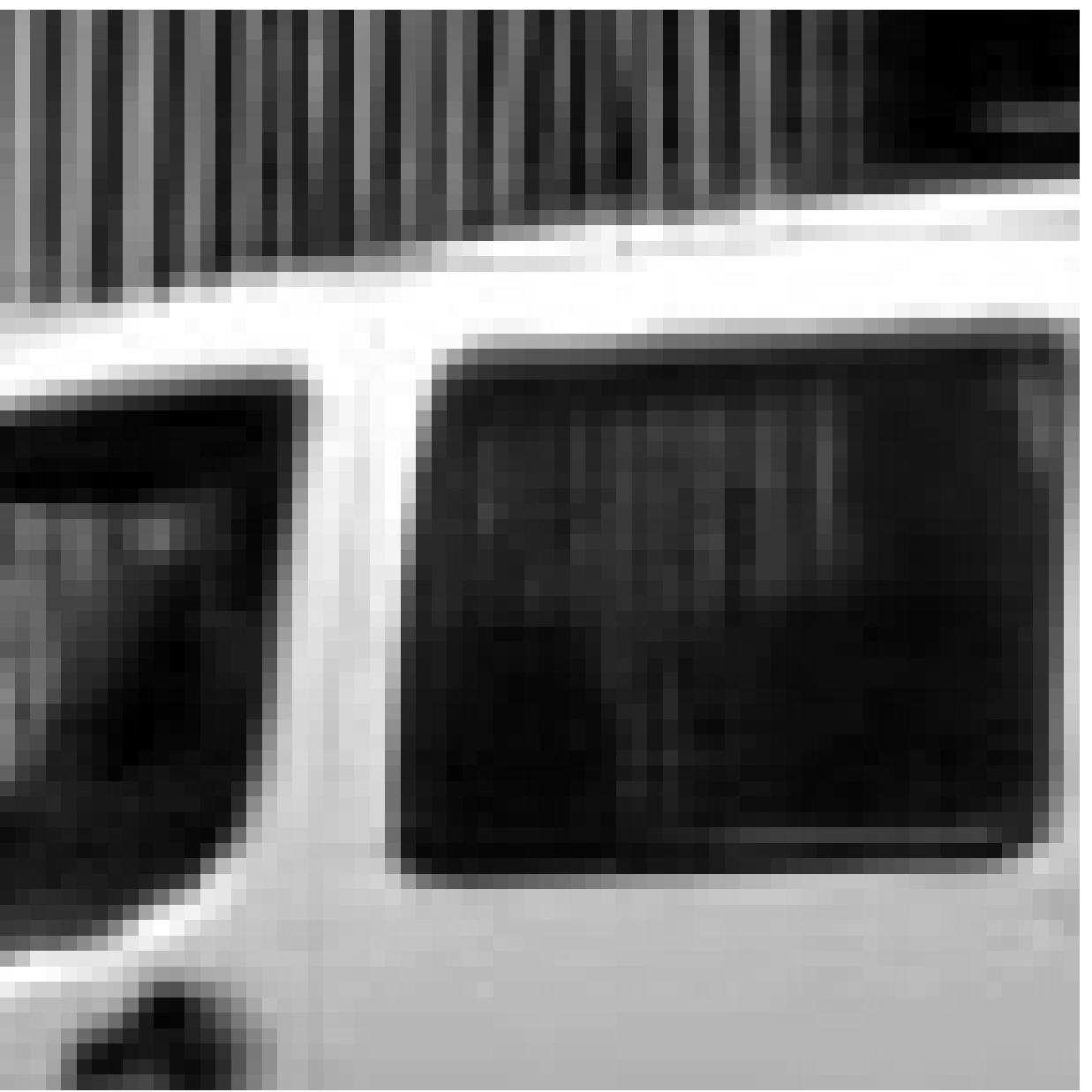}}\subfigure[EPLL (27.83 dB)]{\includegraphics[width=0.17\linewidth]{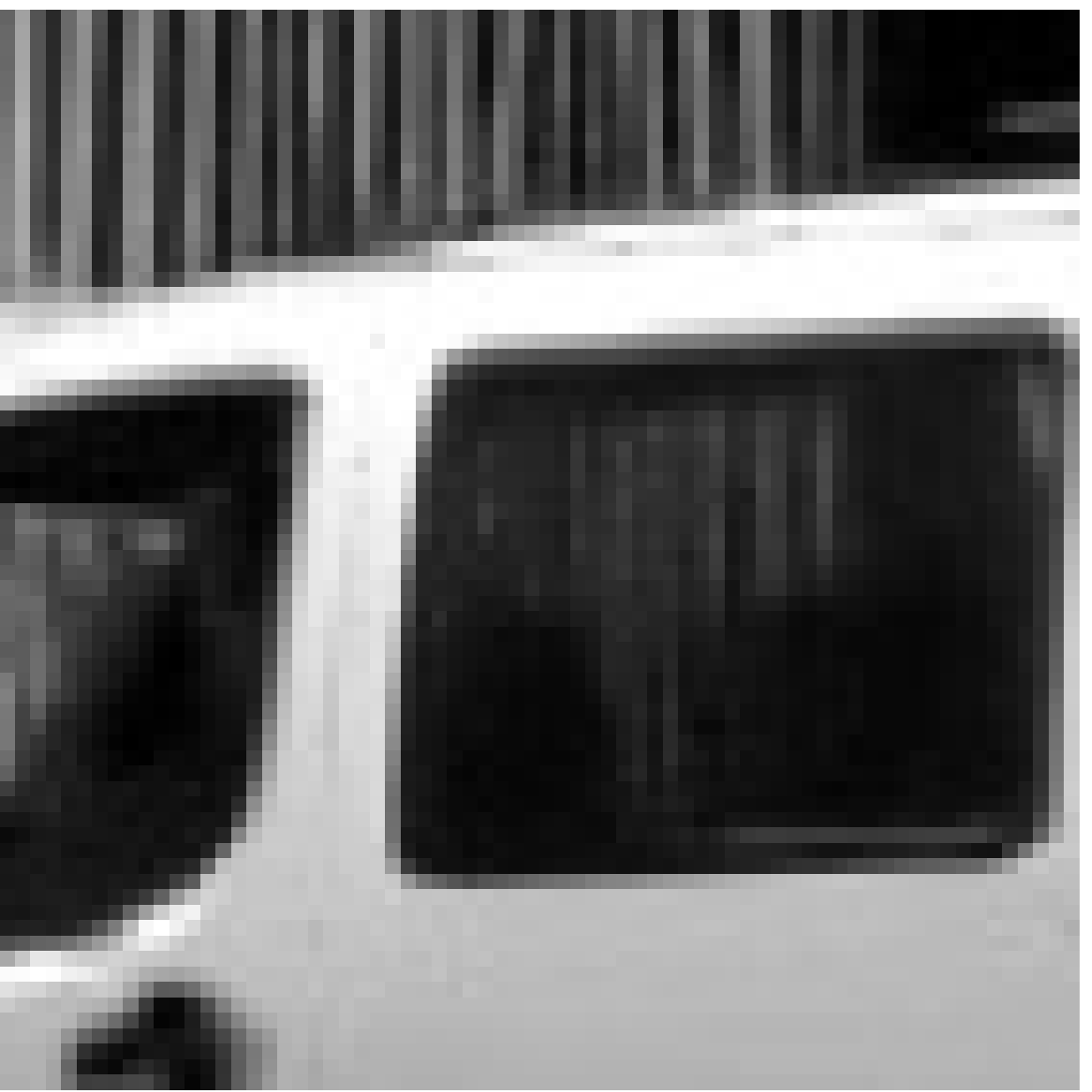}}\subfigure[E-PLE (26.79 dB)]{\includegraphics[width=0.17\linewidth]{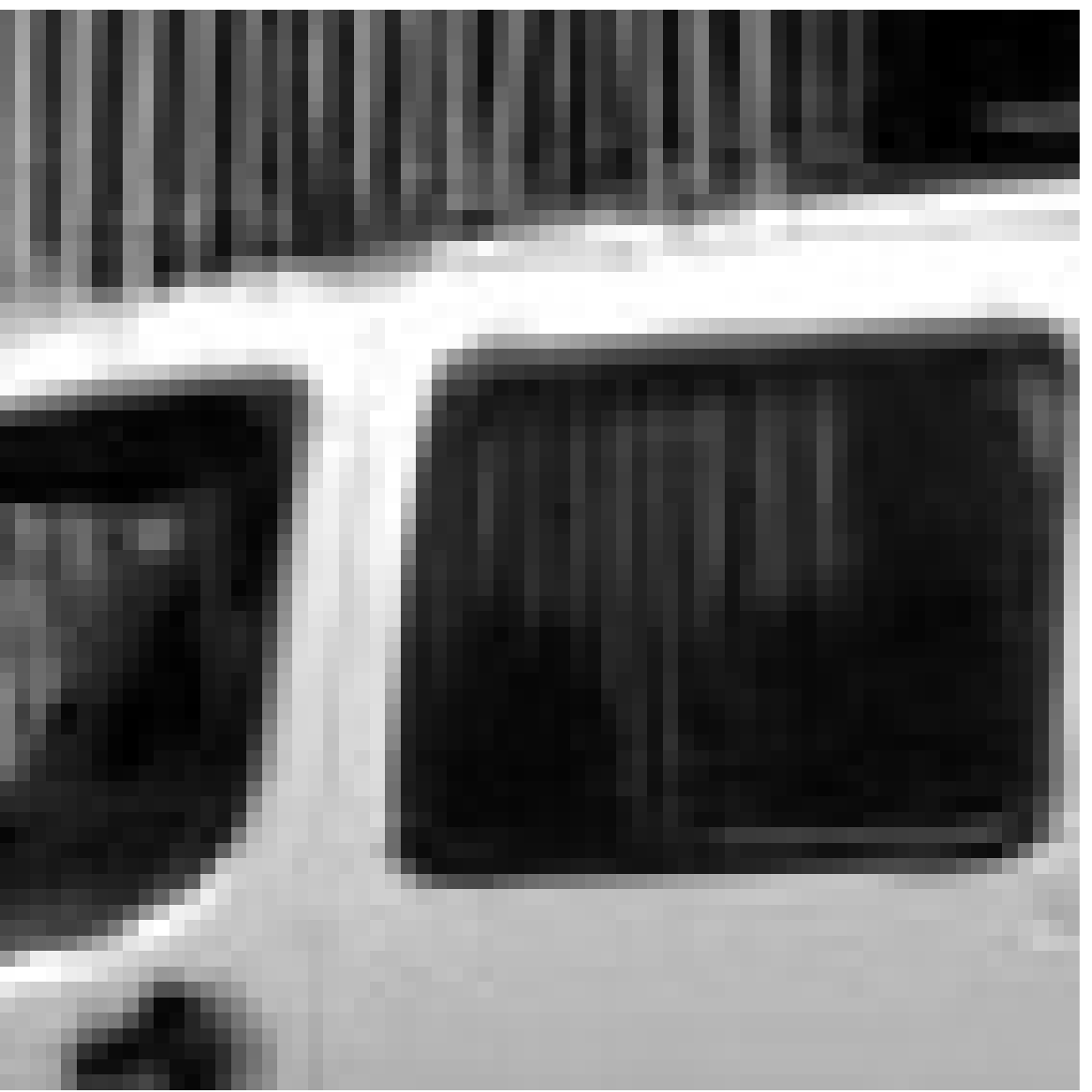}}\\
\includegraphics[width=0.17\linewidth]{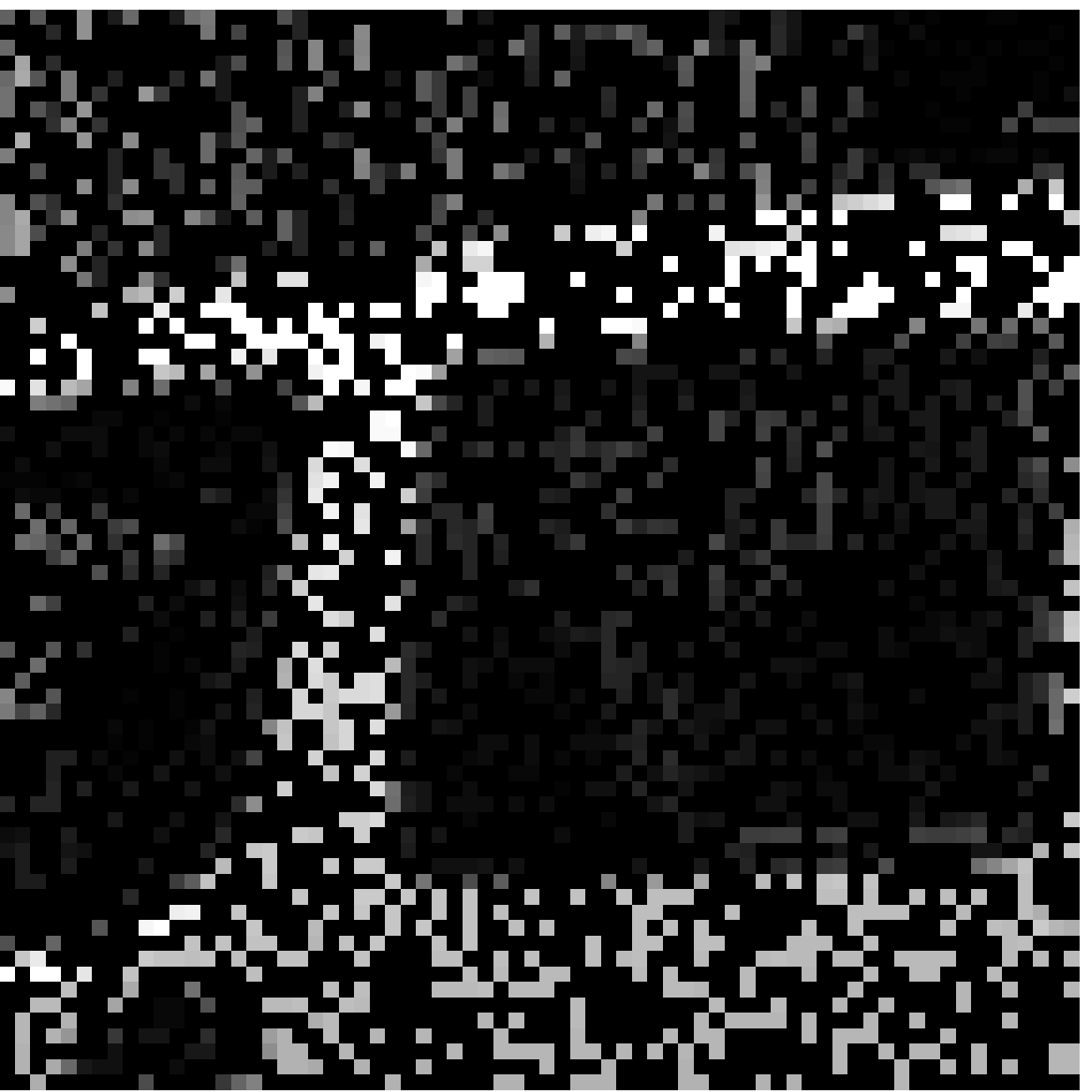}\includegraphics[width=0.17\linewidth]{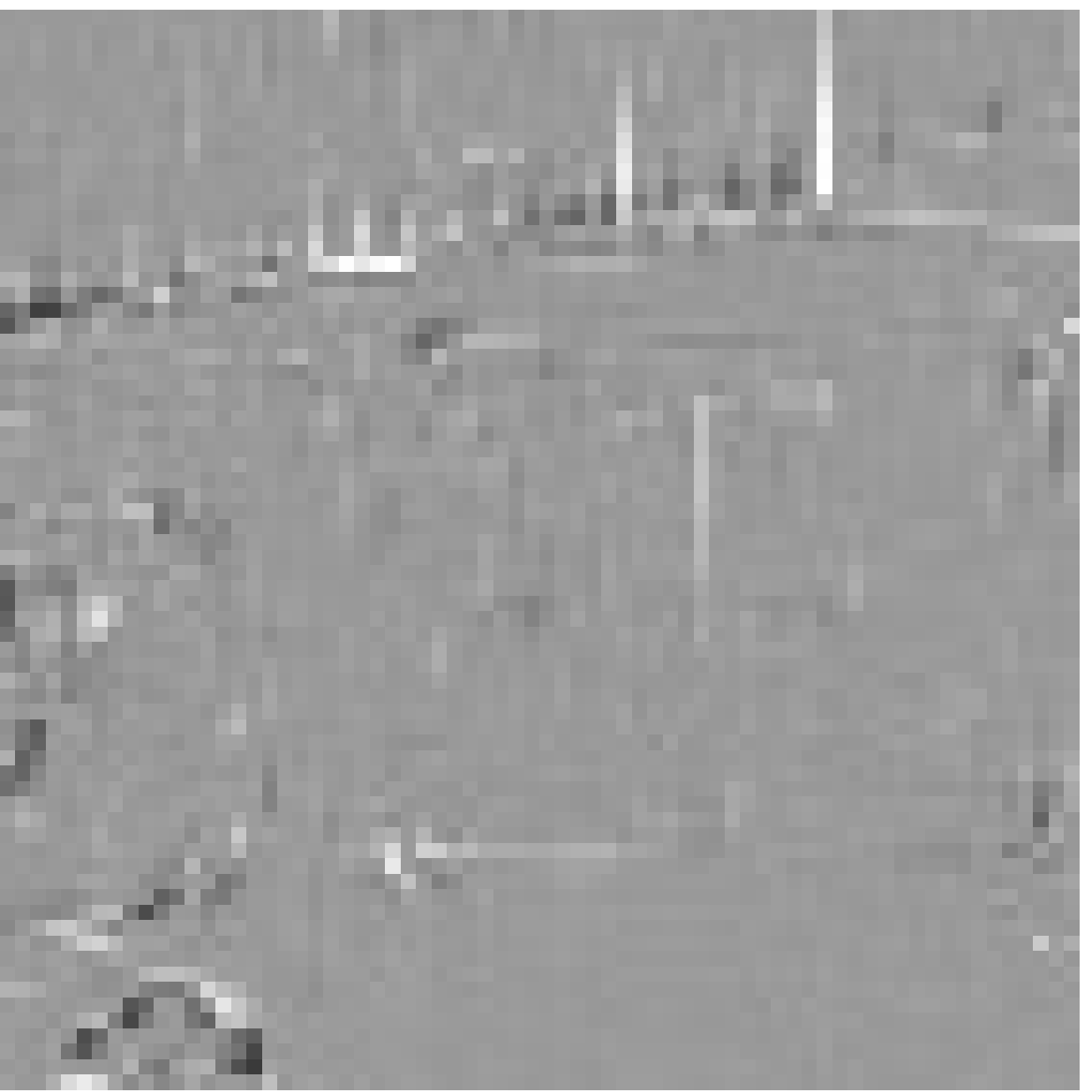}\includegraphics[width=0.17\linewidth]{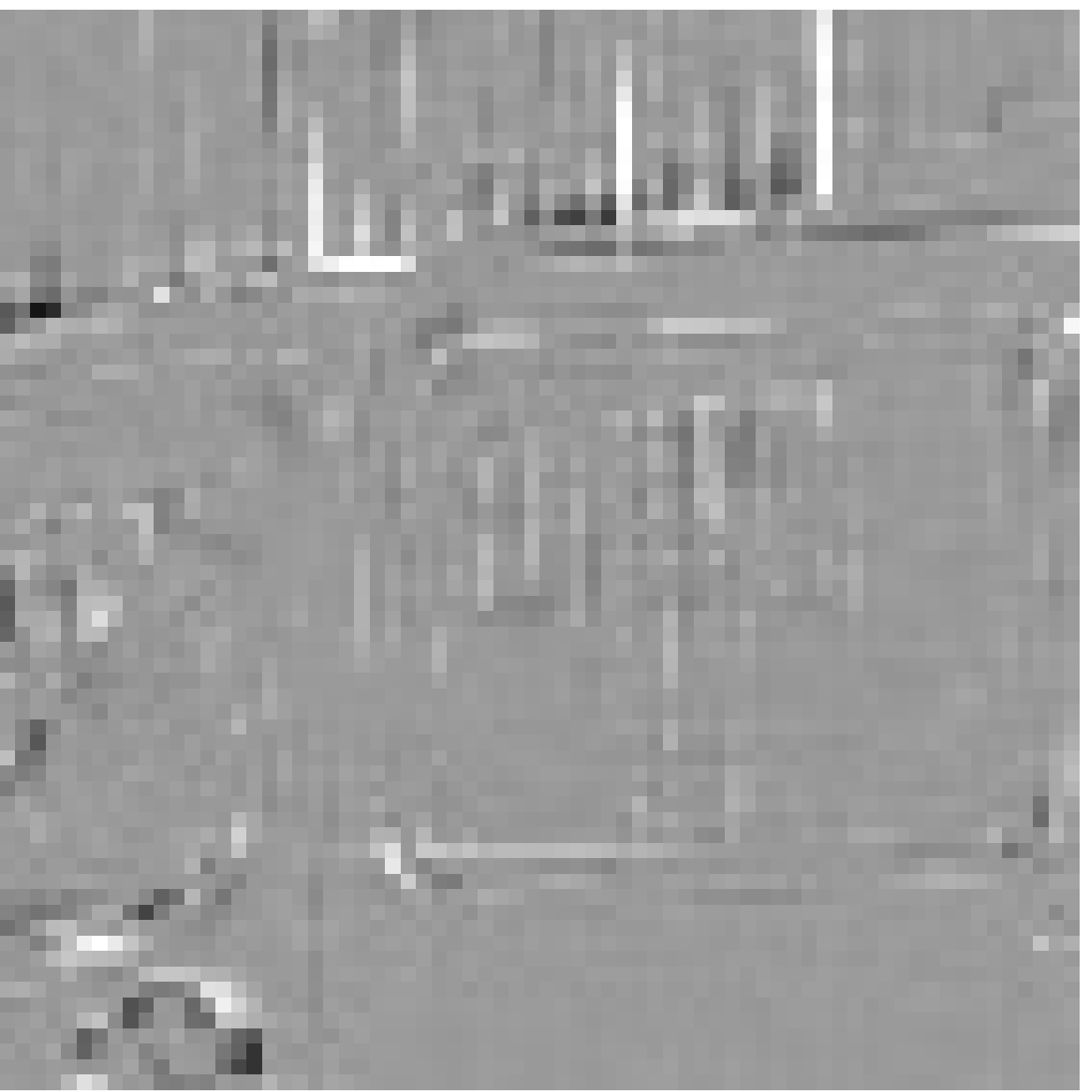}\includegraphics[width=0.17\linewidth]{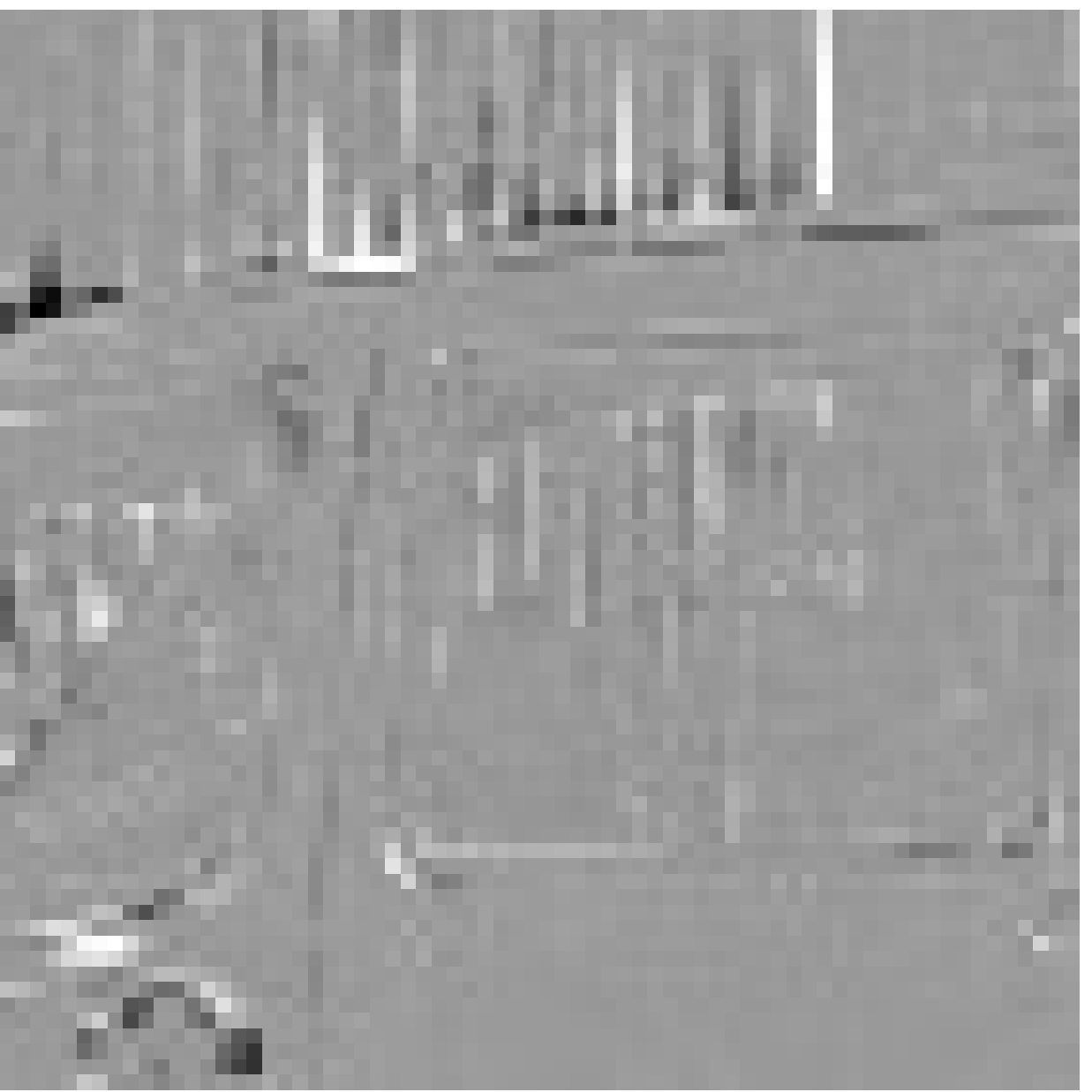}\includegraphics[width=0.17\linewidth]{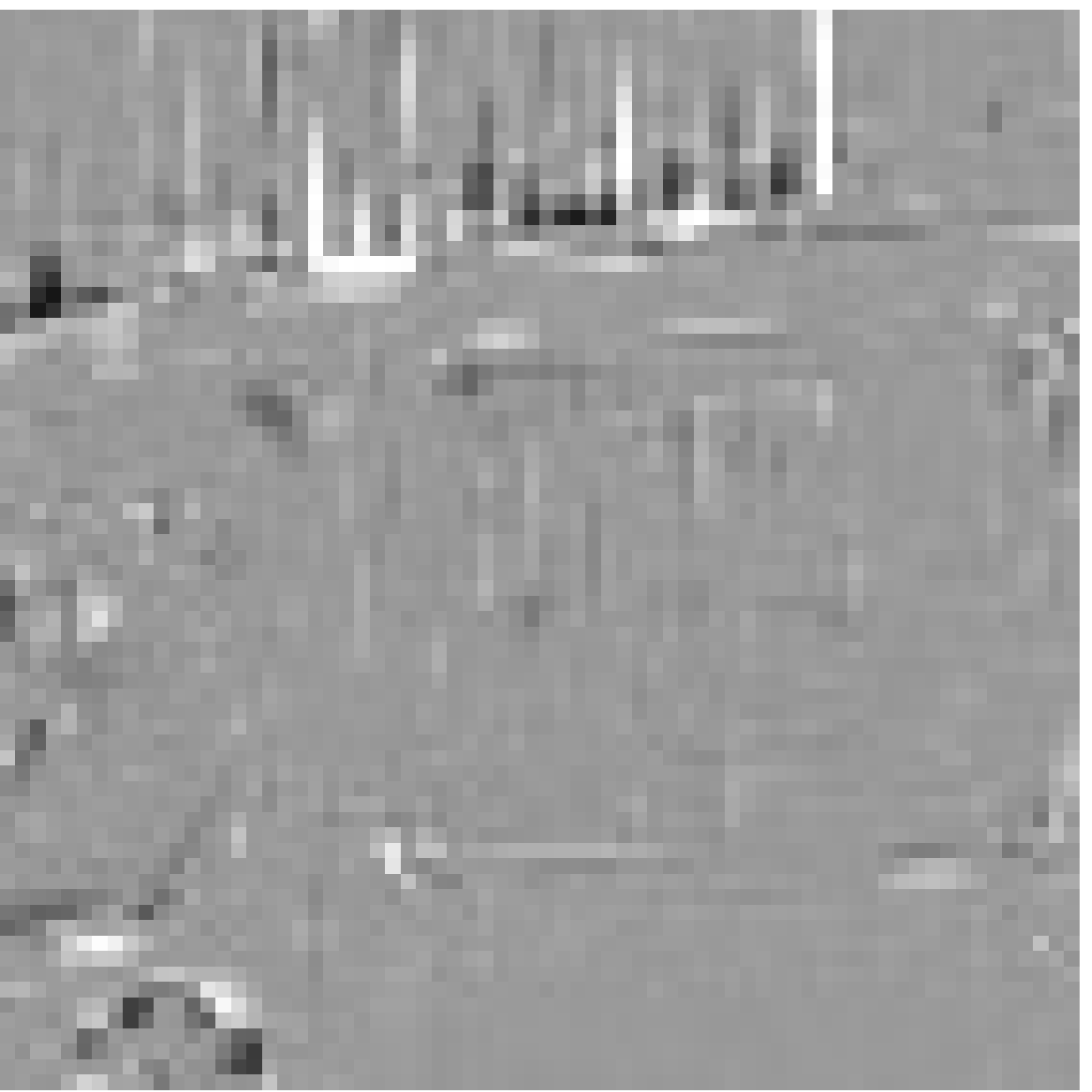}\\
\caption{\textbf{Synthetic data. Interpolation with 70\% of randomly missing pixels}. \textbf{Left to right:} (first row) Ground-truth (extract of barbara), result by \hpnlb{}, PLE, EPLL, E-PLE. (second row) input image, difference with respect to the ground-truth of each of the corresponding results. (third and fourth row) Idem for an extract of the traffic image. See Table~\ref{tab:psnrInterp} for the \psnr{} results for the complete images. Please see the digital copy for better details reproduction.}
\label{fig:syntheticExpsInterp}
\end{figure*}

\paragraph{Combined interpolation and denoising} For this experiment, the ground-truth images are corrupted with additive Gaussian noise with variance 10, and a random mask with 70\% and 90\% of missing pixels. The parameters for all methods are set as in the previous interpolation-only experiment. Table~\ref{tab:psnrInterp} summarizes the \psnr{} values obtained by each method. Figure~\ref{fig:syntheticExpsInterpDeno1} shows some extracts of the obtained results, the \psnr{} values for the extracts and the corresponding difference images with respect to the ground-truth. Once again, the results show that the proposed approach outperforms the others. Fine structures, such as the mast and the ropes of the ship, as well as textures, as in Barbara's headscarf, are much better preserved.
\begin{figure*}
\centering
\subfigure[Ground-truth]
{\includegraphics[width=0.17\linewidth]{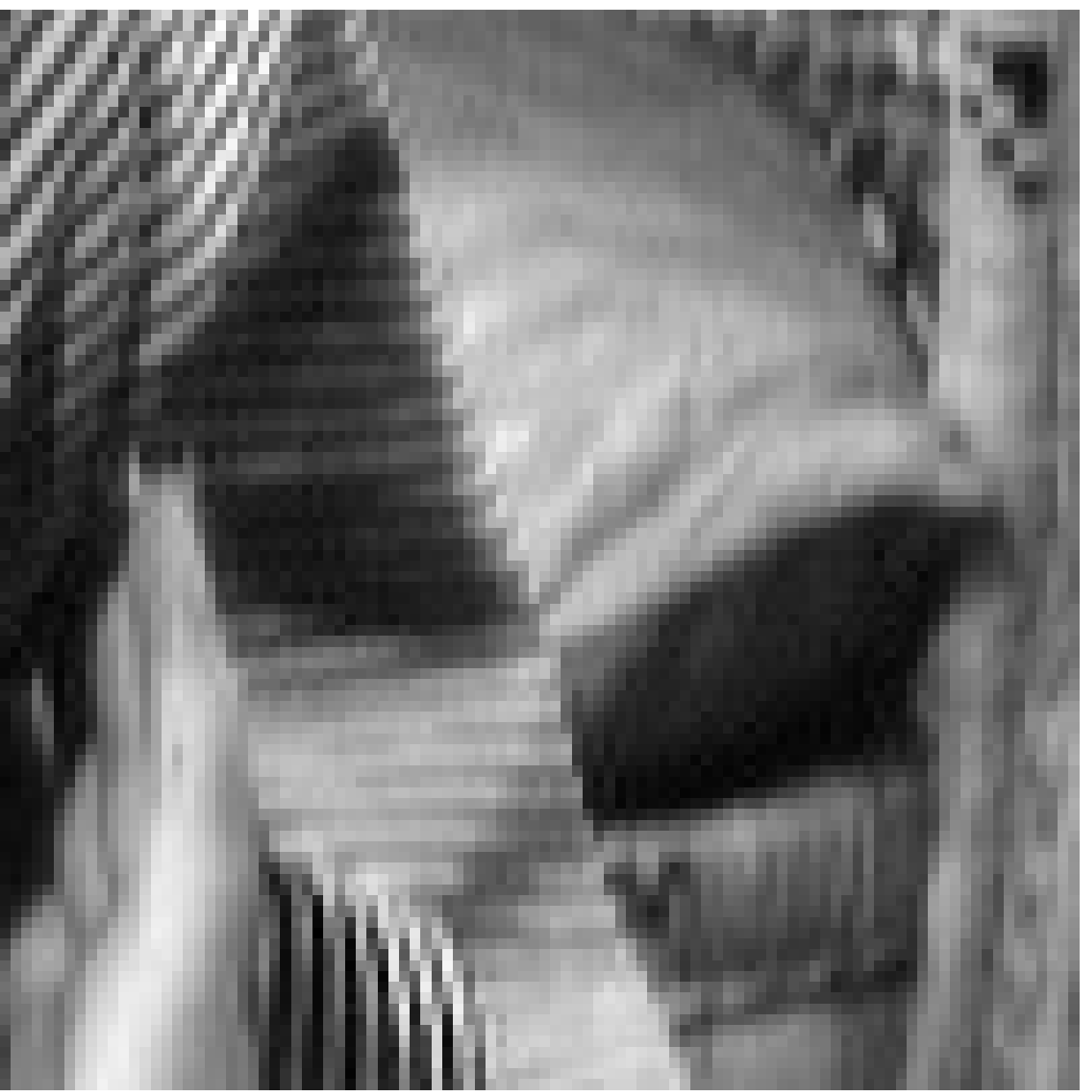}}\subfigure[\hpnlb{} (\textbf{26.20 dB})]
{\includegraphics[width=0.17\linewidth]{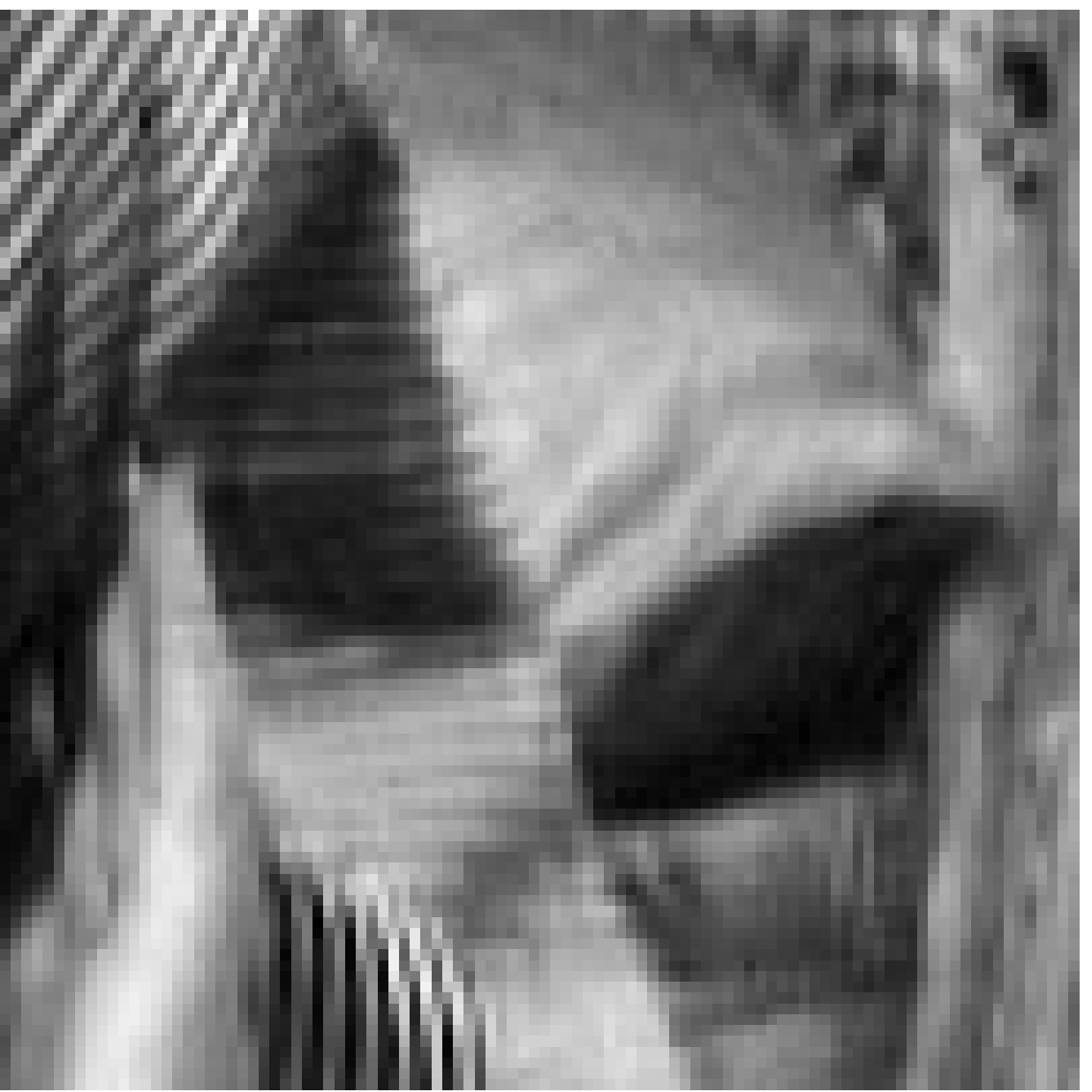}}\subfigure[PLE (24.76 dB)]{\includegraphics[width=0.17\linewidth]{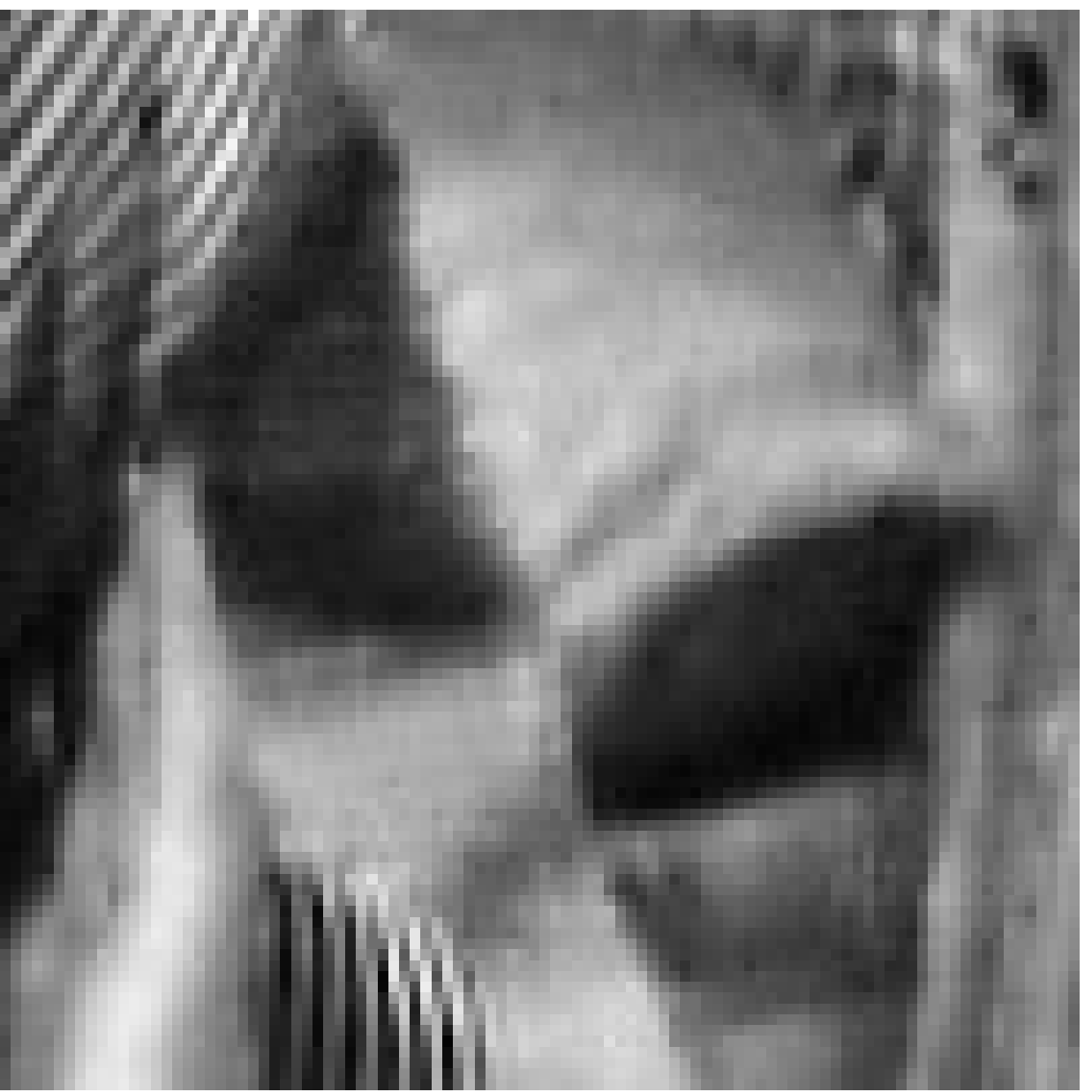}}\subfigure[EPLL (23.84 dB)]{\includegraphics[width=0.17\linewidth]{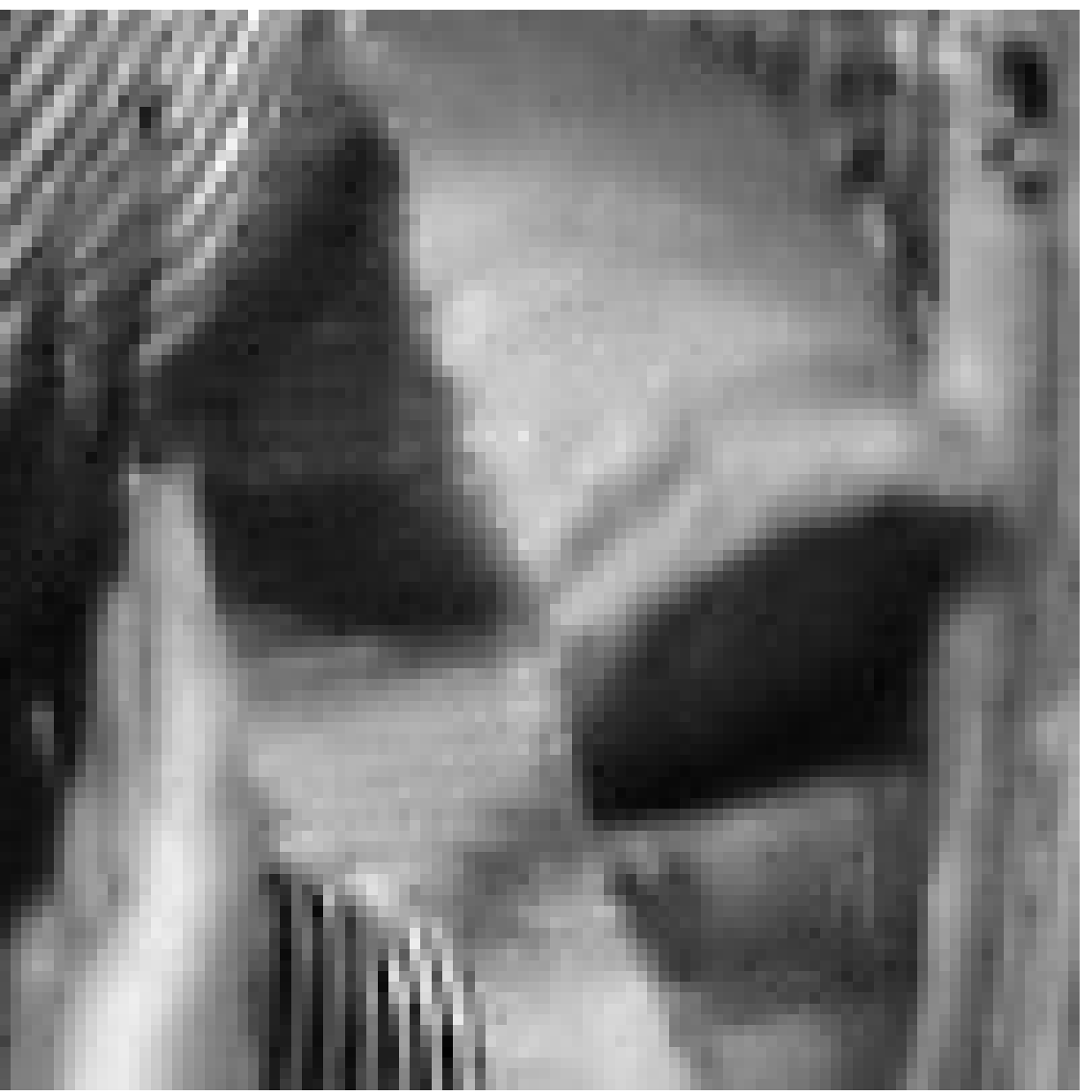}}\subfigure[E-PLE (23.60 dB)]{\includegraphics[width=0.17\linewidth]{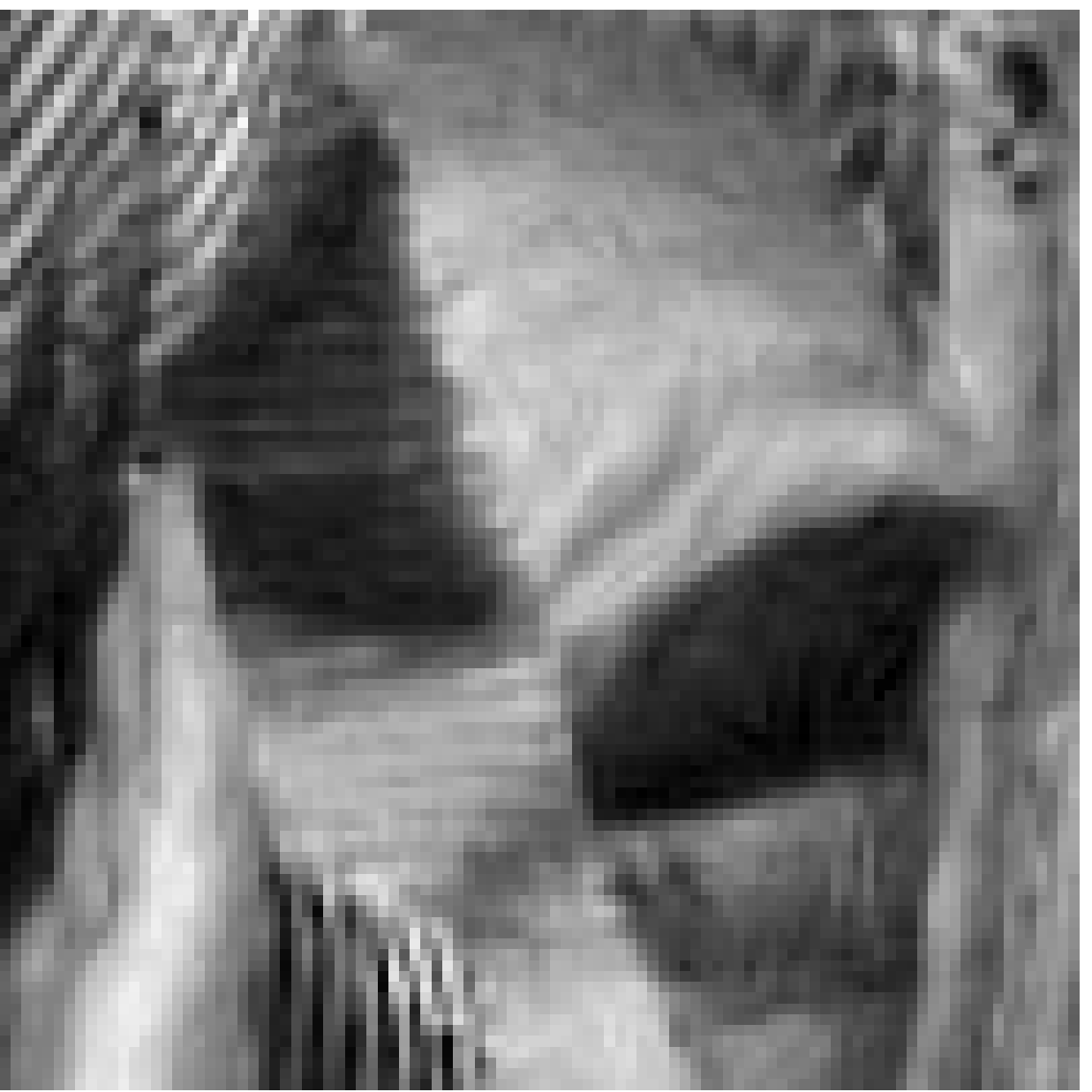}}
\includegraphics[width=0.17\linewidth]{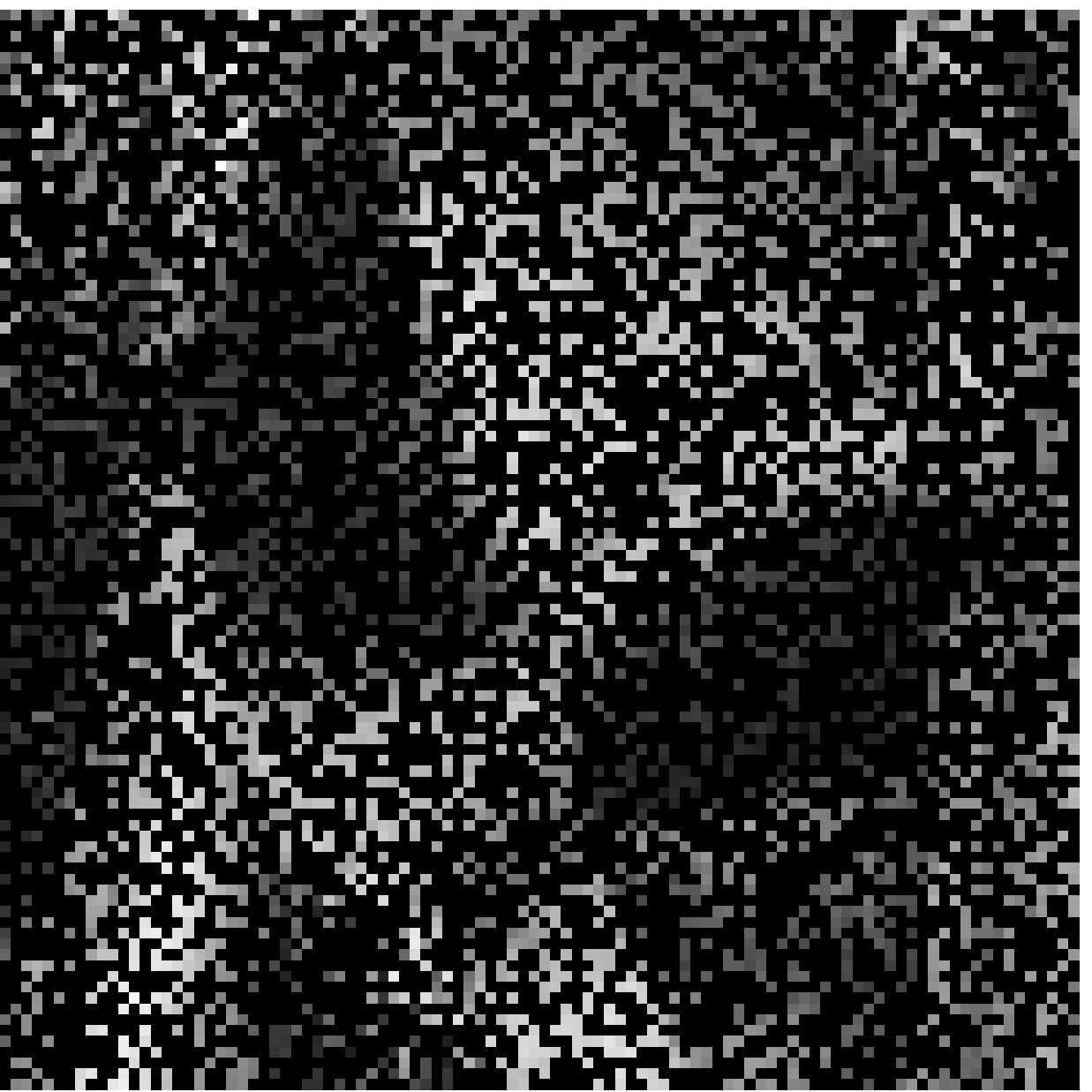}\includegraphics[width=0.17\linewidth]{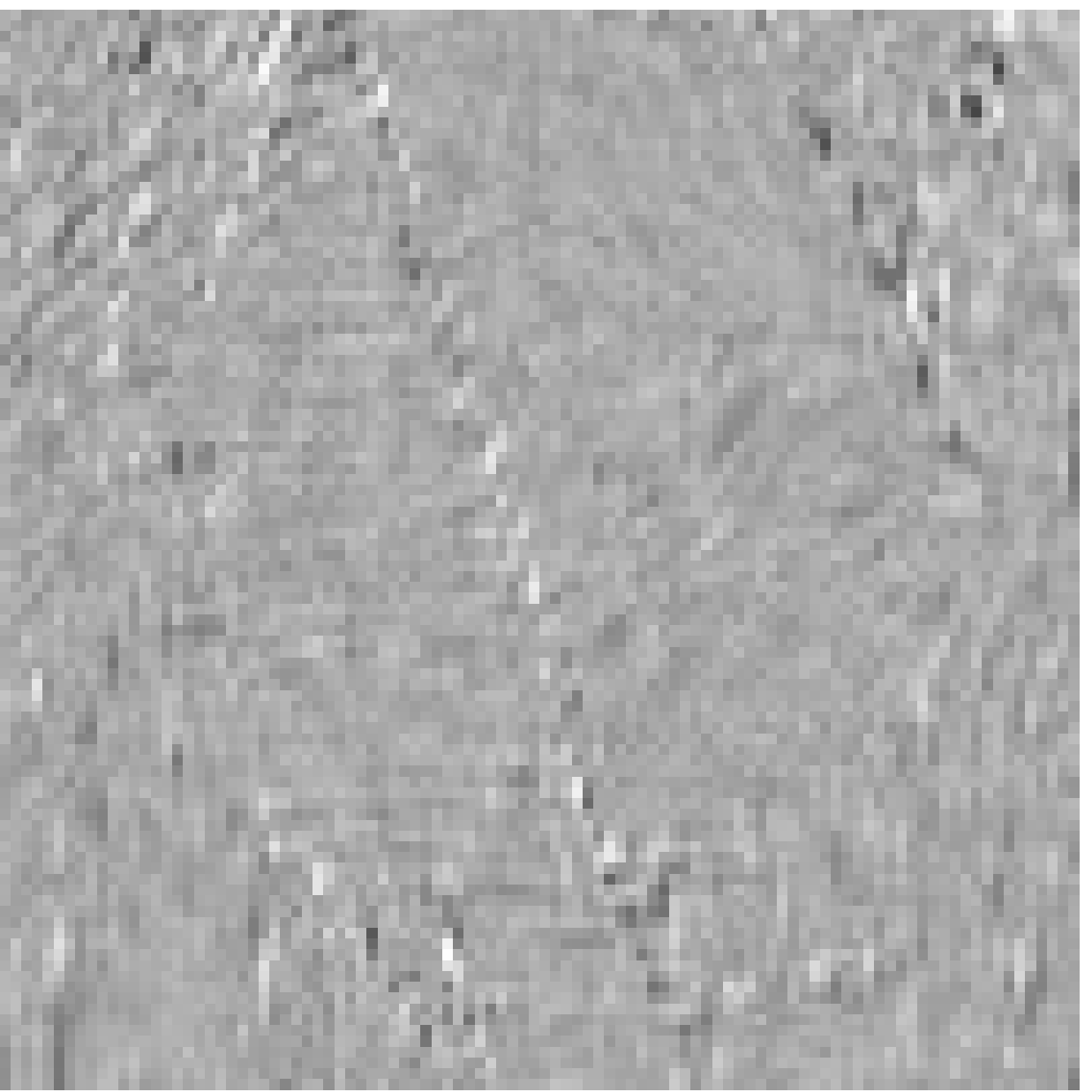}\includegraphics[width=0.17\linewidth]{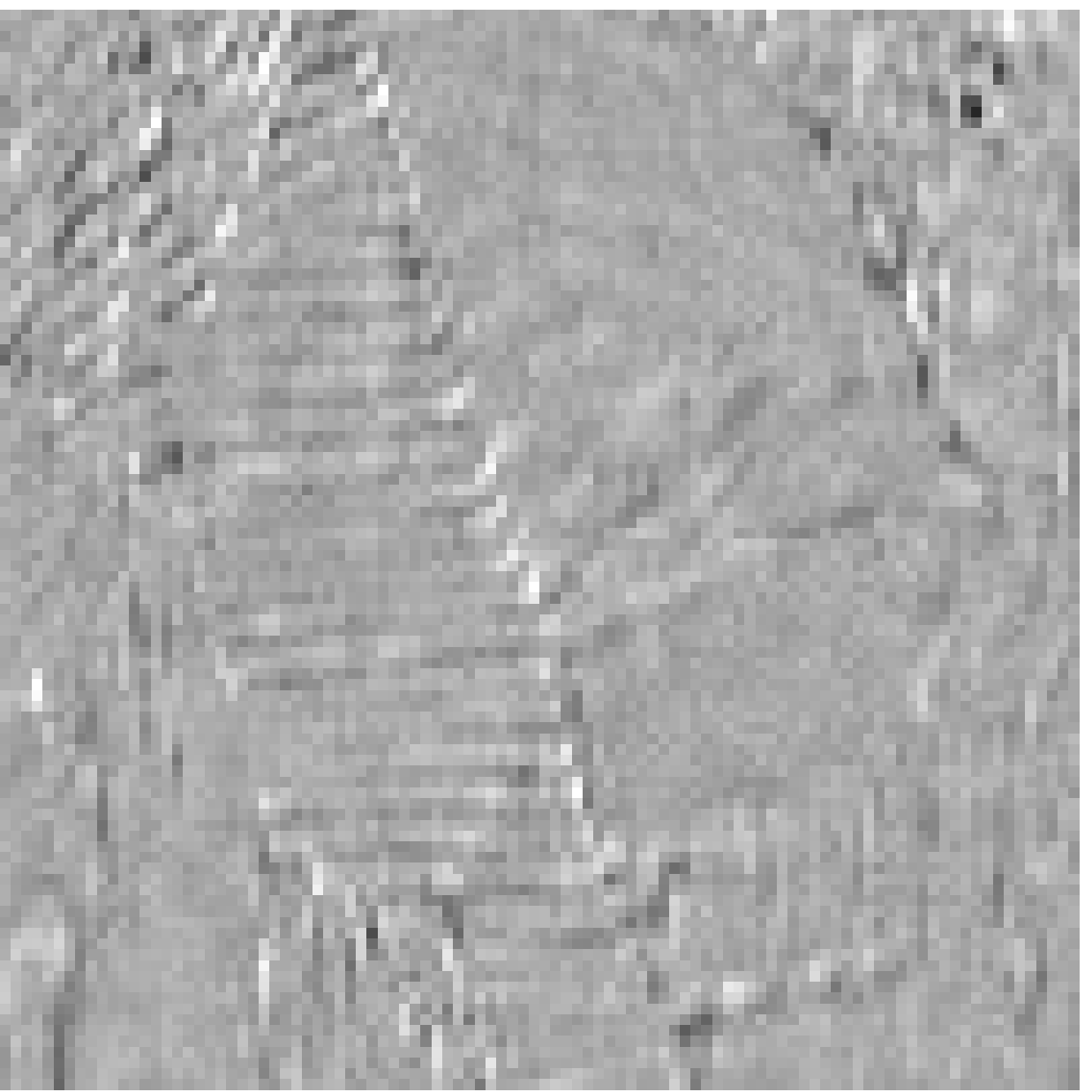}\includegraphics[width=0.17\linewidth]{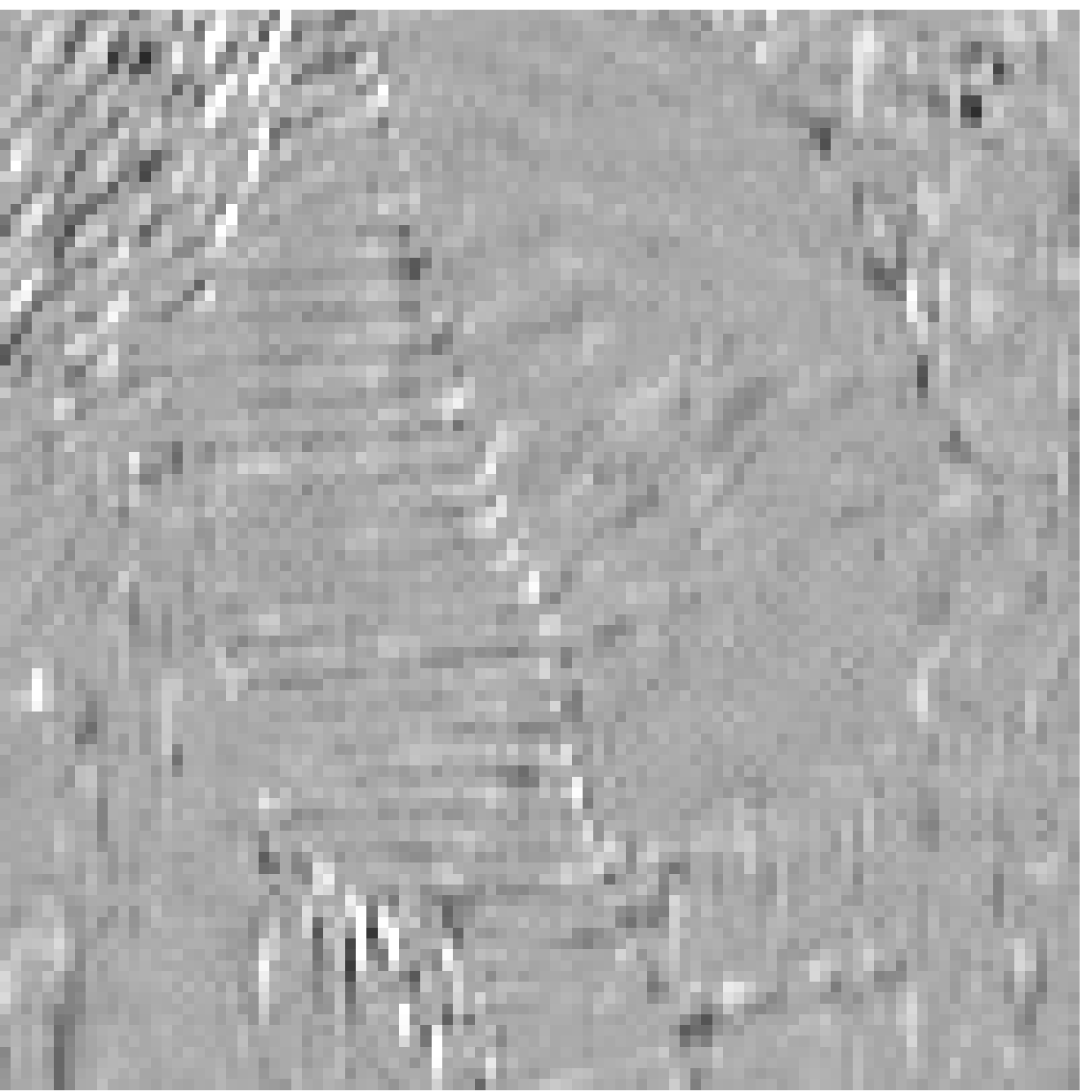}\includegraphics[width=0.17\linewidth]{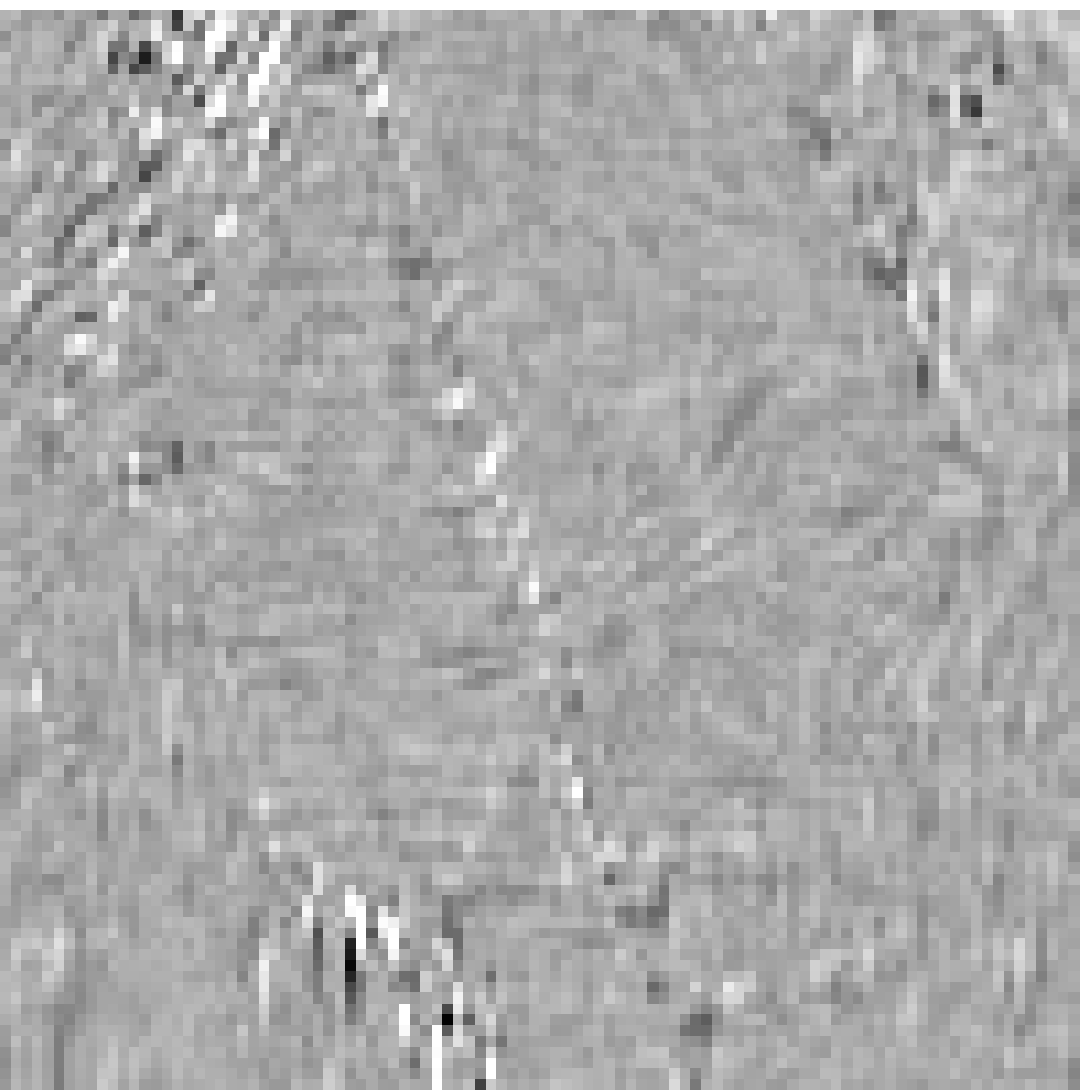}\\

\subfigure[Ground-truth]{\includegraphics[width=0.17\linewidth]{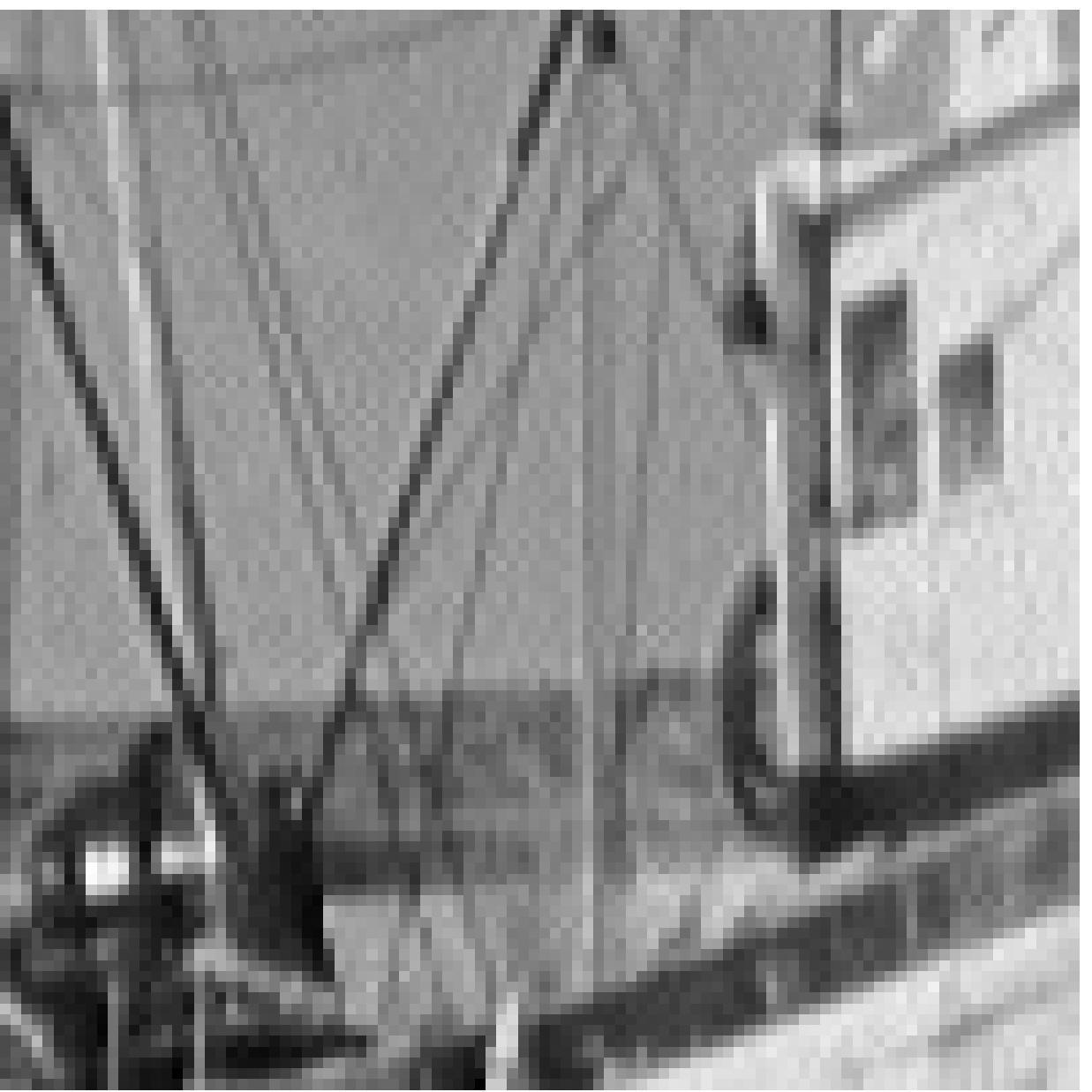}}\subfigure[\hpnlb{} (\textbf{28.34 dB})]{\includegraphics[width=0.17\linewidth]{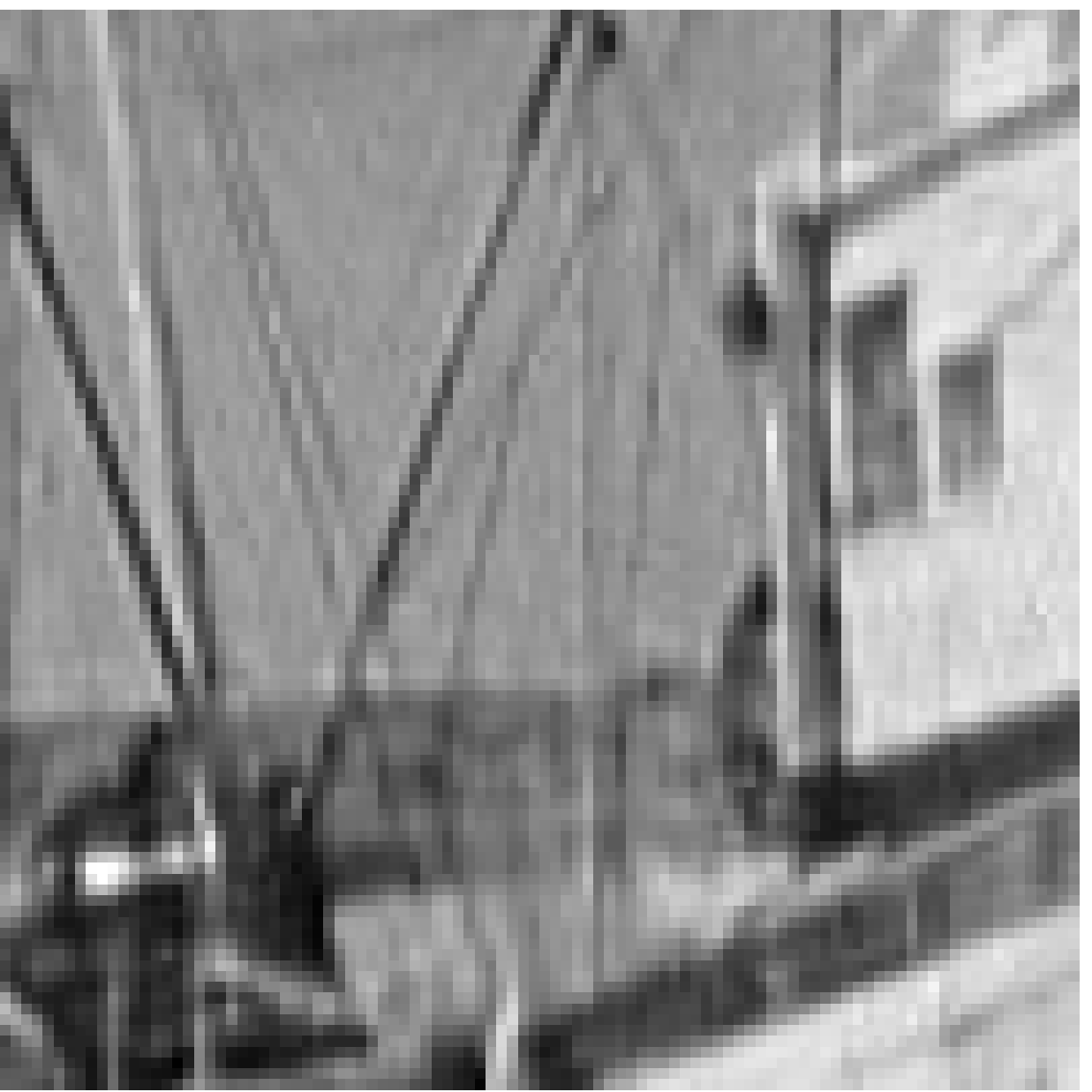}}\subfigure[PLE (27.50 dB)]{\includegraphics[width=0.17\linewidth]{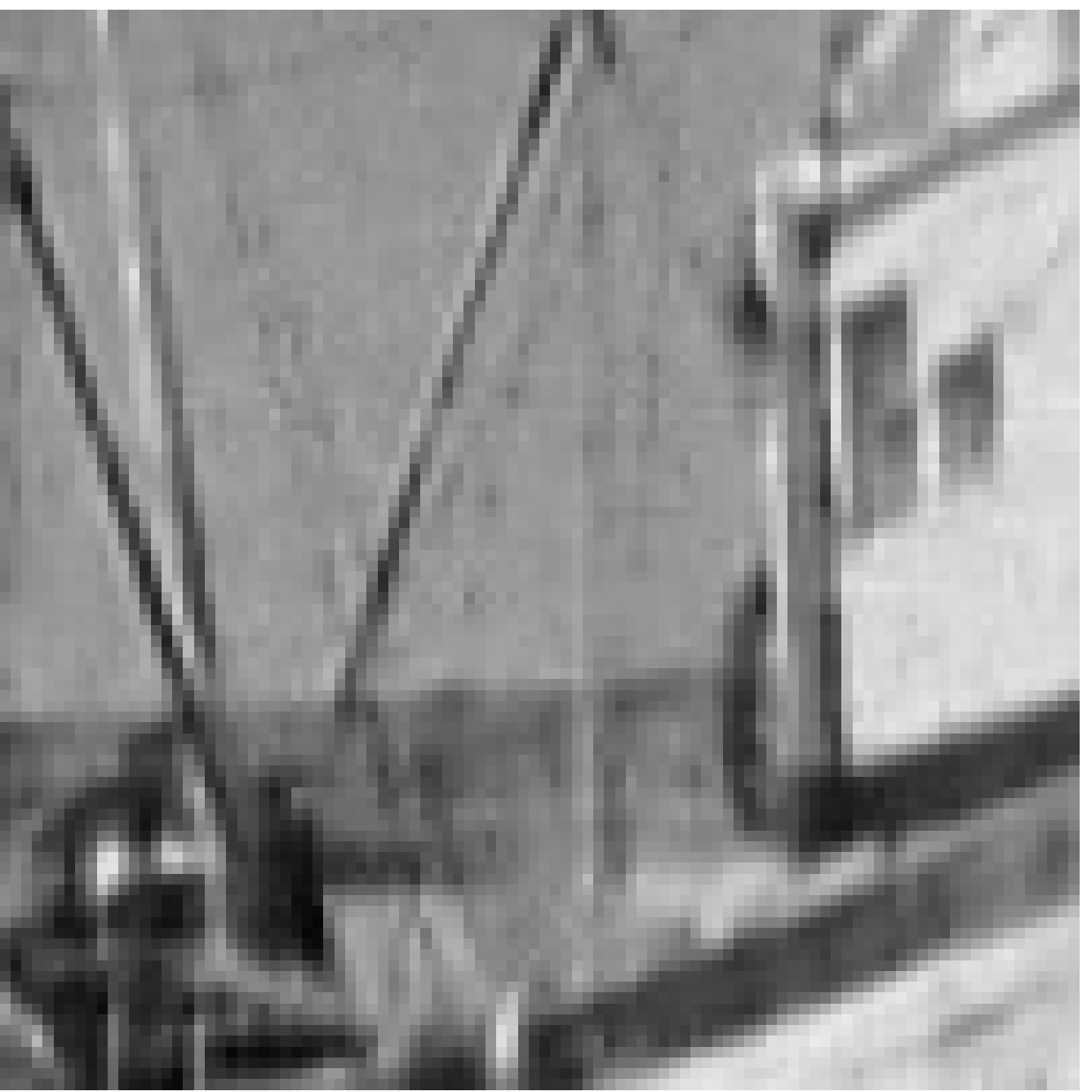}}\subfigure[EPLL (27.27 dB)]{\includegraphics[width=0.17\linewidth]{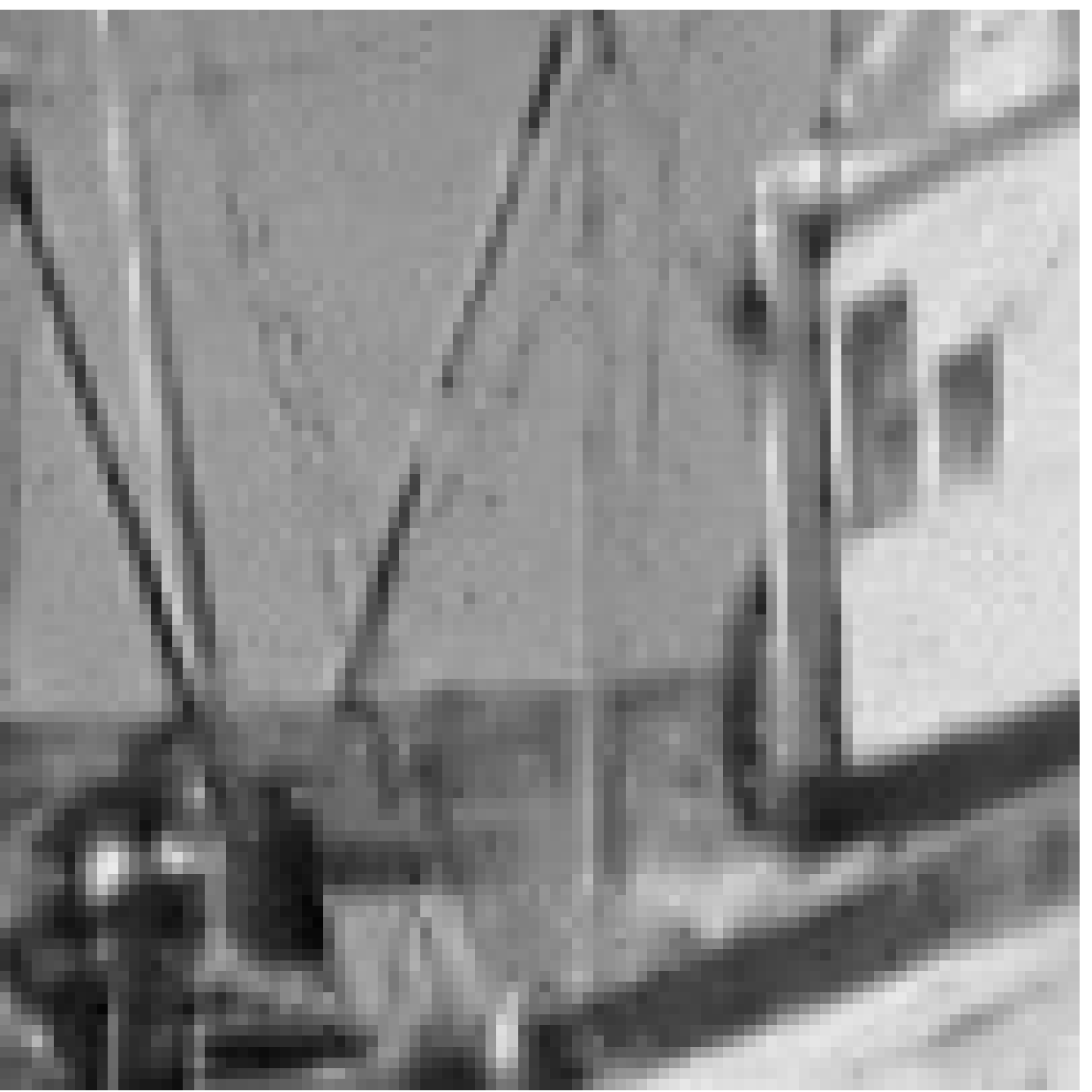}}\subfigure[E-PLE (26.83 dB)]{\includegraphics[width=0.17\linewidth]{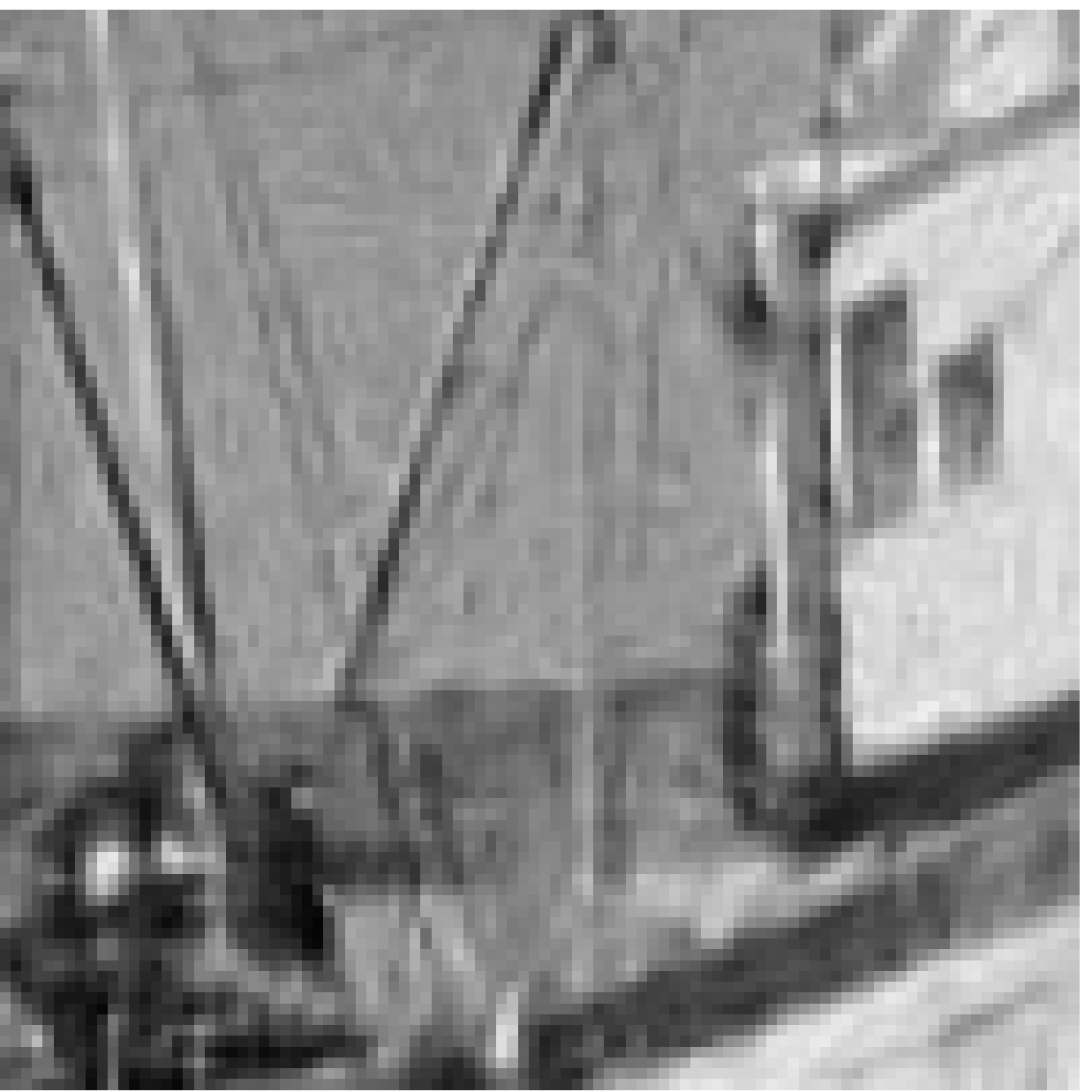}}
\includegraphics[width=0.17\linewidth]{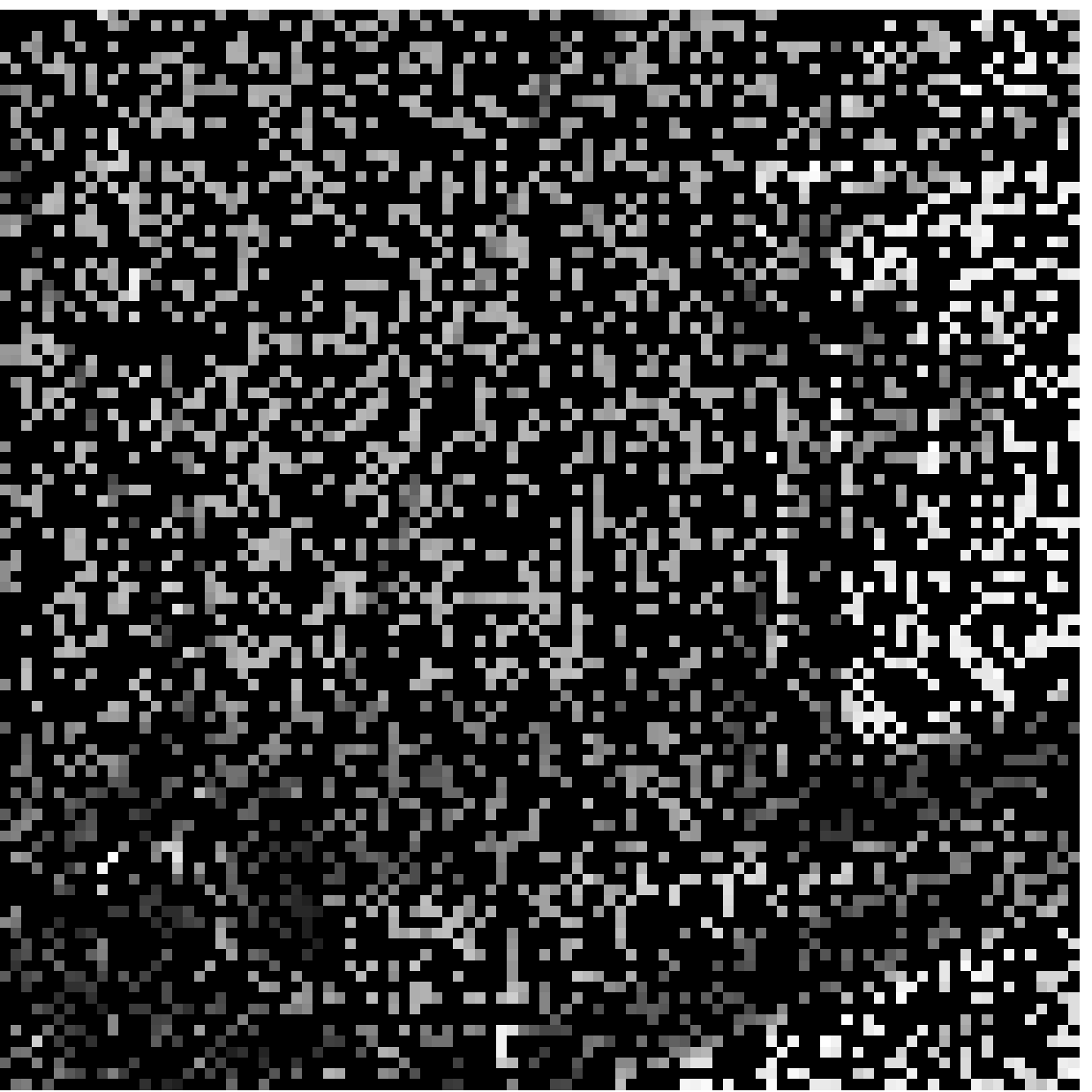}\includegraphics[width=0.17\linewidth]{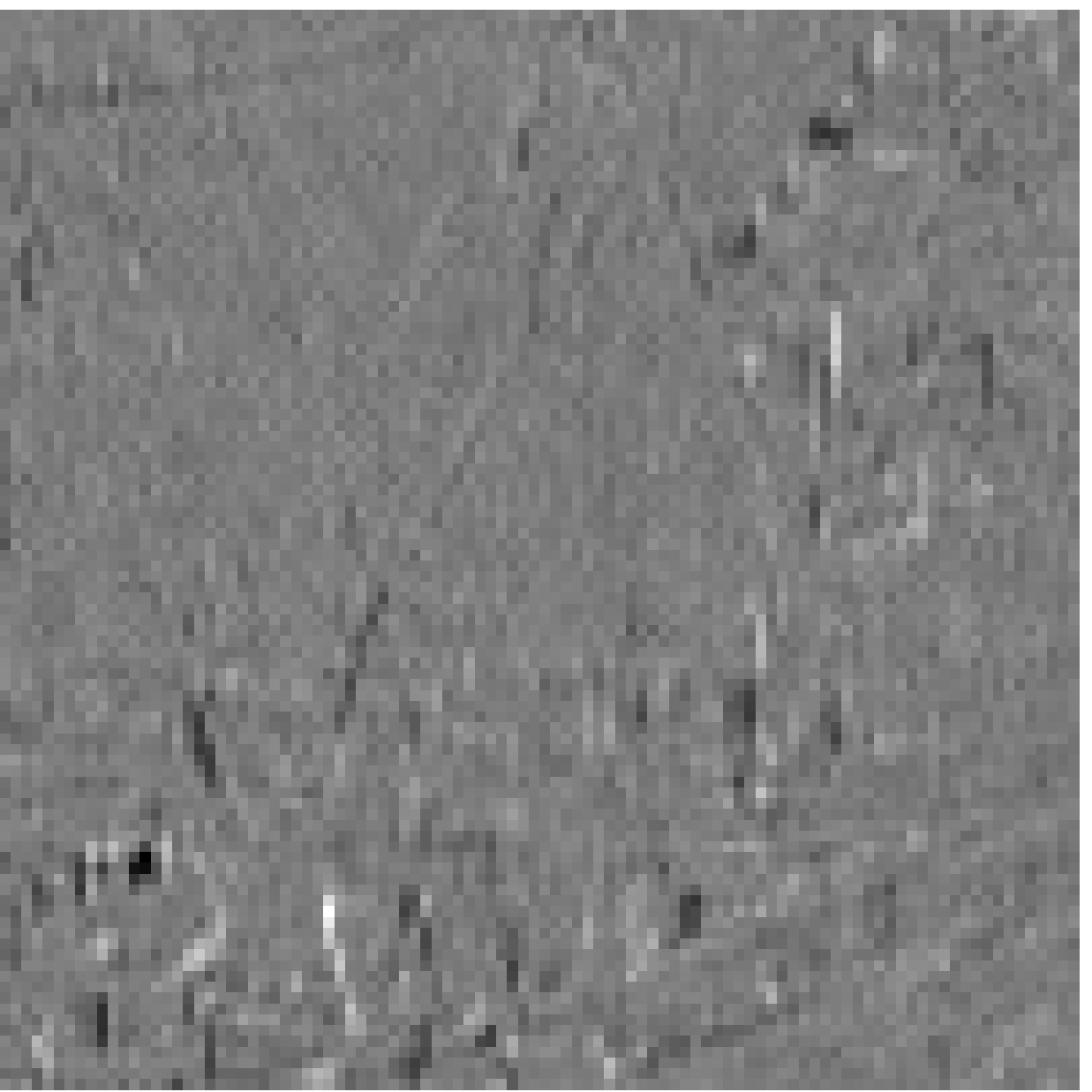}\includegraphics[width=0.17\linewidth]{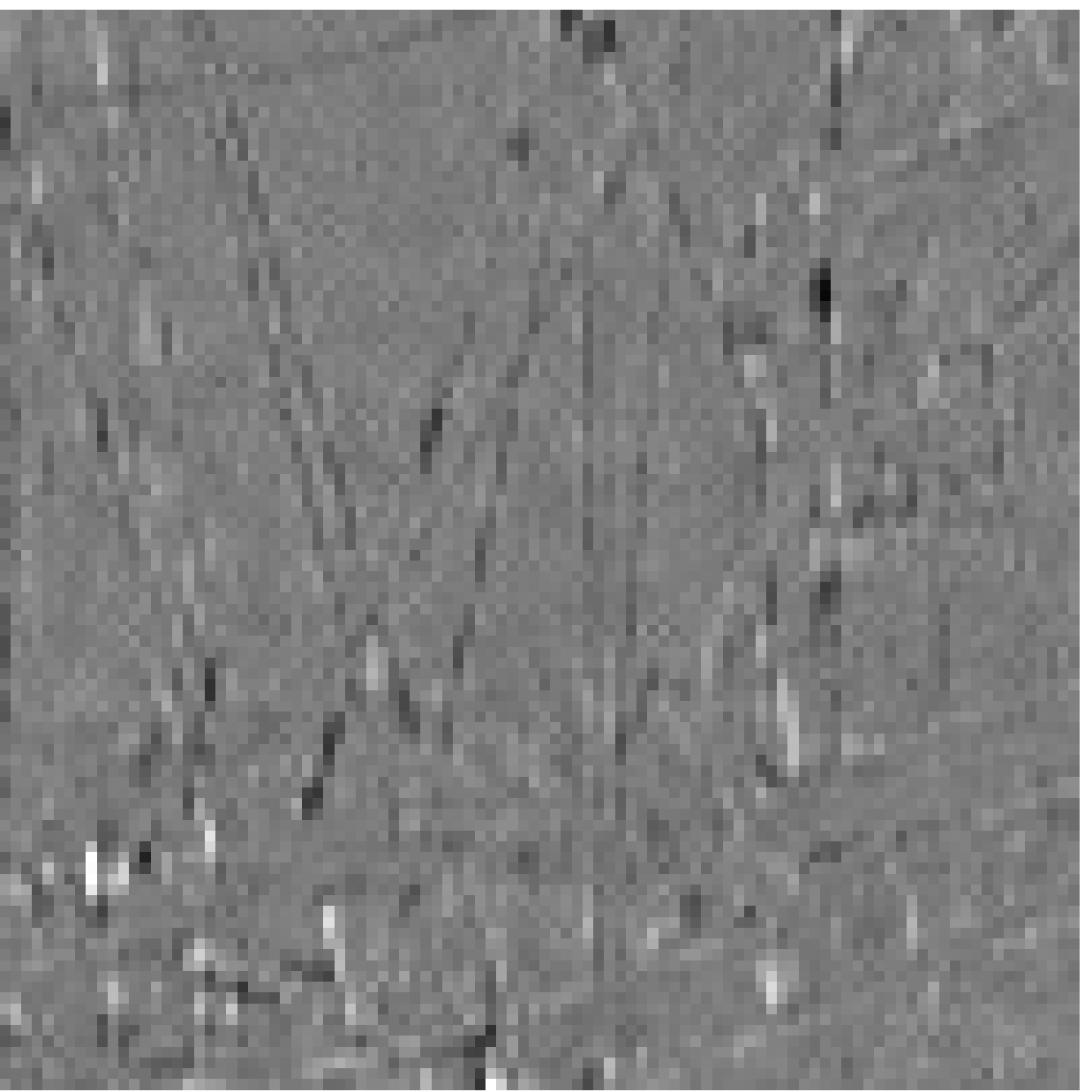}\includegraphics[width=0.17\linewidth]{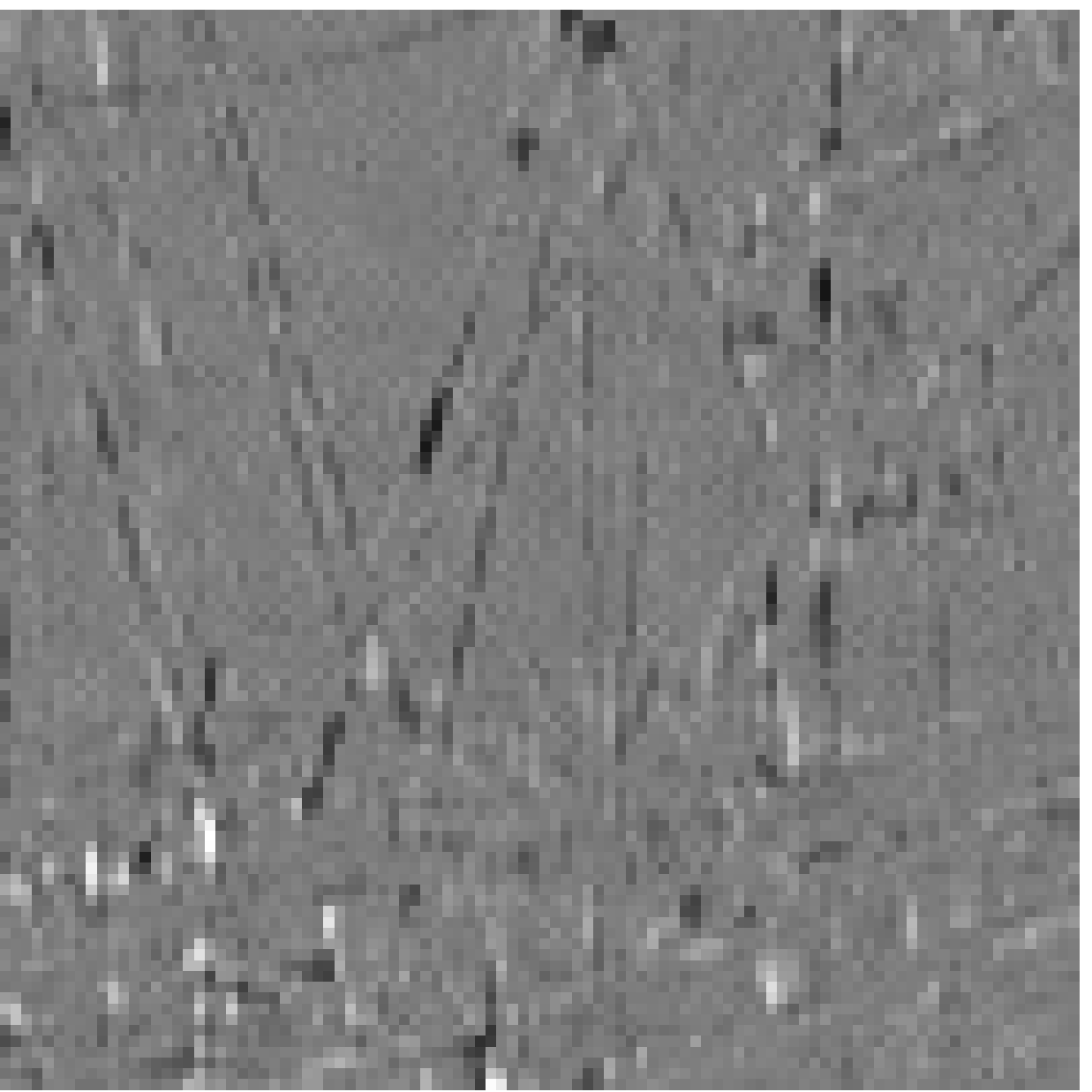}\includegraphics[width=0.17\linewidth]{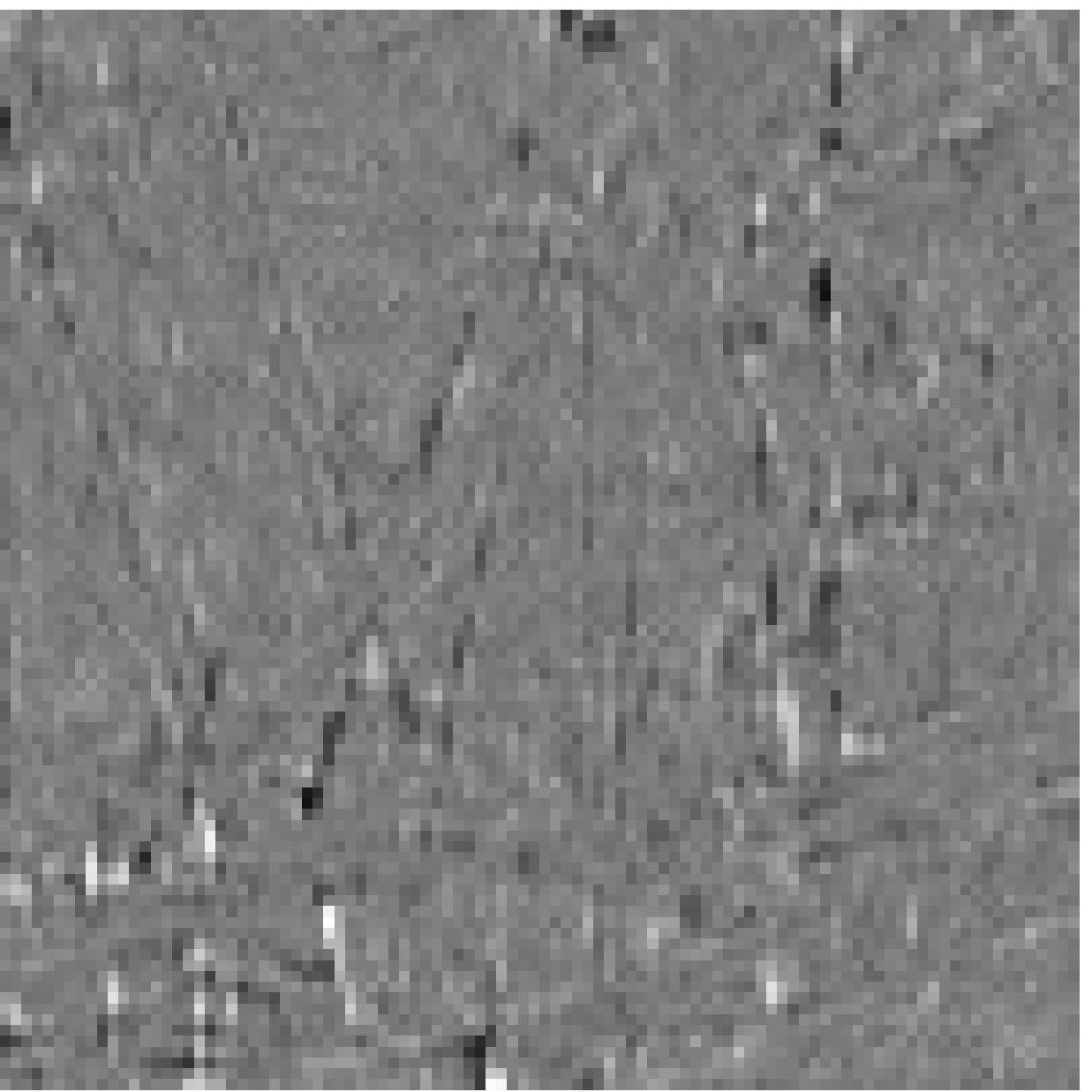}\\

\caption{\textbf{Synthetic data. Combined interpolation and denoising with 70\% of randomly missing pixels and additive Gaussian noise ($\sigma^2 = 10$)}. \textbf{Left to right:} (first row) Ground-truth (extract of barbara), result by \hpnlb{}, PLE, EPLL, E-PLE. (second row) input image, difference with respect to the ground-truth of each of the corresponding results. (third and fourth row) Idem for an extract of the boat image. See Table~\ref{tab:psnrInterp} for the \psnr{} results for the complete images. Please see the digital copy for better details reproduction.}
\label{fig:syntheticExpsInterpDeno1}
\end{figure*}

\paragraph{Denoising} For the denoising task the proposed approach should perform very similarly to the state-of-the-art denoising algorithm NLB~\cite{lebrun13}. The following experiments are conducted in order to verify this. 

The ground-truth images are corrupted with additive Gaussian noise with variance $\sigma^2=10, 30, 50,80$. The code provided by the authors~\cite{lebrun13IPOL} automatically sets the NLB parameters from the input $\sigma^2$ and the patch size, in this case $8 \times 8$. For this experiment, there are no unknown pixels to interpolate (the mask $\U$ is the identity matrix). 

The results of both methods are very similar if \hpnlb{} is initialized with the output of the first step of NLB~\cite{lebrun13} (instead of using the initialization described in Section~\ref{ssec:pasosAlgo}) and the parameters $\kappa$ and $\nu$ are large enough. In this case, $\mean_0$ and $\S_0$ are prioritized in equations~\eqref{eq:muhat} and~\eqref{eq:Lhat} and both algorithms are almost the same. That is what we observe in practice with $\alpha_H = \alpha_L = 100$, as demonstrated in the results summarized in Table~\ref{tab:psnrInterp}. The denoising performance of \hpnlb{} is degraded for small $\kappa$ and $\nu$ values. This is due to the fact that $\mean_0$ and $\S_0$, as well as $\mean$ and $\S$ in NLB, are computed from an oracle image resulting from the first restoration step. This restoration includes not only the denoising of each patch, but also an aggregation step that greatly improves the final result. Therefore, the contribution of the first term of~\eqref{eq:muhat} to the computation of $\hat{\mean}$ degrades the result compared to that of using $\mean_0$ only (i.e. using a large $\kappa$). 
\paragraph{Zooming} In order to evaluate the zooming capacity of the proposed approach, ground-truth images are downsampled by a factor 2 (no anti-aliasing filter is used) and the zooming is compared to the ground-truth. The results are compared with PLE, EPLL, E-PLE and Lanczos interpolation. Table~\ref{tab:psnrInterp} summarizes the obtained \psnr{} values.  Figure~\ref{fig:syntheticZooming1} shows extracts from the obtained results, the \psnr{} values for the extracts and the corresponding difference images with respect to the ground-truth. Again, \hpnlb{} yields a sharper reconstruction than the other methods. 
\begin{figure*}
\centering
\subfigure[Ground-truth]{\includegraphics[width=0.166\linewidth]{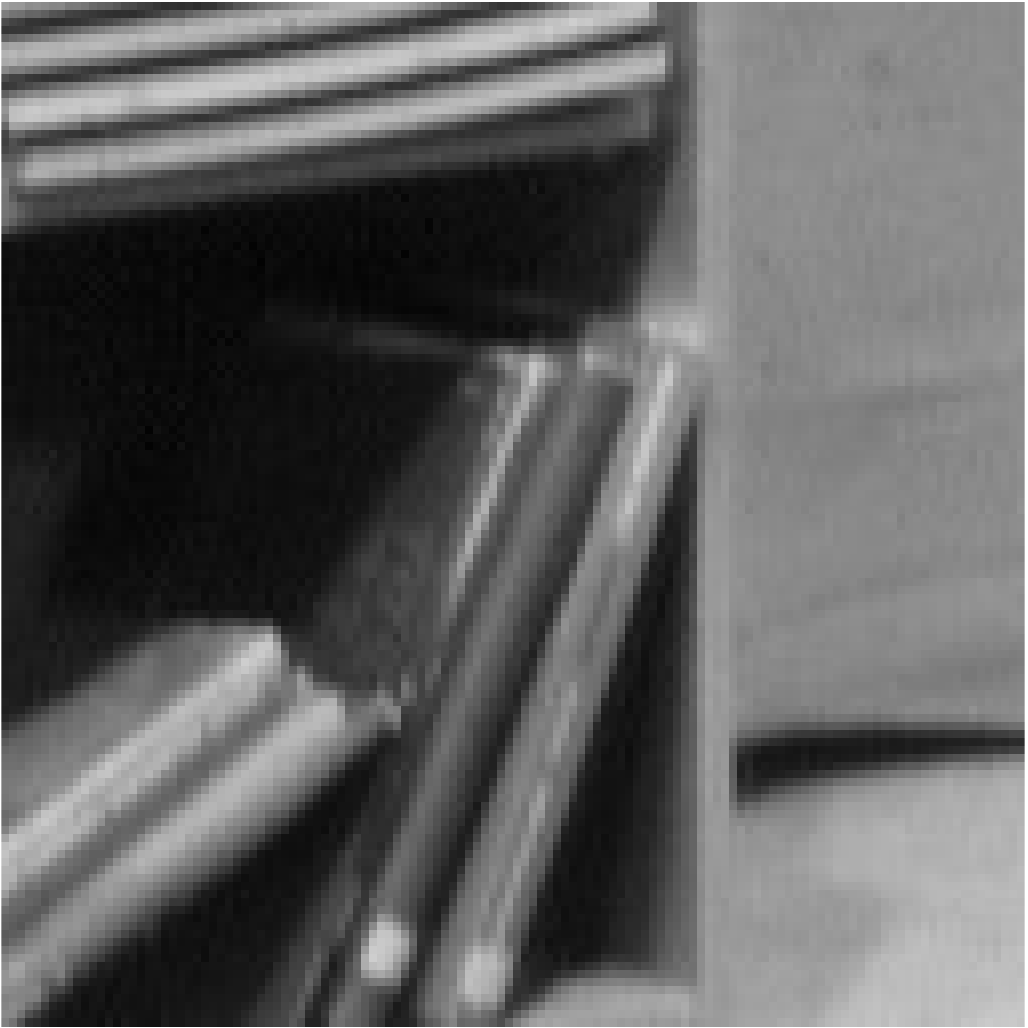}}\subfigure[\hpnlb{} (\textbf{38.17} dB)]{\includegraphics[width=0.166\linewidth]{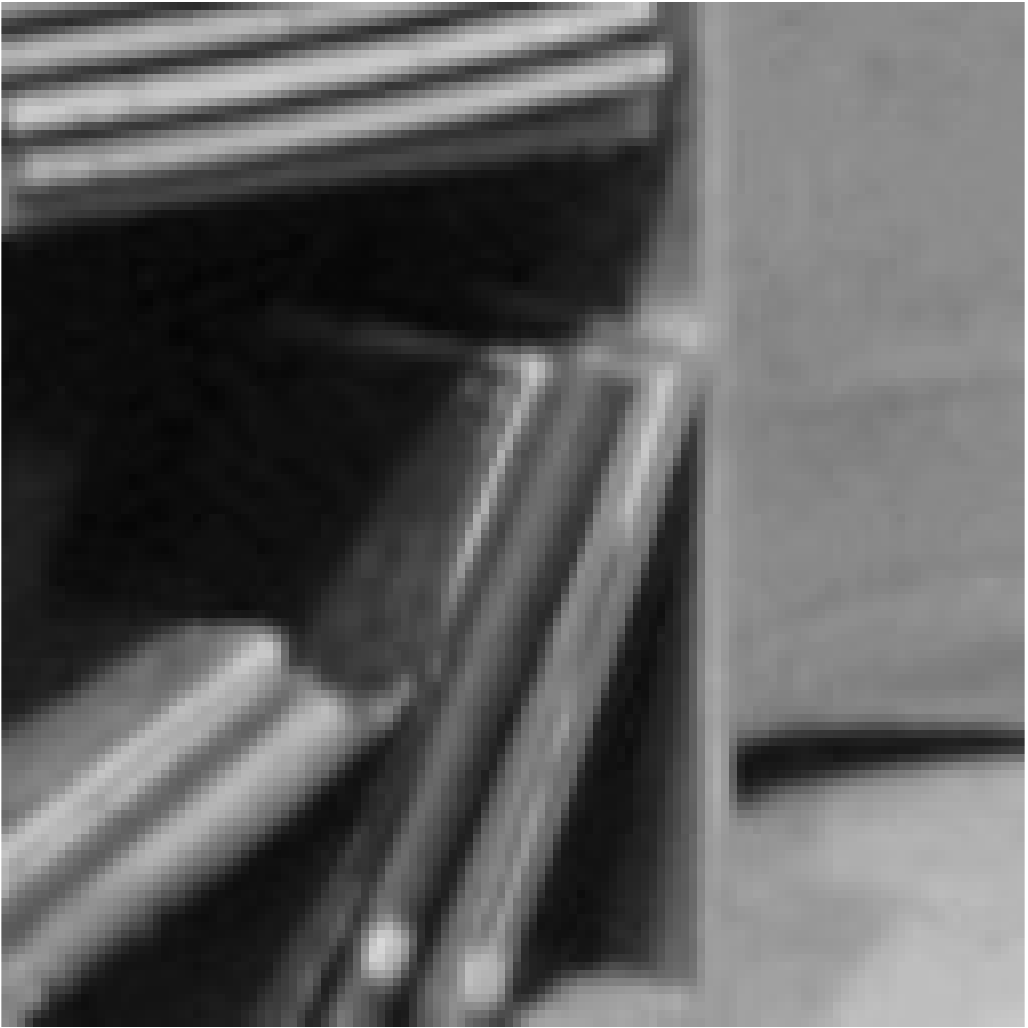}}\subfigure[PLE (37.11 dB)]{\includegraphics[width=0.166\linewidth]{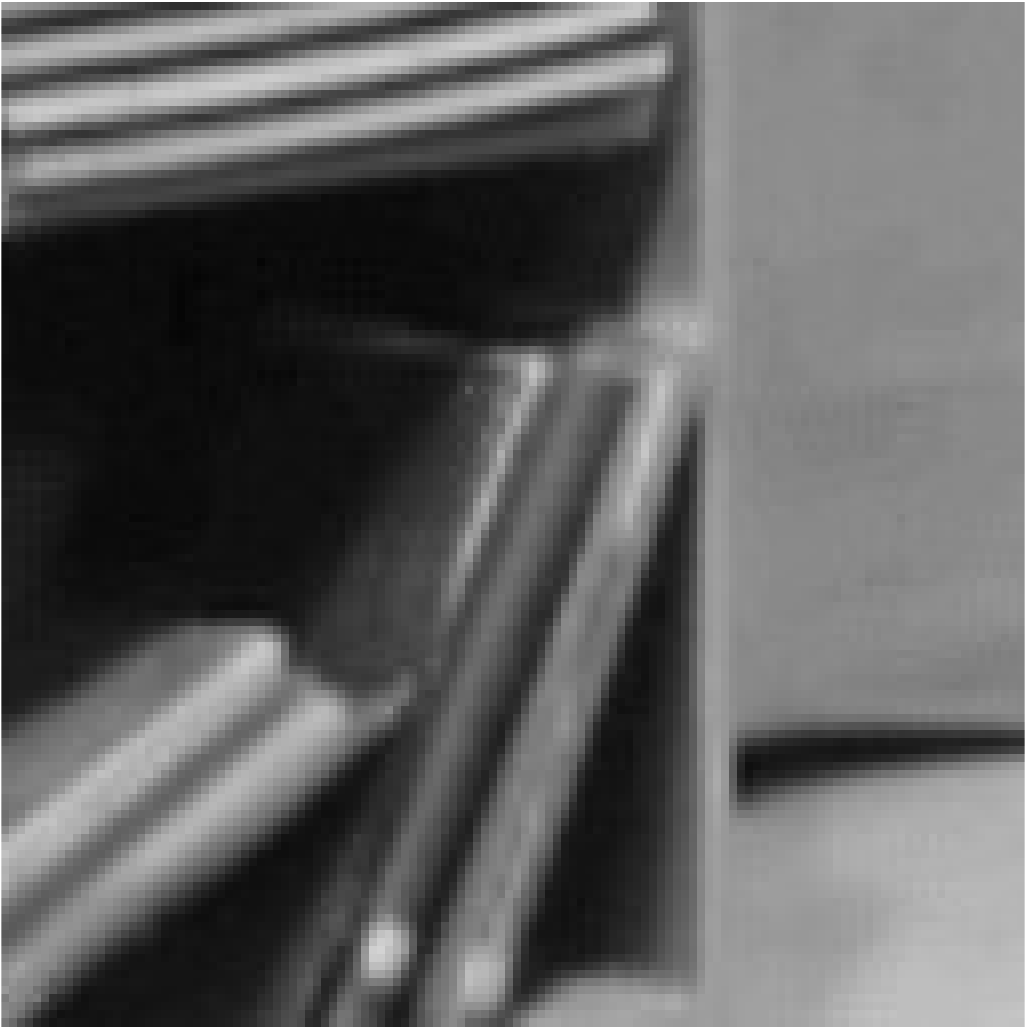}}\subfigure[EPLL (31.34 dB)]{\includegraphics[width=0.166\linewidth]{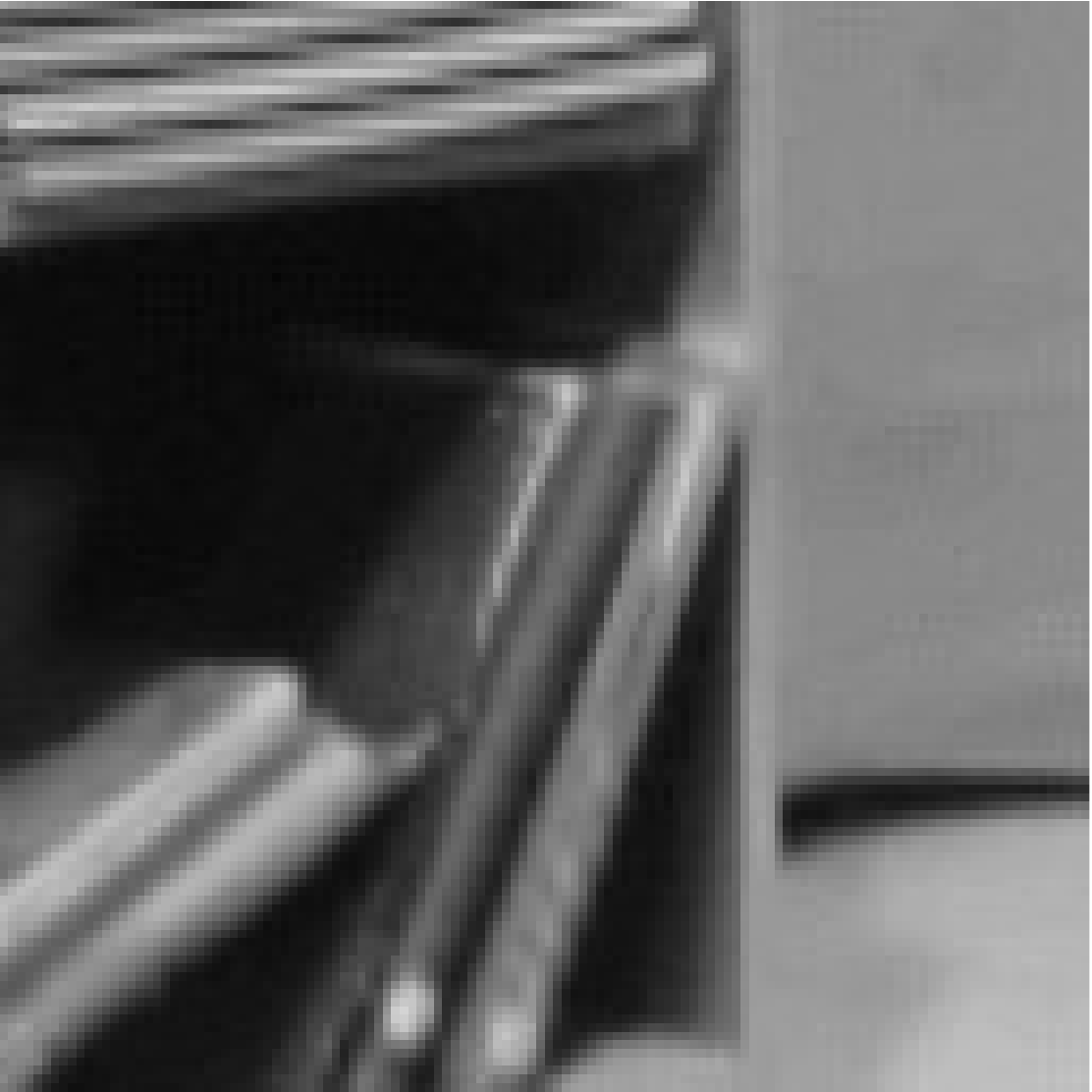}}\subfigure[E-PLE (36.51 dB)]{\includegraphics[width=0.166\linewidth]{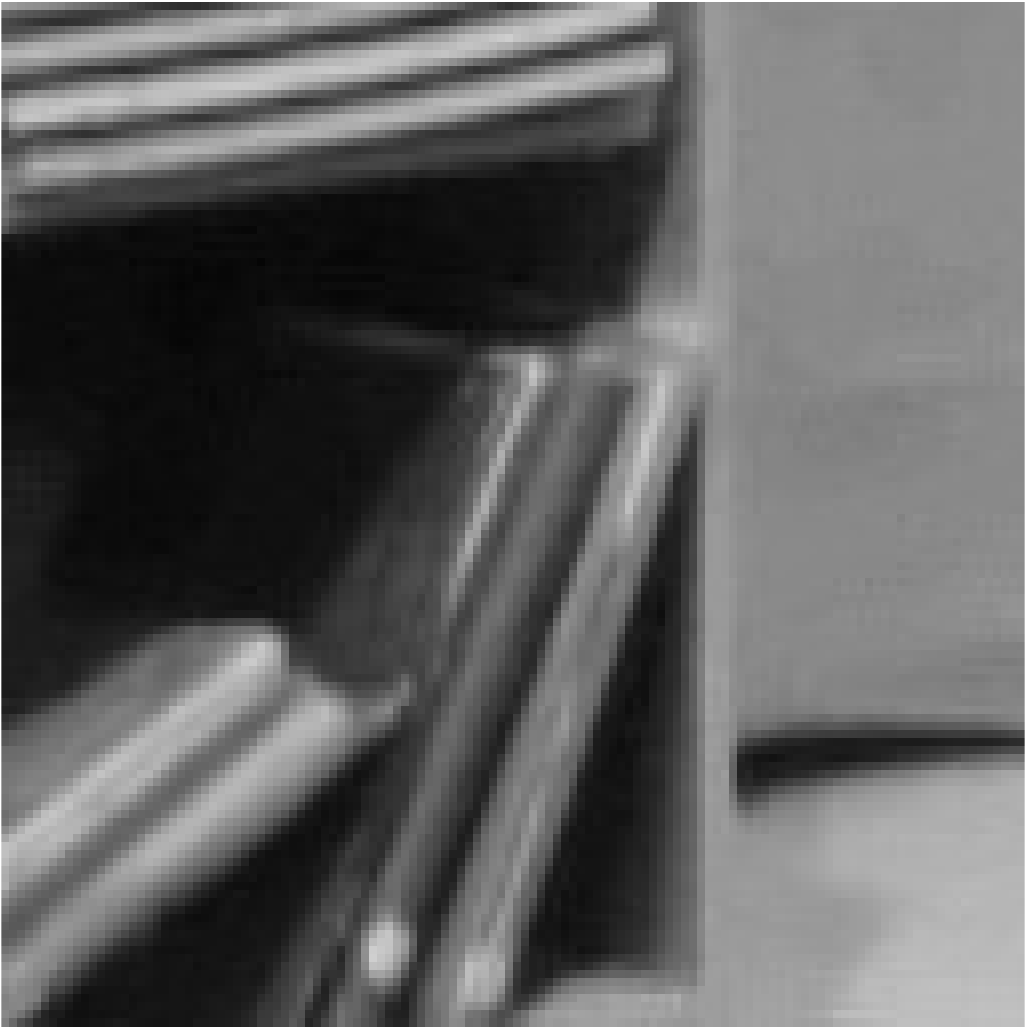}}\subfigure[Lanczos (28.01 dB)]{\includegraphics[width=0.166\linewidth]{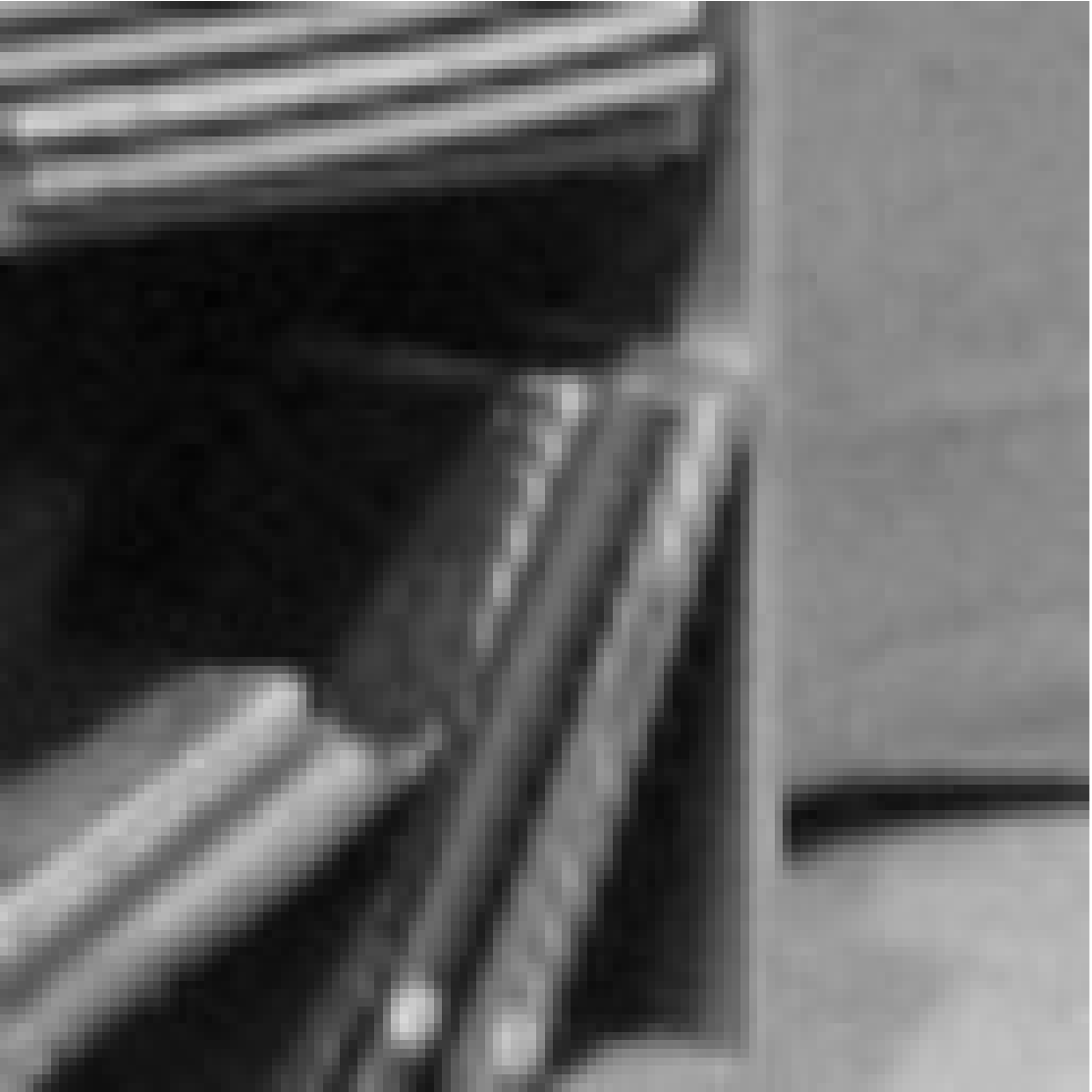}}

\includegraphics[width=0.166\linewidth]{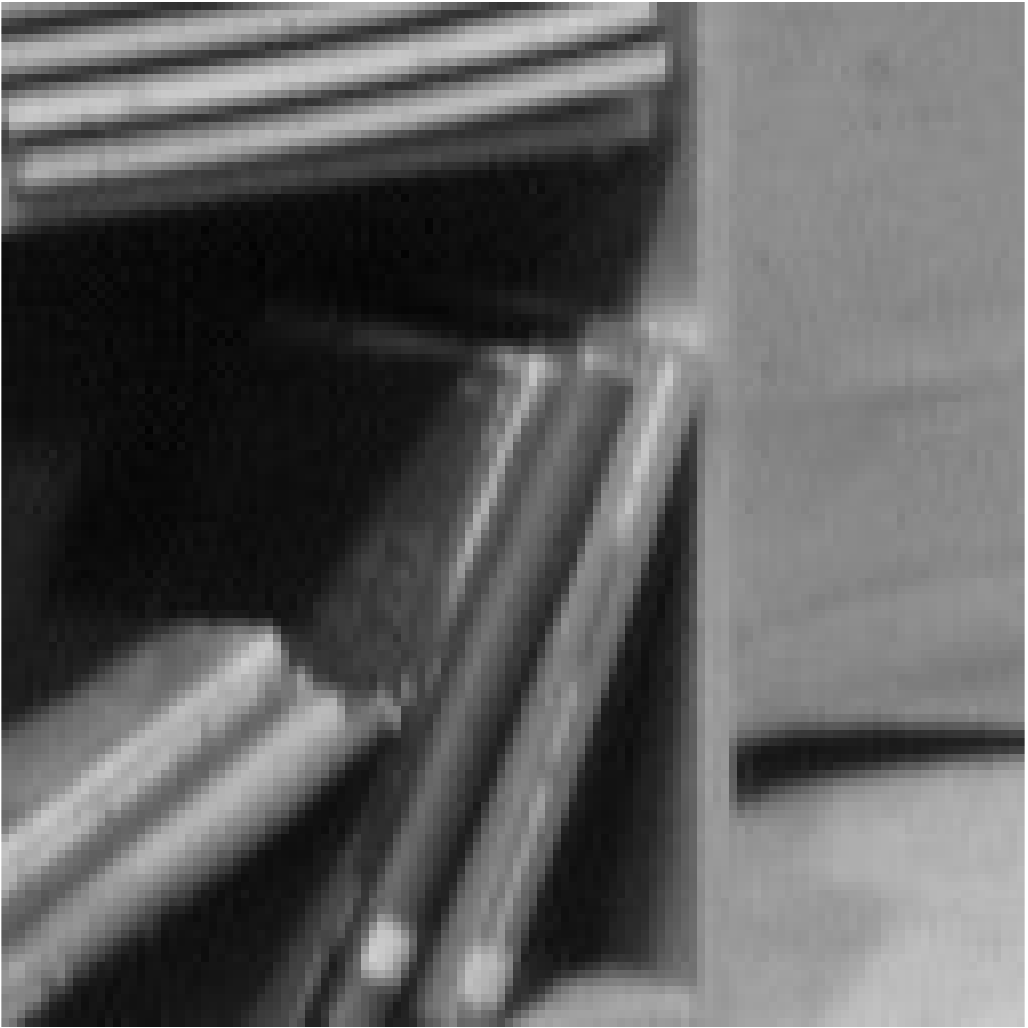}\includegraphics[width=0.166\linewidth]{{{nl_ple/barbara/zoom/aguer050}.pdf}}\includegraphics[width=0.166\linewidth]{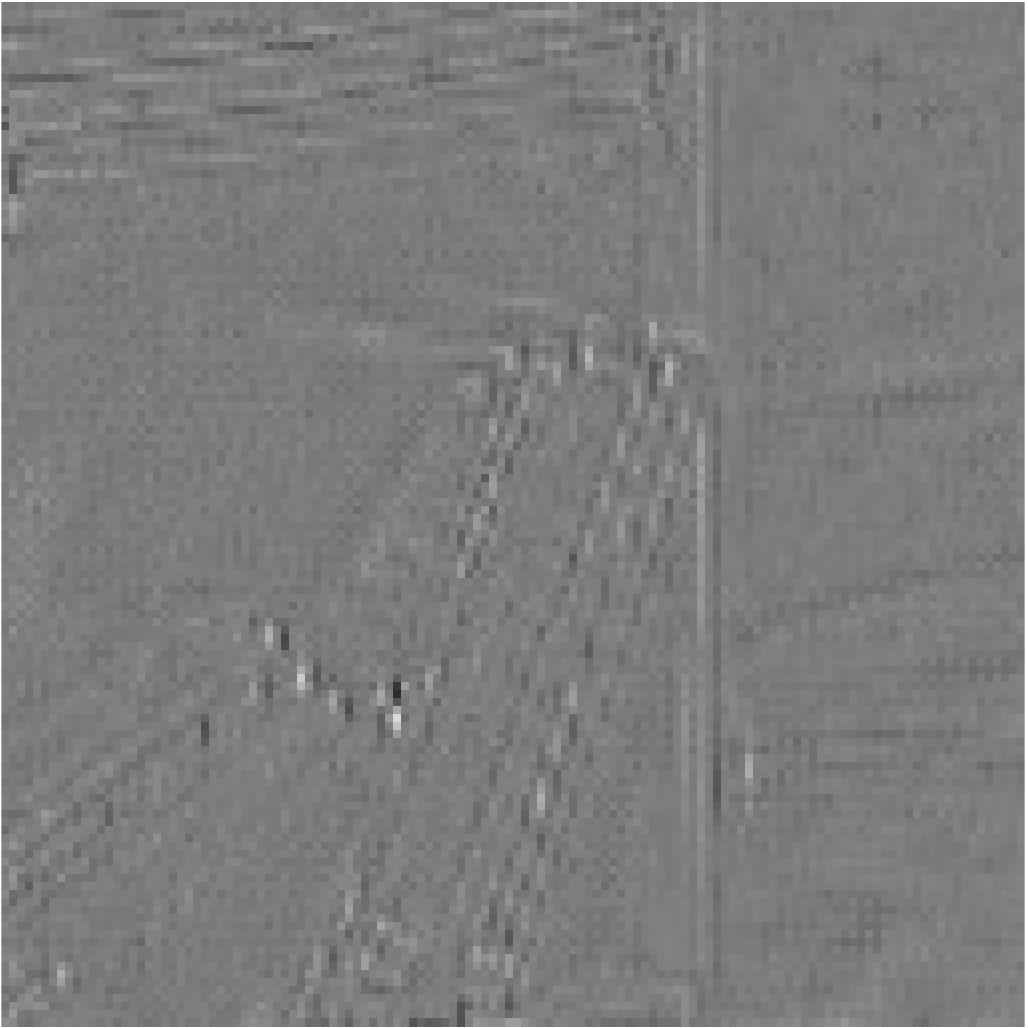}\includegraphics[width=0.166\linewidth]{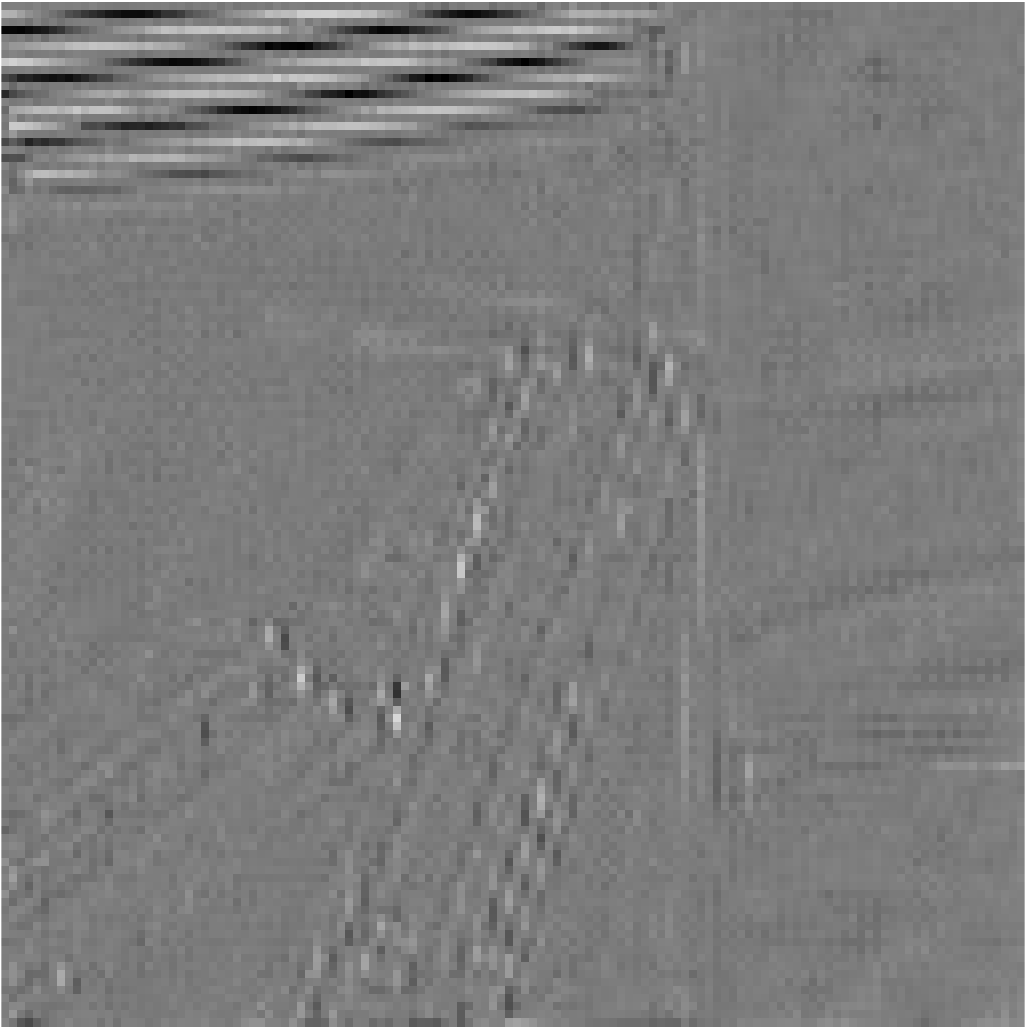}\includegraphics[width=0.166\linewidth]{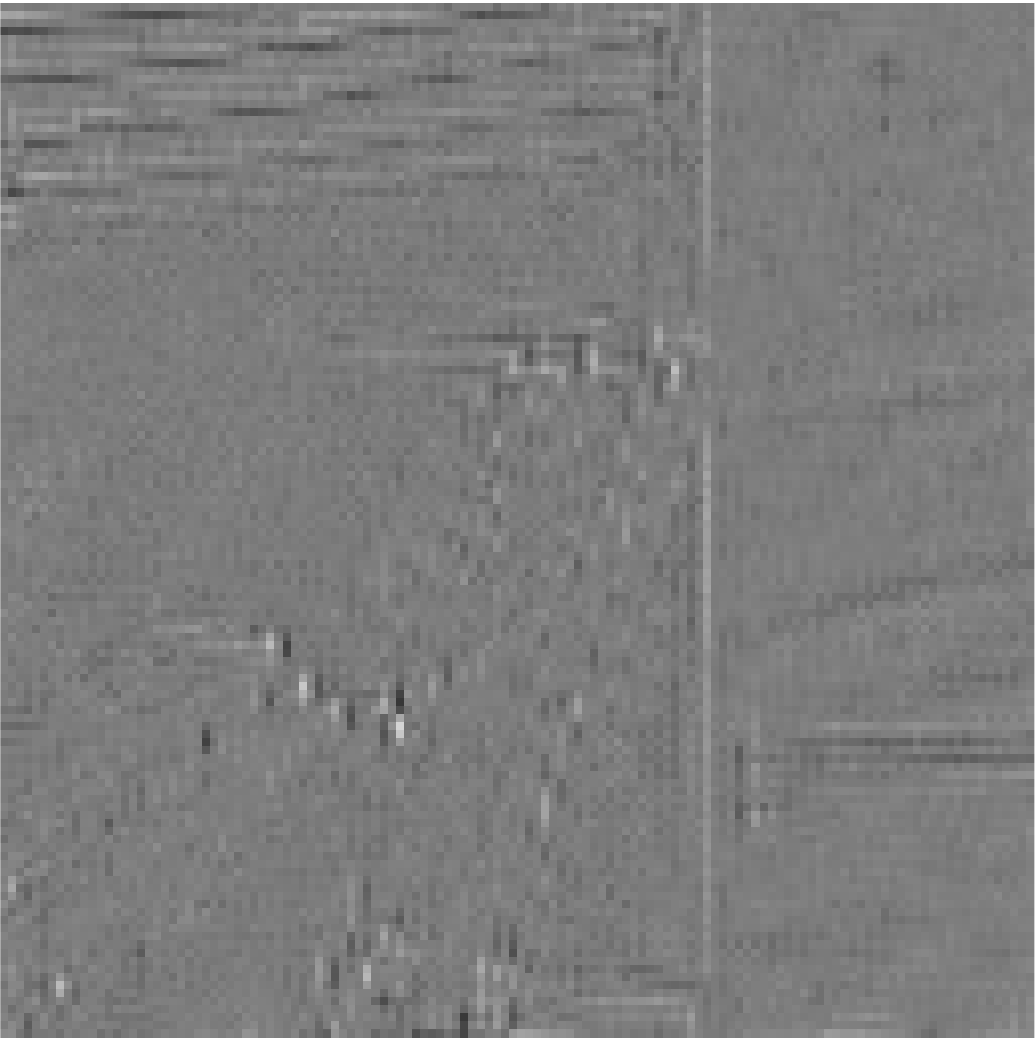}\includegraphics[width=0.166\linewidth]{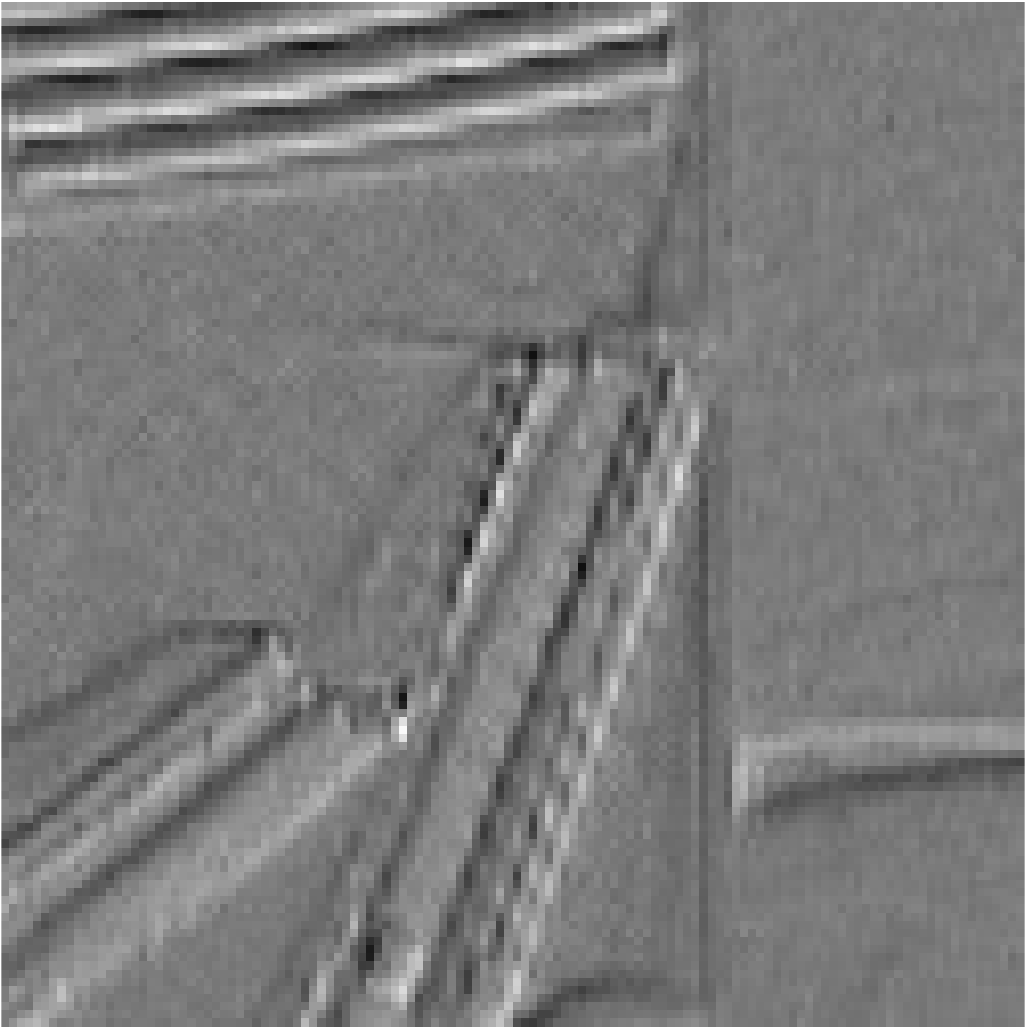}

\caption{\textbf{Synthetic data. Zooming $\times 2$. Left to right:} (first row) Ground-truth high resolution image (extract of barbara). Result by \hpnlb{}, PLE, EPLL, E-PLE, lanczos interpolation. (second row) Input low-resolution image, difference with respect to the ground-truth of each of the corresponding results. Please see the digital copy for better details reproduction.}
\label{fig:syntheticZooming1}
\end{figure*}

\subsection{Real data} 
\begin{figure}
\centering
\begin{minipage}[c]{.45\linewidth}
\begin{center}
\includegraphics[width=\linewidth]{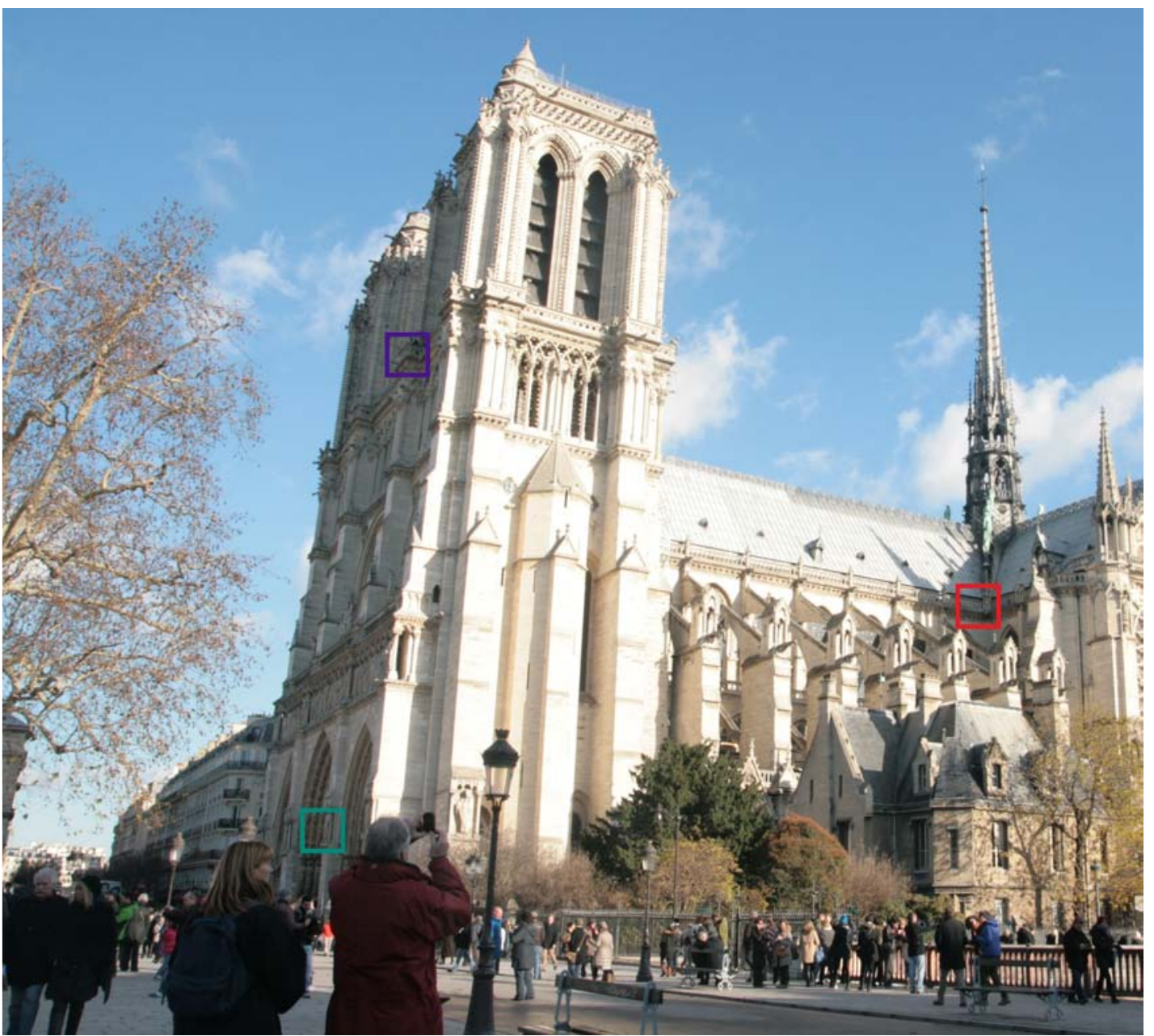}
\end{center}
\end{minipage}
\begin{minipage}[c]{.5\linewidth}
\includegraphics[width=.4\linewidth]{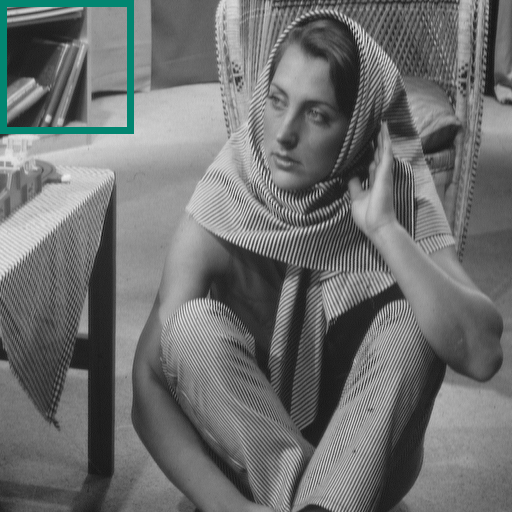}\hspace{.5pt}\includegraphics[width=.4\linewidth]{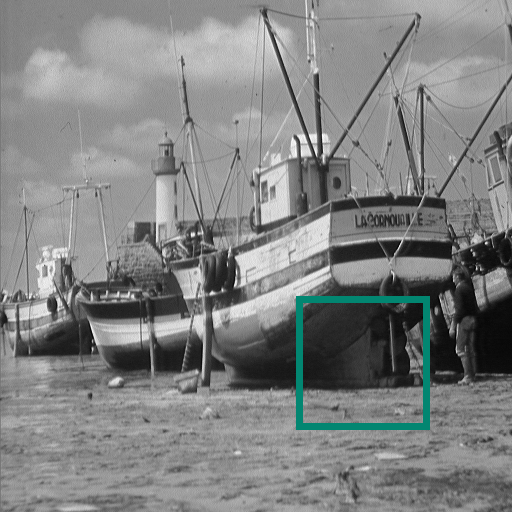}
\vspace{.5pt}

\includegraphics[width=.4\linewidth]{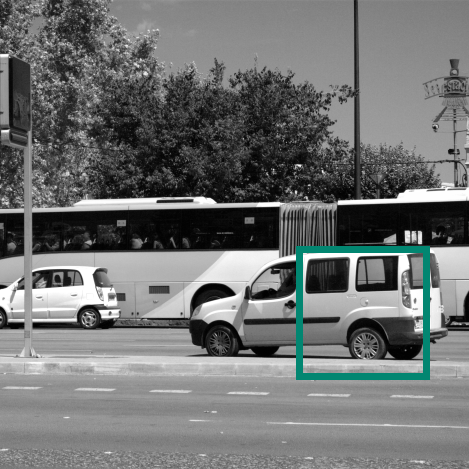}
\end{minipage}
\caption{\textbf{Left.} \textbf{Real data.} JPEG version of the raw image used in the experiments presented in Section~\ref{exps:realData}. The boxes show the extracts displayed in Figure~\ref{fig:realDataZooming}. \textbf{Right.} \textbf{Synthetic data.} Ground-truth images used in the experiments presented in Section~\ref{ssec:synthExps}. The green boxes indicate the extracts used for the zooming experiments.}
\label{fig:gtruthRealData}
\end{figure}
\label{exps:realData}
For this experiment, we use raw images captured with a Canon 400D camera (set to ISO 400 and exposure time 1/160 seconds). The main noise sources for CMOS sensors are: the Poisson photon shot noise, which can be approximated by a Gaussian distribution with equal mean and variance; the thermally generated readout noise, which is modeled as an additive Gaussian distributed noise and the spatially varying gain given by the photo response non uniformity (\prnu{})~\cite{aguerrebere12,aguerrebere13}. We thus consider the following noise model for the non saturated raw pixel value $\Zp(p)$ at position $p$ 
\begin{equation}
\Zp(p) \sim  \N(\a \gp_p \tau \Cp(p)  + \meanR, \a^2 \gp_p \tau \Cp(p)  + \sn),
\label{eq:modelZOrig}
\end{equation}
where $\a$ is the camera gain, $\gp_p$ models the \prnu{} factor, $\tau$ is the exposure time, $\Cp(p)$ is the irradiance reaching pixel $p$, $\meanR$ and $\sn$ are the readout noise mean and variance. The camera parameters have to be estimated by a calibration procedure~\cite{aguerrebere12}. The noise covariance matrix $\S_{N}$ is thus diagonal with entries that depend on the pixel value $(\S_{N})_p=  \a^2 \gp_p \tau \Cp(p)  + \sn$.

In order to evaluate the interpolation capacity of the proposed approach, we consider the pixels of the green channel only (i.e. 50\% of the pixels in the RGGB Bayer pattern) and interpolate the missing values. We compare the results to those obtained using an adaptation of PLE to images degraded with noise with variable variance (PLEV)~\cite{aguerrebere14ICCP}. The results for the EPLL and E-PLE methods are not presented here since these methods are not suited for this kind of noise. Figure~\ref{fig:realDataZooming} shows extracts of the obtained results (see Figure~\ref{fig:gtruthRealData} for a JPEG version of the raw image showing the location of the extracts). As it was already observed in the synthetic data experiments, fine details and edges are better preserved. Compare for example the reconstruction of the balcony edges and the wall structure in the first row of Figure~\ref{fig:realDataZooming}, as well as the structure of the roof and the railing in the second row of the same image.
\begin{figure*}
\centering
\fboxsep=0pt\fboxrule=1pt\fcolorbox{myViolet}{white}{\includegraphics[width=0.17\linewidth]{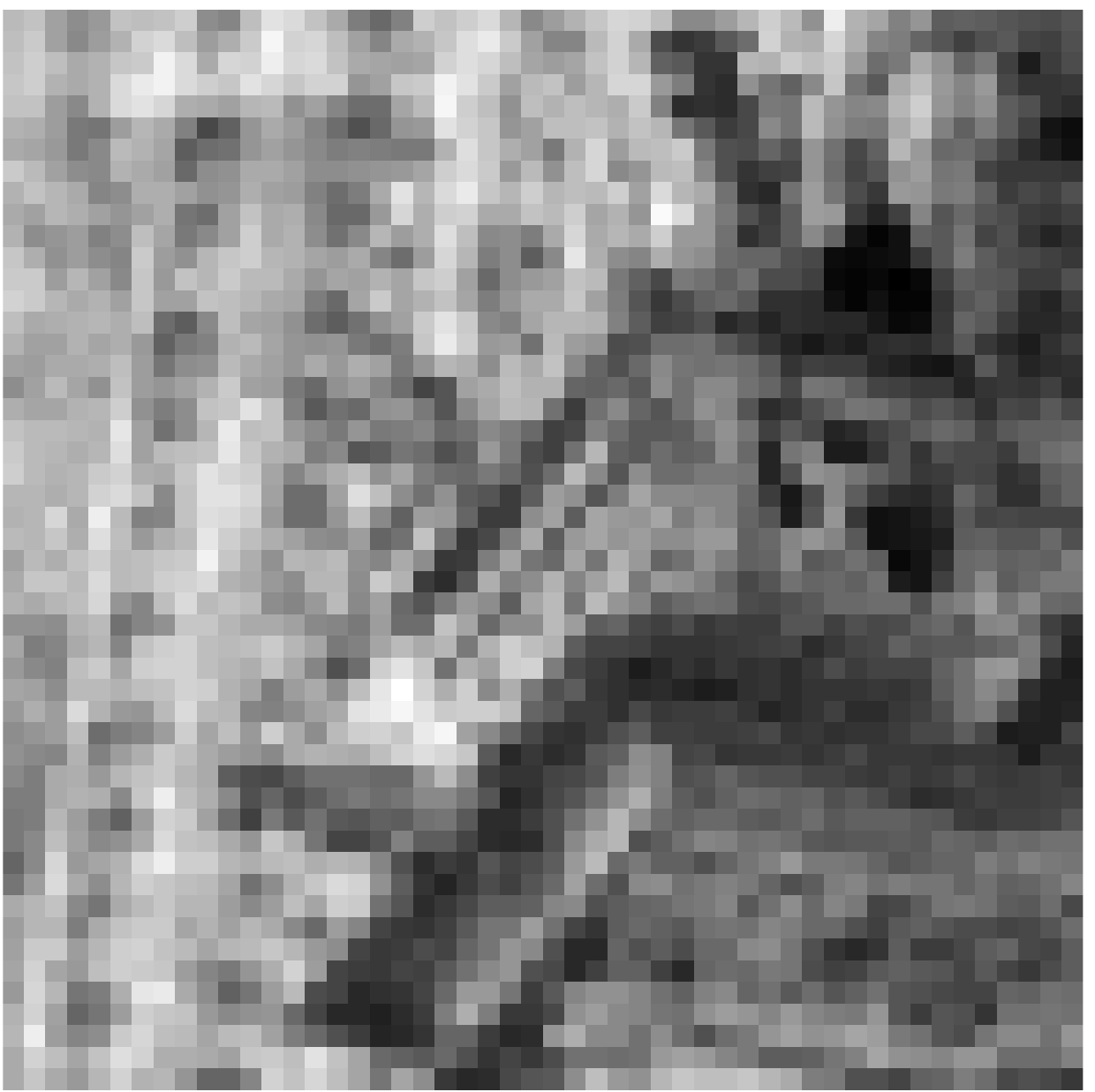}}\includegraphics[width=0.17\linewidth]{{{ndame/intDeno/ext_4/pdf/aguer060}.pdf}}\includegraphics[width=0.17\linewidth]{{{ndame/intDeno/ext_4/pdf/aguer061}.pdf}}\includegraphics[width=0.17\linewidth]{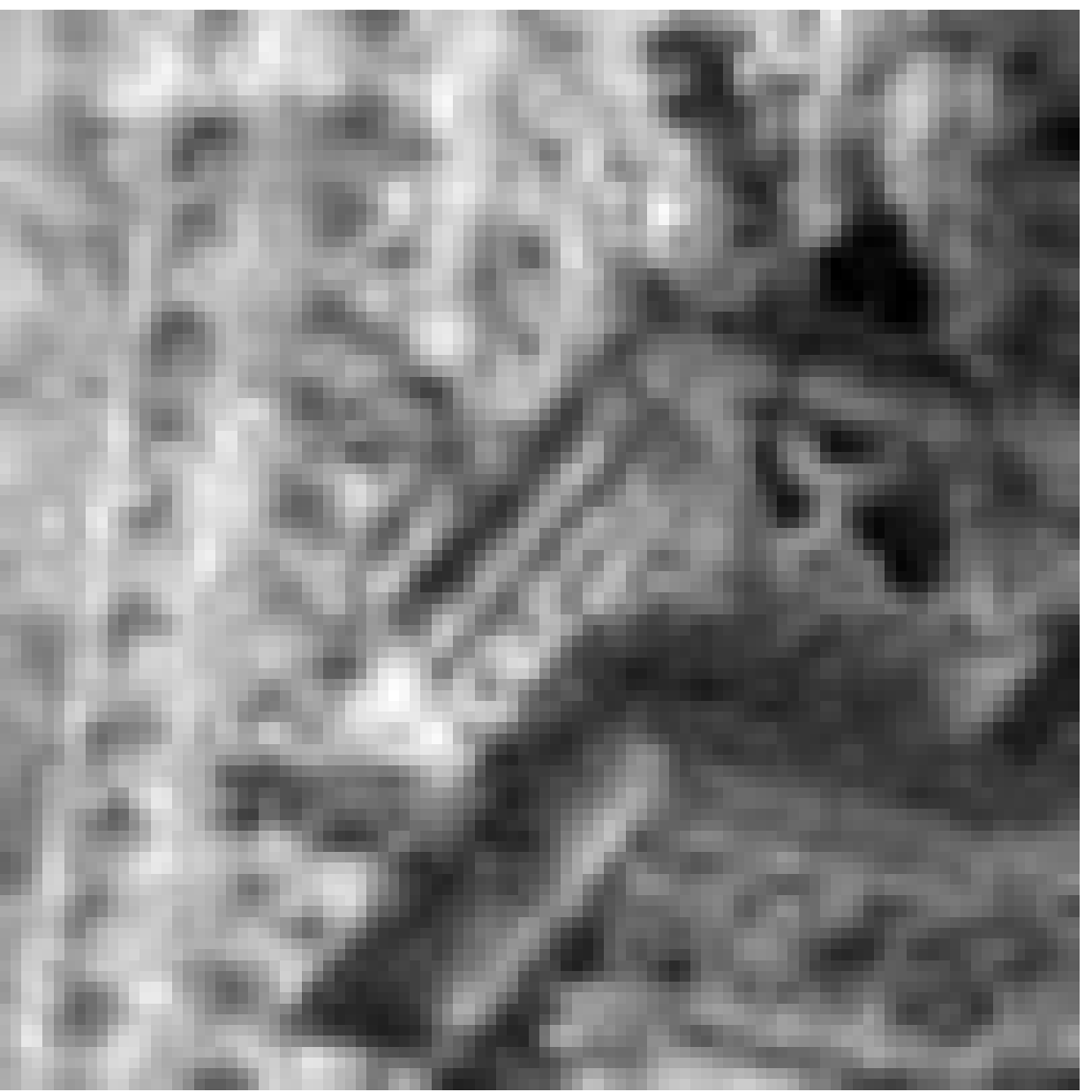}\includegraphics[width=0.17\linewidth]{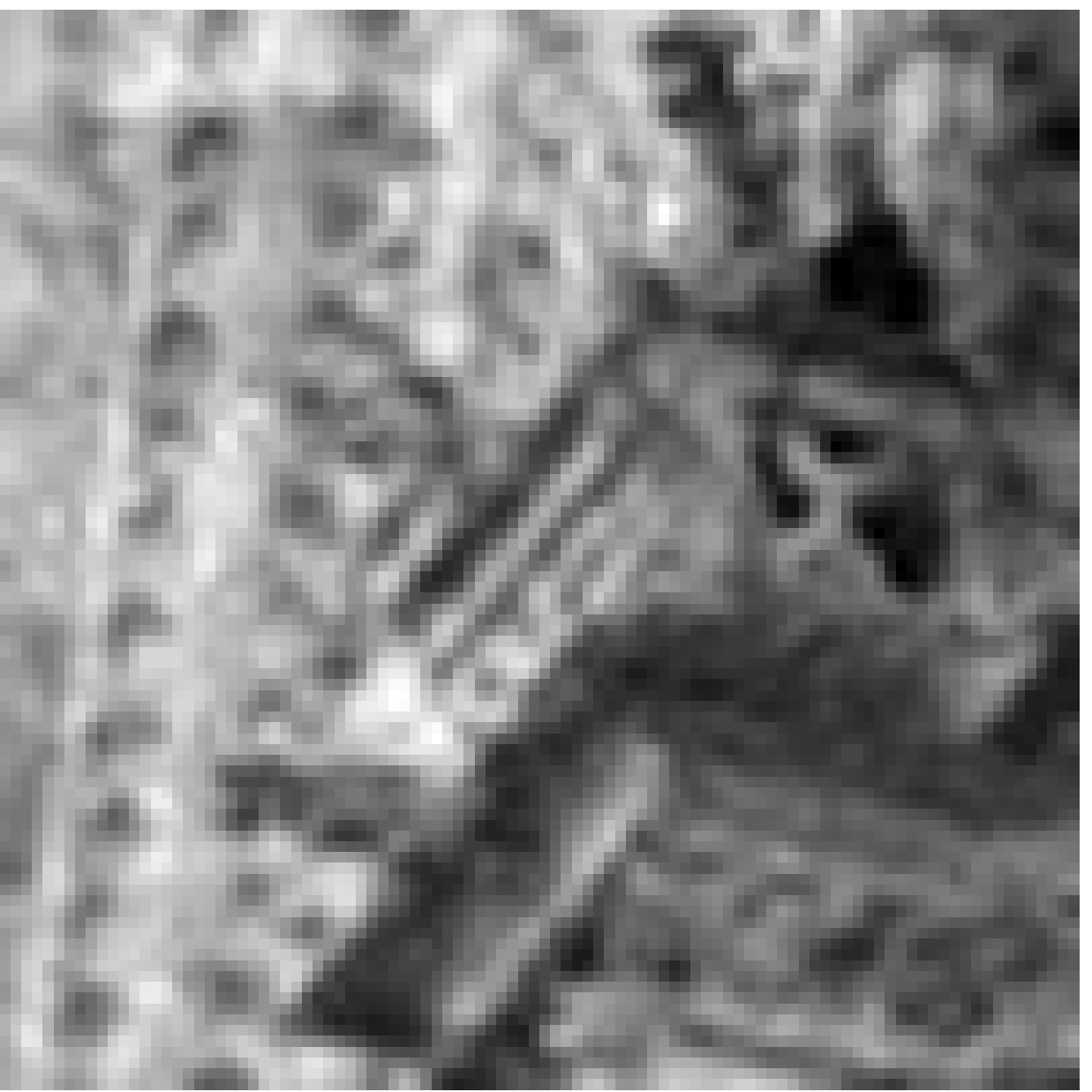} \\
\fboxsep=0pt\fboxrule=1pt\fcolorbox{myRed}{white}{\includegraphics[width=0.17\linewidth]{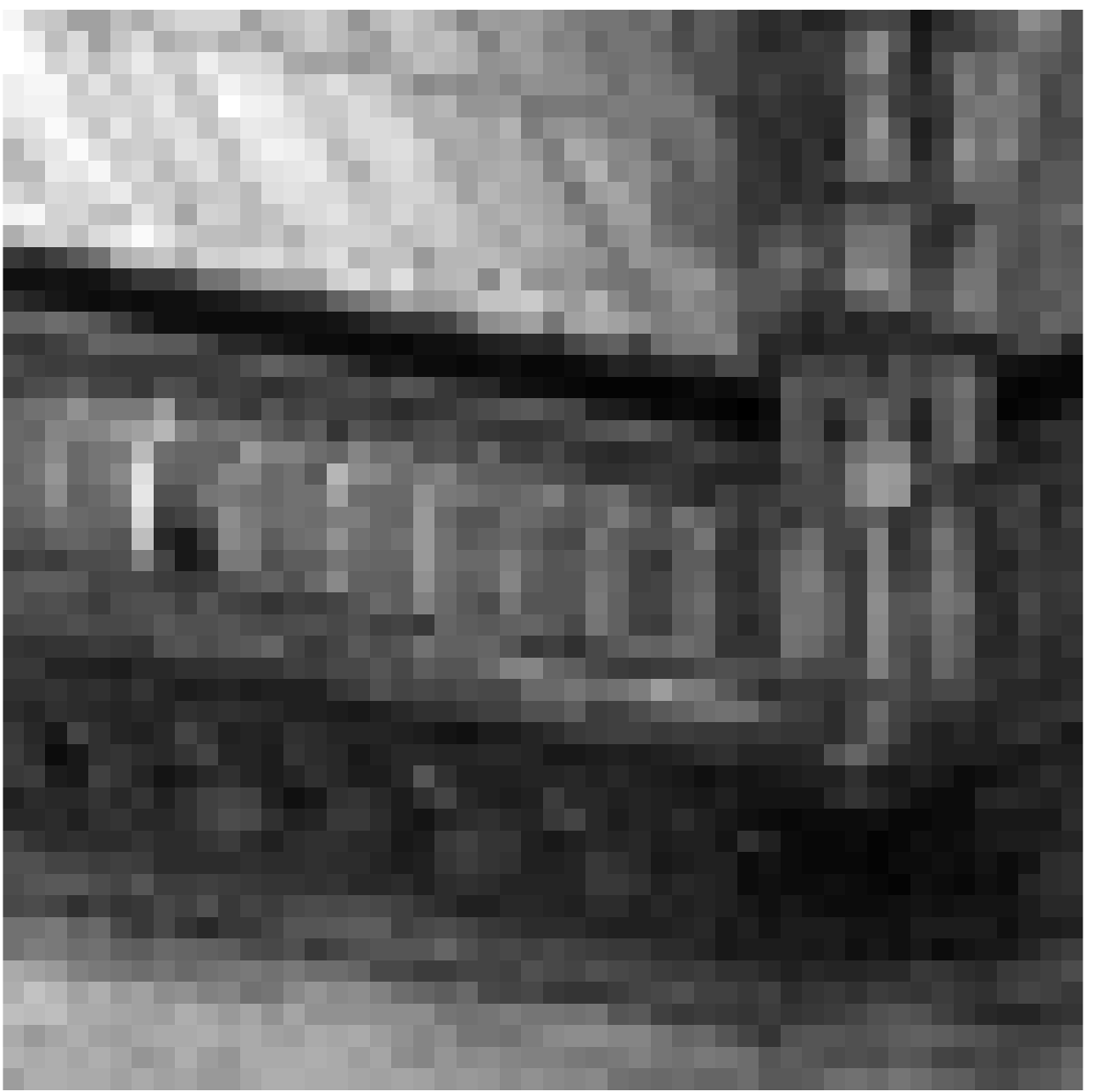}}\includegraphics[width=0.17\linewidth]{{{ndame/intDeno/ext_3/pdf/aguer065}.pdf}}\includegraphics[width=0.17\linewidth]{{{ndame/intDeno/ext_3/pdf/aguer066}.pdf}}\includegraphics[width=0.17\linewidth]{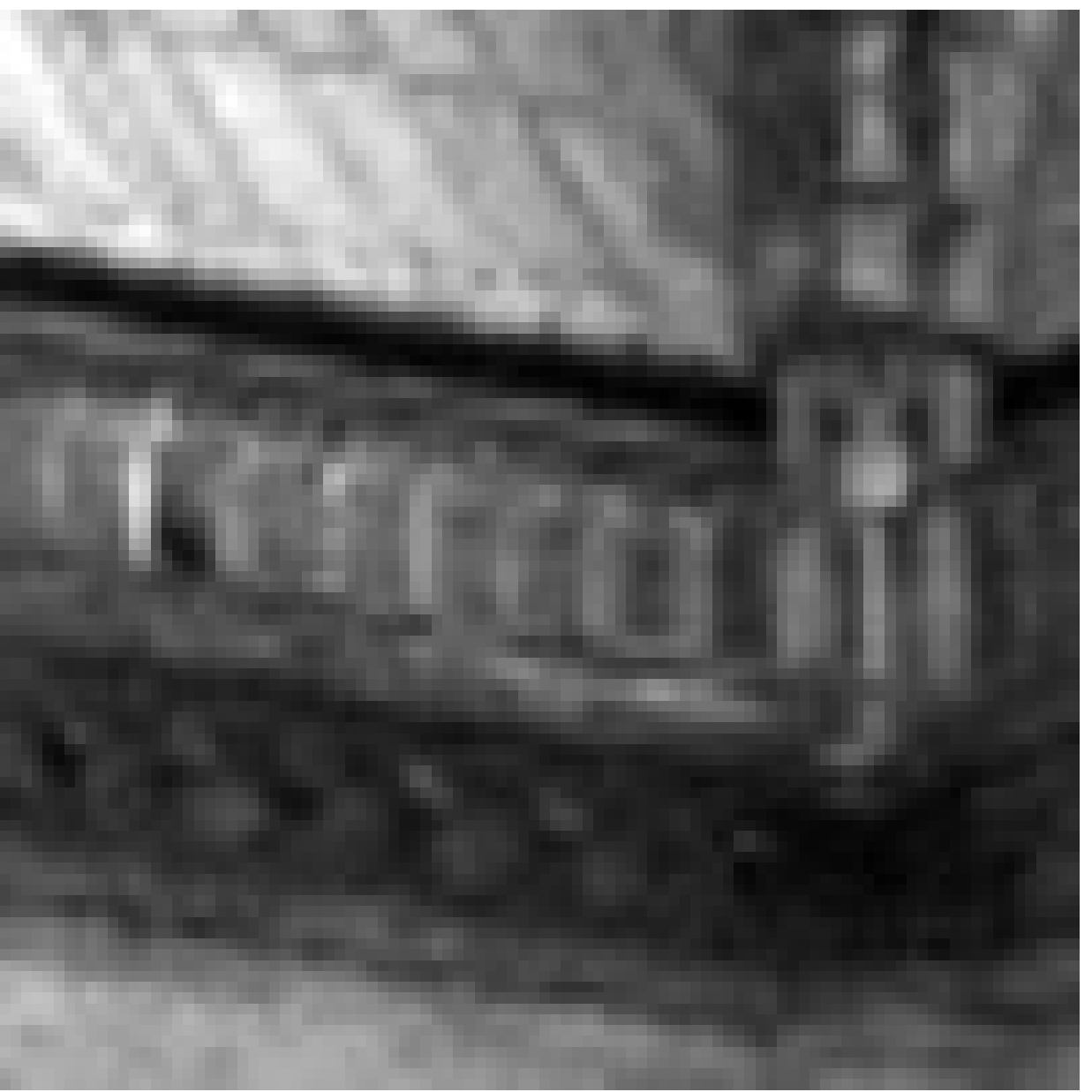}\includegraphics[width=0.17\linewidth]{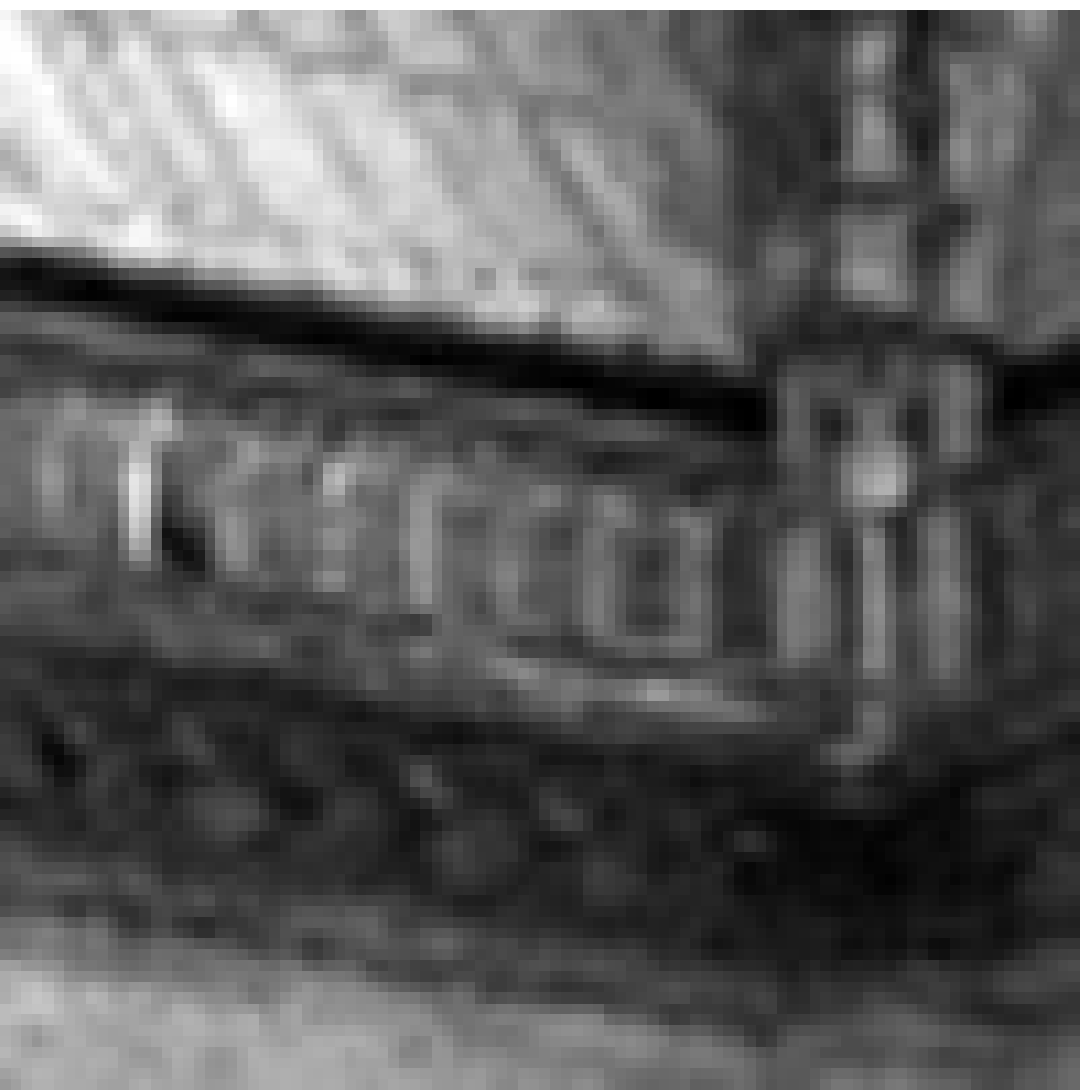} \\
\fboxsep=0pt\fboxrule=1pt\fcolorbox{myGreen}{white}{\includegraphics[width=0.17\linewidth]{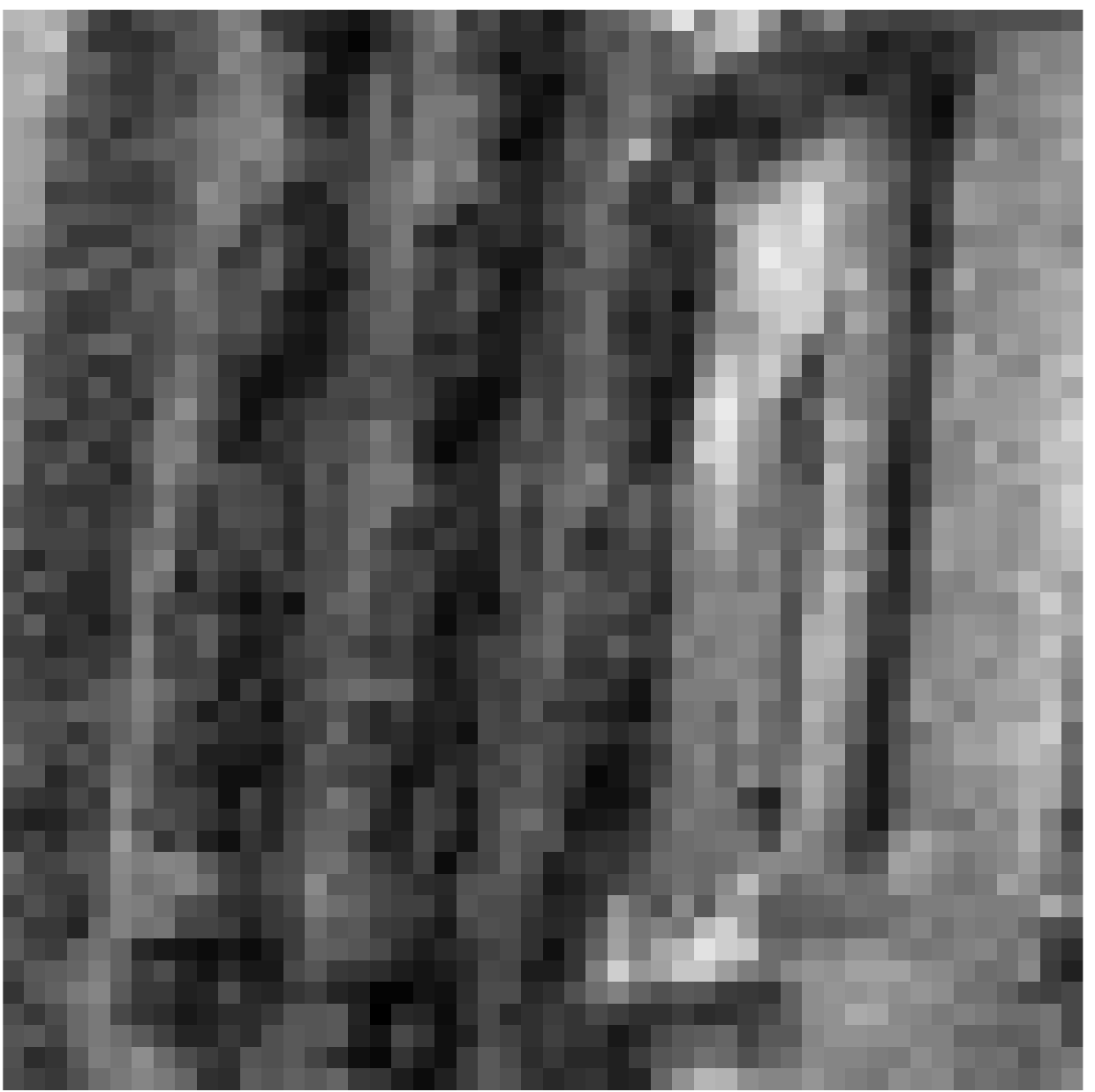}}\includegraphics[width=0.17\linewidth]{{{ndame/intDeno/ext_5/pdf/aguer070}.pdf}}\includegraphics[width=0.17\linewidth]{{{ndame/intDeno/ext_5/pdf/aguer071}.pdf}}\includegraphics[width=0.17\linewidth]{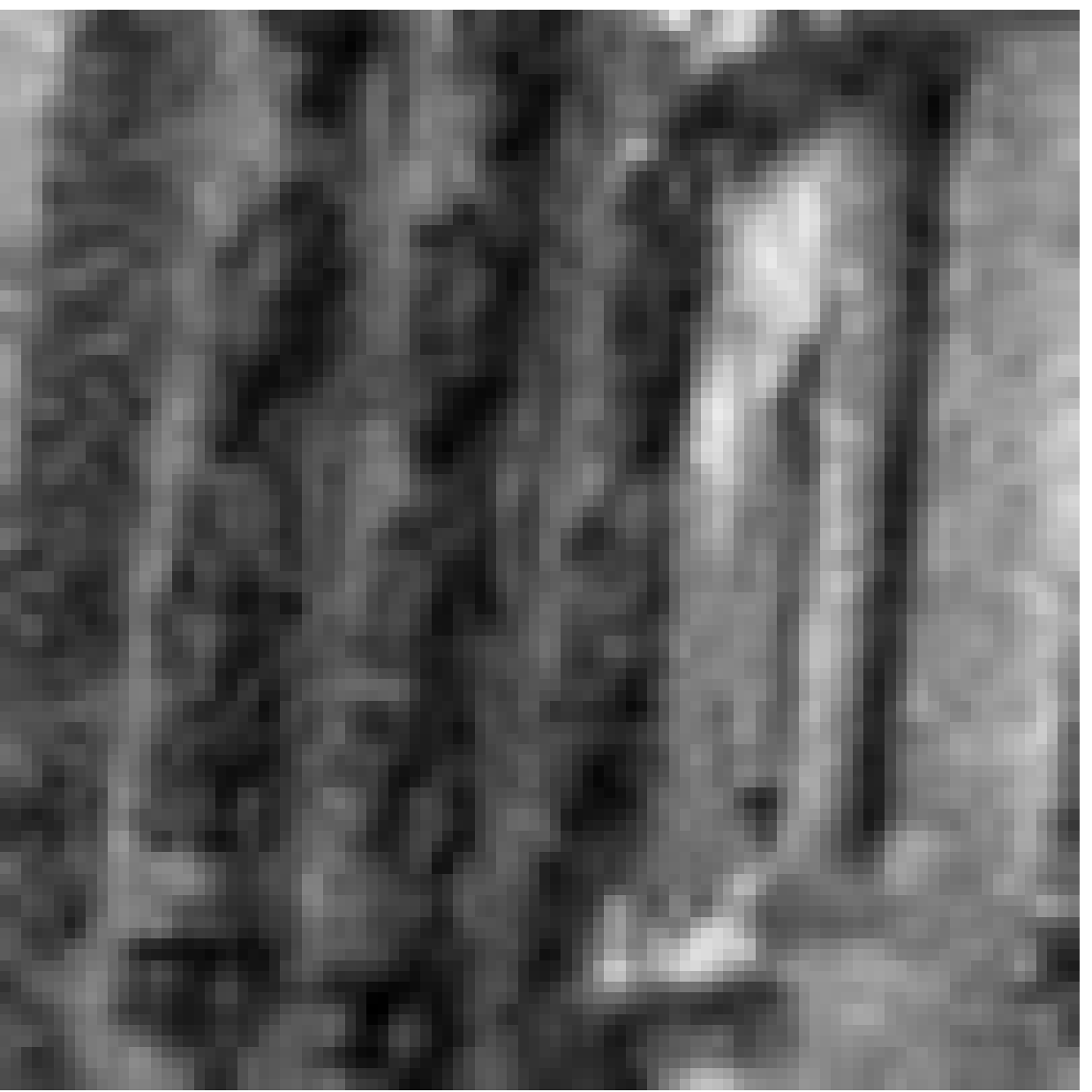}\includegraphics[width=0.17\linewidth]{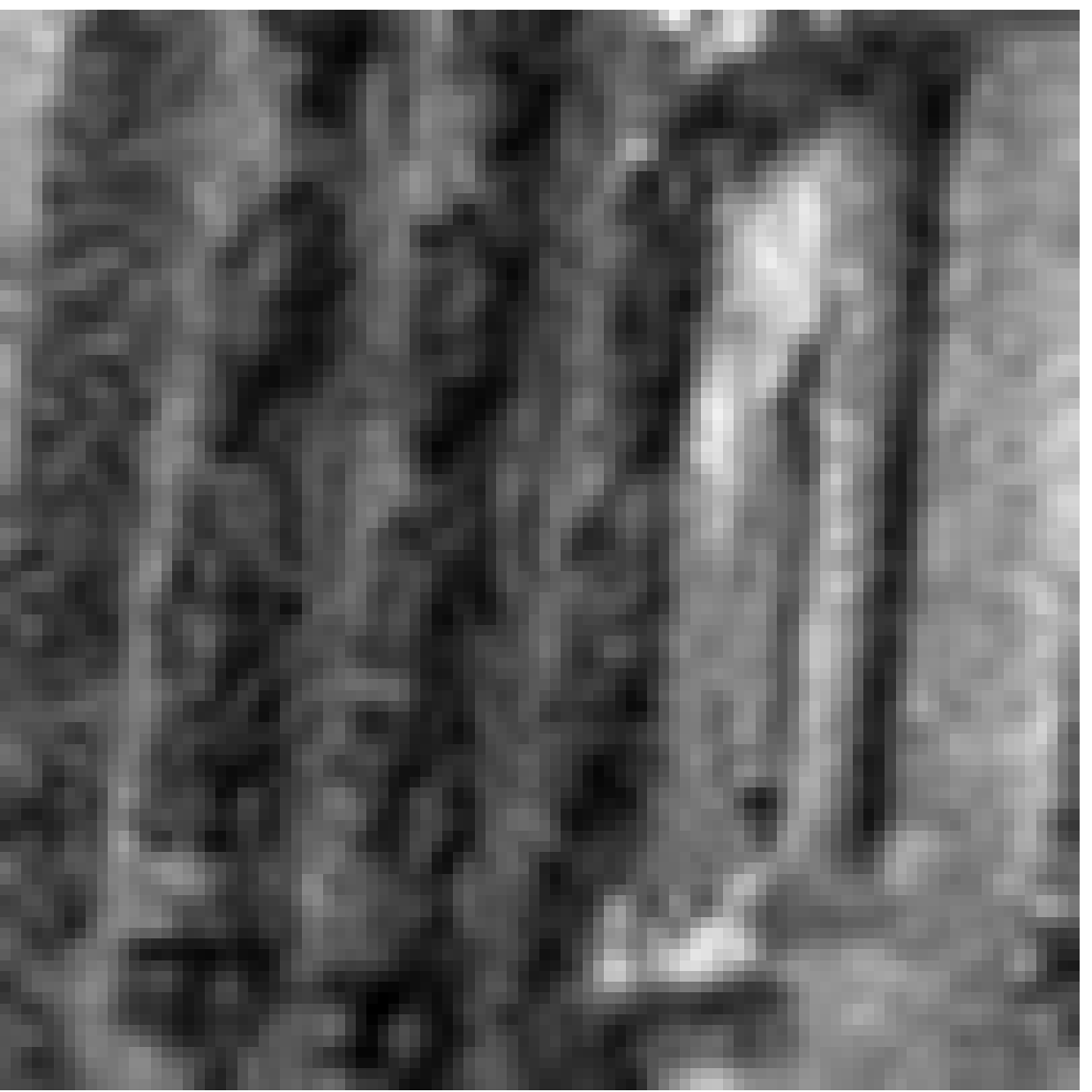}

\scriptsize{Ground-truth\hspace{55pt} \hpnlb{} \hspace{65pt} PLEV \hspace{60pt} Bicubic \hspace{60pt} Lanczos}
\caption{\textbf{Real data. Zooming $\times 2$.} Interpolation of the green channel of a raw image (RGGB). \textbf{Left to right:} Input low-resolution image, result by \hpnlb{}, PLEV~\cite{aguerrebere14ICCP}, bicubic and lanczos interpolation.}
\label{fig:realDataZooming}
\end{figure*}

\subsection{Discussion}

In all the previous experiments, the results obtained with \hpnlb{} outperform or are very close to those obtained by the other evaluated methods. Visually, details are better reconstructed and images are sharper. 

We interpret this as the inability of a fixed set of patch classes (19 for PLE) to accurately represent patches that seldom appear in the image, such as edges or textures (as in Barbara's trouser). The fact that methods such as PLE are actually semi-local (classes are estimated on $128 \times 128$ regions~\cite{yu12}) does not solve this issue. On the contrary, a local model estimation as the one performed by \hpnlb{} correctly handles those cases. 

The performance difference is much more remarkable for the higher masking rates. In such cases, the robustness of the estimation is crucial. Indeed the proposed method performs the estimation from similar patches in a local window. The hypothesis of self-similarity being reinforced by considering local neighbourhoods, such a strategy restricts the possible models, therefore making the estimation more robust. Furthermore, the local model estimation, previously shown to be successful at describing patches~\cite{lebrun13}, gives a better reconstruction even when a very large part of the patch is missing.

EPLL uses more mixture components (200 components are learnt from $2 \times 10^6$ patches of natural images~\cite{zoran11}) in its GMM model than PLE. It was observed in~\cite{wang13} that this strategy is less efficient than PLE for the denoising task. In this work, we also observe that the proposed approach outperforms EPLL, not only in denoising, but also in inpainting and zooming. However, here it is  harder to tell if the improvement is due to the local model estimation performed from similar patches or to the different restoration strategies followed by these methods.

\section{High dynamic range imaging from a single snapshot}
\label{sec:HDR}
In this section, we apply the proposed framework to generate HDR images from a single shot. HDR imaging aims at reproducing an extended dynamic range of luminosity compared to what can be captured using a standard digital camera, which is often not enough to produce an accurate representation of real scenes. In the case of a static scene and a static camera, the combination of multiple images with different exposure levels is a simple and efficient solution~\cite{debevec97,granados10,aguerrebere13}. However, several problems arise when either the camera or the elements in the scene move~\cite{aguerrebere13ICCP,sidibe2009ghost}. 

An alternative to the HDR from multiple frames is to use a single image with spatially varying pixel exposures (SVE)~\cite{nayar00}. An optical mask with spatially varying transmittance is placed adjacent to a conventional image sensor, thus controlling the amount of light that reaches each pixel (see Figure~\ref{fig:HDR_synth})~\cite{nayar00,yasuma10,schoberl13}.  

The greatest advantage of this acquisition method is that it avoids the need for image alignment and motion estimation. Another advantage is that the saturated pixels are not organized in large regions. Indeed, some recent multi-image methods tackle motion problems by taking a reference image and then by estimating motion or reconstructing the image relative to this reference~\cite{sen12,aguerrebere13ICCP}. A problem encountered by these approaches is the need to inpaint very large saturated and underexposed regions in the reference frame. The SVE acquisition strategy avoids this problem since, in general, all scene regions are sampled by at least one of the exposures.

Taking advantage of the ability of the proposed framework to simultaneously estimate missing pixels and denoise well-exposed ones, we propose a novel approach to generate HDR images from a single shot acquired with spatially varying pixel exposures. The proposed approach shows significant improvements over existing methods.

\subsection{Spatially varying exposure acquisition model}
\label{sec:model}
 
An optical mask with spatially varying transmittance is placed adjacent to a conventional image sensor to give different exposure levels to the pixels. This optical mask does not change the acquisition process of the sensor. Hence, the noise model~\eqref{eq:modelZOrig} can be adapted to the SVE acquisition by including the per-pixel SVE gain $\fa_p$\footnote{Some noise sources not modeled here, such as blooming, might have an impact in the SVE acquisition strategy and should be considered in a more accurate image modeling.}:
\begin{equation}
\Zp(p) \sim  \N(\a \fa_p \gp_p \tau \Cp(p)  + \meanR, \a^2 \fa_p \gp_p \tau \Cp(p)  + \sn).
\label{eq:modelZ}
\end{equation}

In the approach proposed by Nayar and Mitsunaga~\cite{nayar00}, the varying exposures follow a regular pattern. Motivated by the aliasing problems of regular sampling patterns, Sch\"oberl  et al.~\cite{schoberl12} propose to use spatially varying exposures on a non-regular pattern. Figure~\ref{fig:HDR_synth} shows examples of both acquisition patterns. This observation led us to choose the non-regular pattern in the proposed approach.
\begin{figure*}
\centering
\begin{minipage}[c]{.16\linewidth}
\begin{center}
\includegraphics[width=.85\linewidth]{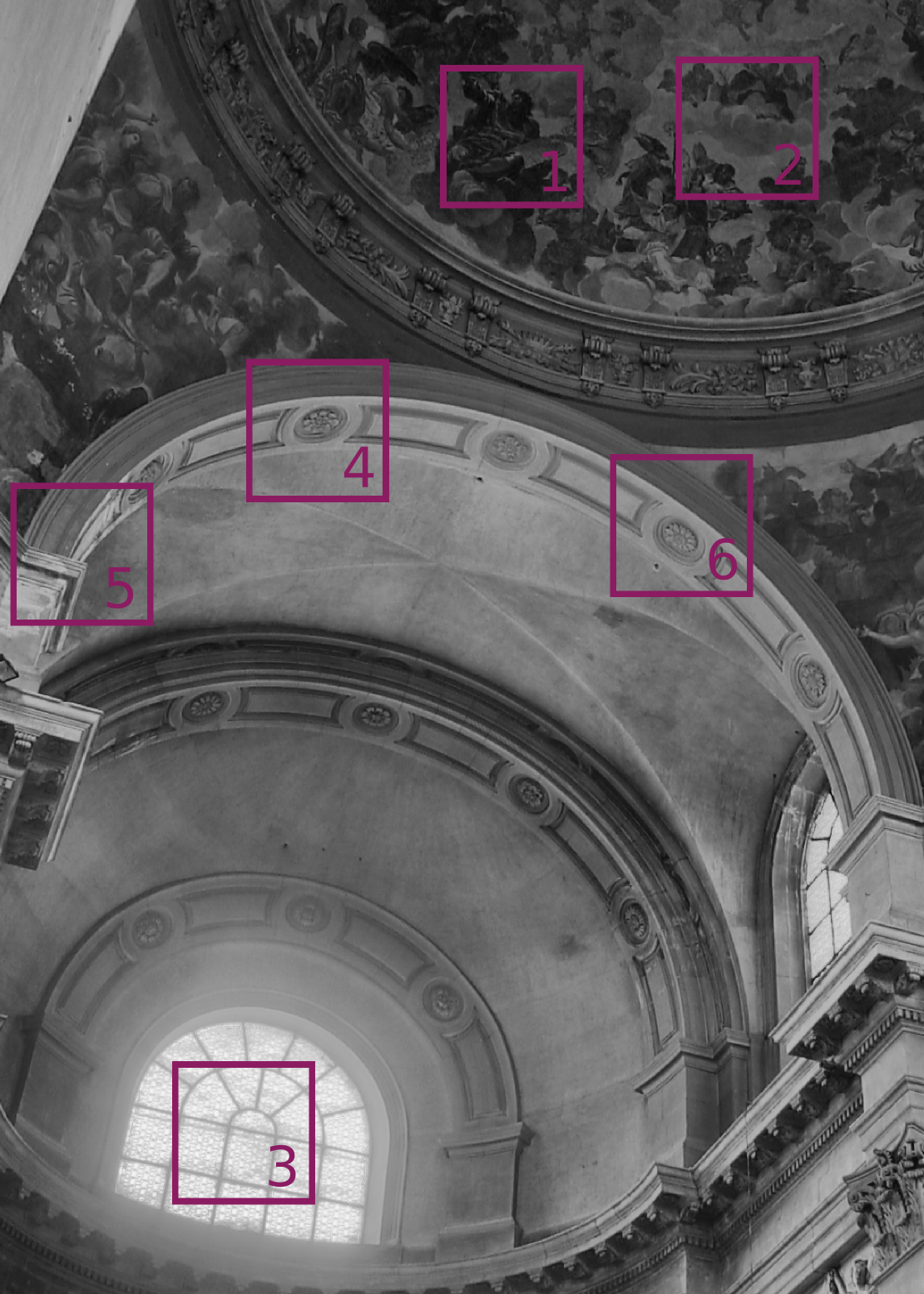}

\vspace{.4em}

\includegraphics[width=0.48\linewidth]{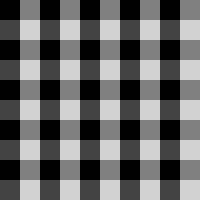} \includegraphics[width=0.48\linewidth]{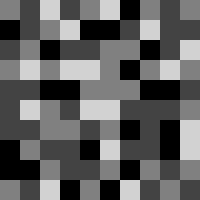}
\end{center}
\end{minipage}
\begin{minipage}[c]{.83\linewidth}
\begin{center}
\fboxsep=0pt\fboxrule=.8pt\fcolorbox{myGreen}{white}{\includegraphics[width=0.16\linewidth]{../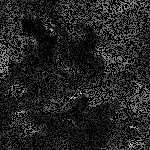}}\hspace{.5pt}\fboxsep=0pt\fboxrule=.8pt\fcolorbox{myRed}{white}{\includegraphics[width=0.16\linewidth]{../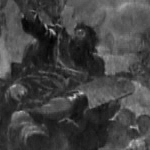}}\hspace{.5pt}\includegraphics[width=0.16\linewidth]{../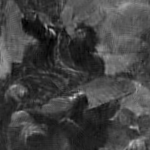}\hspace{.5pt}\includegraphics[width=0.16\linewidth]{../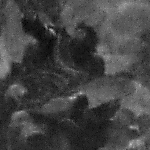}\hspace{.5pt}\includegraphics[width=0.16\linewidth]{../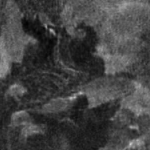}\hspace{.5pt}\includegraphics[width=0.16\linewidth]{../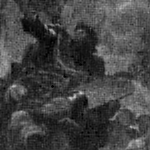}\\
\fboxsep=0pt\fboxrule=.8pt\fcolorbox{myGreen}{white}{\includegraphics[width=0.16\linewidth]{../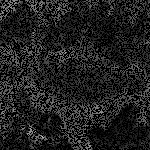}}\hspace{.5pt}\fboxsep=0pt\fboxrule=.8pt\fcolorbox{myRed}{white}{\includegraphics[width=0.16\linewidth]{../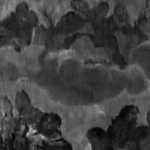}}\hspace{.5pt}\includegraphics[width=0.16\linewidth]{../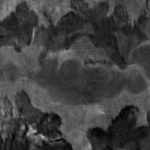}\hspace{.5pt}\includegraphics[width=0.16\linewidth]{../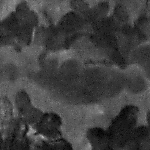}\hspace{.5pt}\includegraphics[width=0.16\linewidth]{../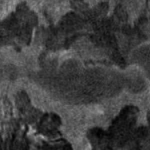}\hspace{.5pt}\includegraphics[width=0.16\linewidth]{../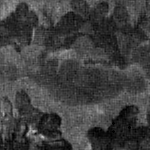}
\end{center}
\scriptsize{\hspace{30pt} Input \hspace{40pt} Ground-truth \hspace{43pt} \hpnlb{} \hspace{50pt} PLEV \hspace{35pt} \Schoberl{} et al. \hspace{10pt}  Nayar \& Mitsunaga}
\end{minipage}
\caption{\textbf{Synthetic data.} \textbf{Left:}  (\textbf{top}) Tone mapped version of the ground-truth image used for the experiments in Section~\ref{ssec:expSynthHDR}.  
(\textbf{bottom}) Regular (left) and non-regular (right) optical masks for an example of 4 different filters. \textbf{Right:} Results for extracts 1 and 6. From left to right: Input image with random pattern, ground-truth, results by \hpnlb{}, PLEV~\cite{aguerrebere14ICCP}, \Schoberl{} et al.~\cite{schoberl12}, Nayar and Mitsunaga~\cite{nayar00}. 50\% missing pixels (for both random and regular pattern). See \psnr{} values for these extracts in Table~\ref{tab:HDRpsnr}. Please see the digital copy for better details reproduction.}
\label{fig:HDR_synth}
\end{figure*}

 \subsection{Hyperprior Bayesian Estimator for Single Shot High Dynamic Range Imaging}
\label{sec:irrEst}
\subsubsection{Problem statement}
In order to reconstruct the dynamic range of the scene we need to solve an inverse problem. We want to estimate the irradiance image $\C$ from the SVE image $\Z$, knowing the exposure levels of the optical mask and the camera parameters. For this purpose we map the raw pixel values to the irradiance domain $\Yp$ with
\begin{equation}
\Yp(p) =  \frac{\Zp(p) - \meanR}{\a \fa_p \gp_p \tau}. 
\label{eq:Y}
\end{equation}
We take into account the effect of saturation and under-exposure by introducing the exposure degradation matrix $\U$, a diagonal matrix given by
\begin{equation} 
(\U)_p = \left\{
\begin{array}{rl}
1 & \text{if } \mean_R < \Zp(p) < z_{sat}, \\
0 & \text{otherwise},
\end{array} \right.
\label{eq:U}
\end{equation}
with $z_{sat}$ equal to the pixel saturation value. From~\eqref{eq:modelZ} and~\eqref{eq:U}, $\Yp(p)$ can be modeled as
\begin{equation}
\Yp(p) |  (\U)_p \sim \N \left( (\U)_p\Cp(p),\frac{\a^2 \fa_p \gp_p \tau (\U)_p\Cp(p)   + \sn}{(\a \fa_p \gp_p \tau)^2} \right).
\label{eq:modelIrr}
\end{equation}
Notice that~\eqref{eq:modelIrr} is the distribution of $\Yp(p)$ for a given exposure degradation factor $(\U)_p$, since $(\U)_p$ is itself a random variable that depends on $\Zp(p)$. The exposure degradation factor must be included in~\eqref{eq:modelIrr} since the variance of the over or under exposed pixels no longer depends on the irradiance $\Cp(p)$ but is only due to the readout noise $\sn$. From~\eqref{eq:modelIrr} we have
\begin{equation}
\Yp = \U \f + \ns,
\label{eq:Ymodel}
\end{equation}
where $\ns$ is zero-mean Gaussian noise with diagonal covariance matrix $\S_{\ns}$ given by
\begin{equation}
(\S_{\ns})_j = \frac{\a^2 \fa_p \gp_p \tau (\U)_p\Cp(p)   + \sn}{(\a \fa_p \gp_p \tau)^2}.
\end{equation}

Then the problem of irradiance estimation can be stated as retrieving $\C$ from $\Yp$, which implies denoising the well-exposed pixel values ($(\U)_p = 1$) and estimating the unknown ones ($(\U)_p = 0$).

\subsubsection{Proposed solution}
From~\eqref{eq:Ymodel}, image $\Yp$ is under the hypothesis of the HBE framework introduced in Section~\ref{sec:newMethod}. The proposed HDR imaging algorithm consists of the following steps: \textbf{1)} generate $\U$ from $\Zp$ according to~\eqref{eq:U}, \textbf{2)} obtain $\Yp$ from $\Zp$ according to~\eqref{eq:Y}, \textbf{3)} apply the HBE approach to $\Yp$ with the given $\U$ and $\S_{\ns}$.

\subsection{Experiments}
\label{sec:exps}

The proposed reconstruction method was thoroughly tested in several synthetic and real data examples. A brief summary of the results is presented in this section.

\subsubsection{Synthetic data}
\label{ssec:expSynthHDR}

Sample images are generated according to Model~\eqref{eq:Ymodel} using the HDR image in Figure~\ref{fig:HDR_synth} as the ground-truth. Both a random and a regular pattern with four equiprobable exposure levels $\fa = \{ 1,8,64,512\}$ are simulated. The exposure time is set to $\tau=1/200$ seconds and the camera parameters are those of a Canon 7D camera set to ISO 200 ($\a=0.87$, $\sn=30$, $\meanR=2048$, $\text{v}_{\text{sat}}=15000$)~\cite{aguerrebere13}.

Figure~\ref{fig:HDR_synth} shows extracts of the results obtained by the proposed method, by PLEV~\cite{aguerrebere14ICCP} (basically an adaptation of PLE to the same single image framework) and by \Schoberl{} et al.~\cite{schoberl12} for the random pattern and by Nayar et Mitsunaga~\cite{nayar00} using the regular pattern. The percentage of unknown pixels in the considered extracts is 50\% (it is nearly the same for both the regular and non-regular pattern). Table~\ref{tab:HDRpsnr} shows the PSNR values obtained in each extract marked in Figure~\ref{fig:HDR_synth}. The proposed method manages to correctly reconstruct the irradiance on the unknown pixels. Moreover, its denoising performance is much better than that of \Schoberl{} et al. and Nayar and Mitsunaga, and still sharper than PLEV.

\begin{table}
\footnotesize
\setlength{\tabcolsep}{3pt}
\centering
\begin{tabular}[h]{c c c c c c c}
\toprule
     & \multicolumn{6}{c}{\textbf{PSNR (dB)}}\\
\cmidrule{2-7}
                &  1 (Fig.~\ref{fig:HDR_synth}) & 2 (Fig.~\ref{fig:HDR_synth}) & 3  & 4  & 5  & 6  \\
\cmidrule{1-7} 
\hpnlb{}            & \textbf{33.08} & \textbf{33.87} & 22.95 & \textbf{35.10} & \textbf{36.80} & \textbf{35.66} \\
PLEV                & 29.65 & 30.82 & 22.77 & 33.99 & 36.42 & 34.73 \\
\Schoberl{} et al.  & 30.38 & 31.16 & 21.39 & 30.04 & 32.84 & 31.02 \\
Nayar and Mitsunaga & 29.39 & 30.10 & \textbf{23.24} & 25.83 & 30.26 & 26.90 \\
\toprule
\end{tabular}
\caption{\psnr{} values for the extracts shown in Figure~\ref{fig:HDR_synth}.}
\label{tab:HDRpsnr}
\end{table}

\begin{figure*}
\centering
\begin{minipage}[c]{.24\linewidth}
\begin{center}
\includegraphics[width=.99\linewidth]{../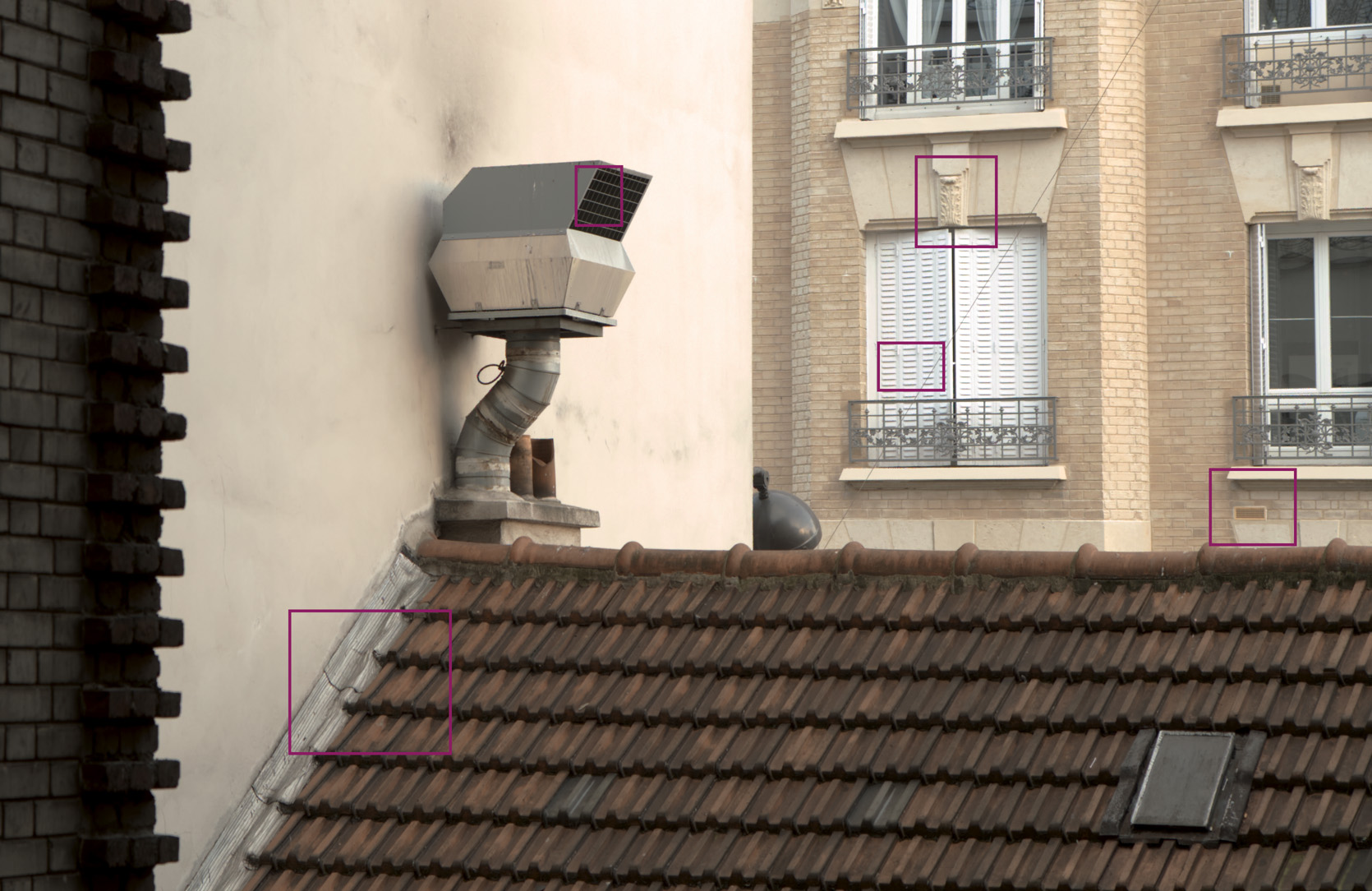}

\vspace{1pt}

\includegraphics[width=.99\linewidth]{../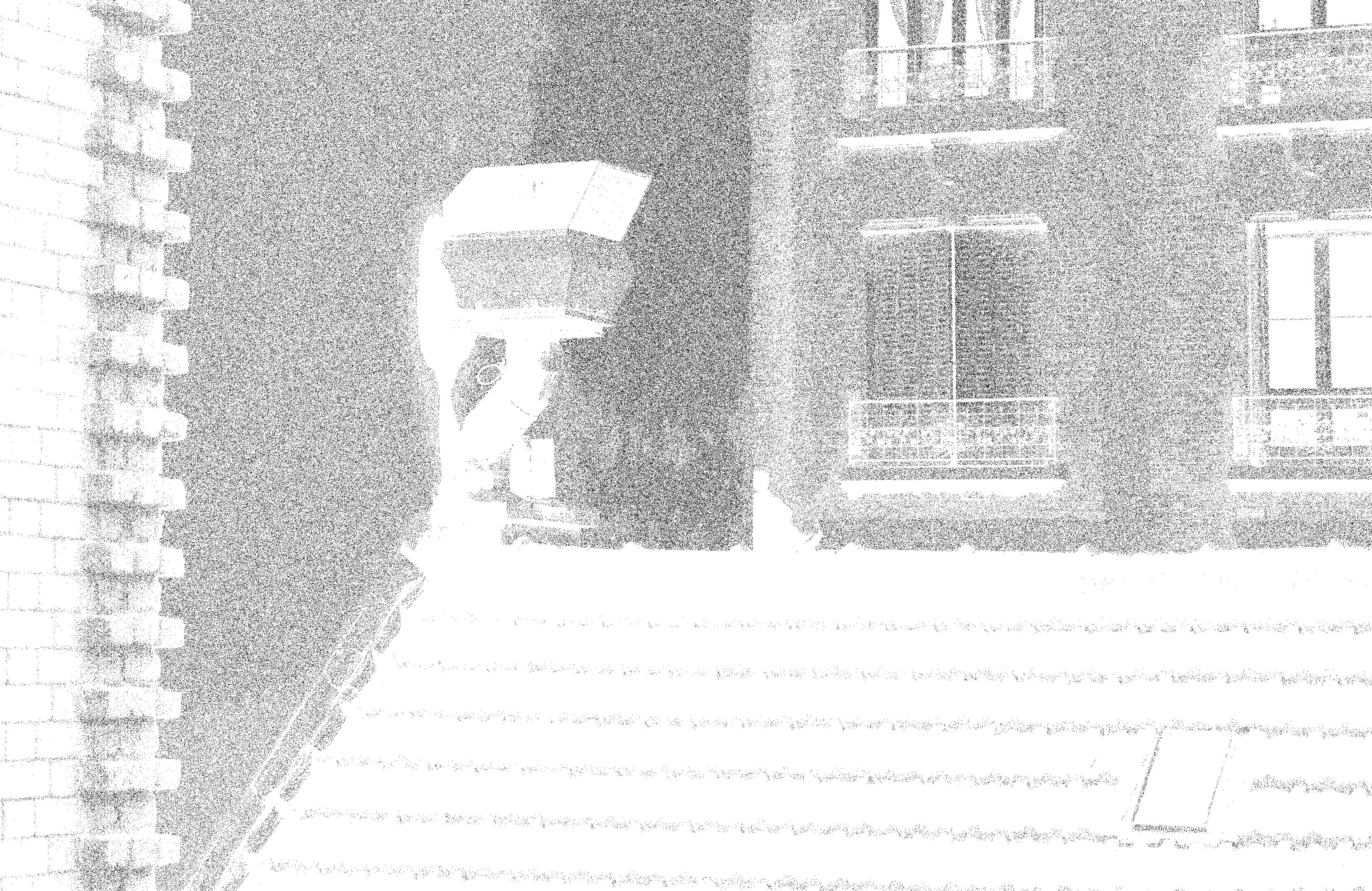}
\end{center}
\end{minipage}
\begin{minipage}[c]{.75\linewidth}
\begin{center}
\includegraphics[height=2.5cm]{../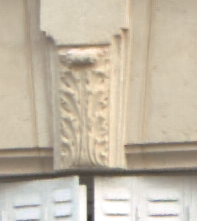}\hspace{.5pt}\includegraphics[height=2.5cm]{../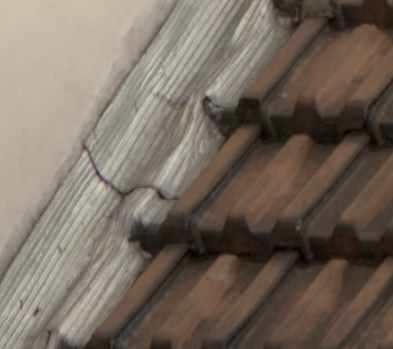}\hspace{.5pt}\includegraphics[height=2.5cm]{../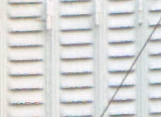}\hspace{.5pt}\includegraphics[height=2.5cm]{../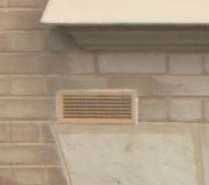}\hspace{.5pt}\includegraphics[height=2.5cm]{../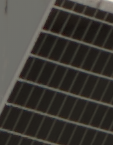}\\
\vspace{1pt}
\includegraphics[height=2.5cm]{../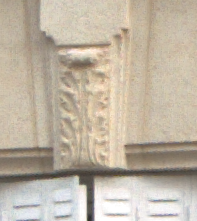}\hspace{.5pt}\includegraphics[height=2.5cm]{../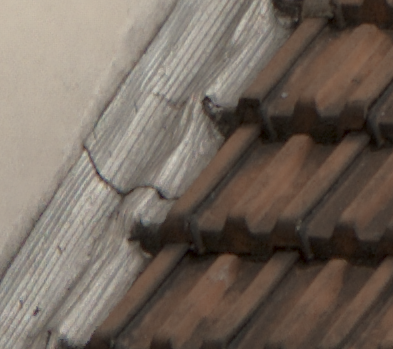}\hspace{.5pt}\includegraphics[height=2.5cm]{../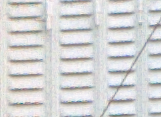}\hspace{.5pt}\includegraphics[height=2.5cm]{../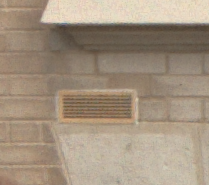}\hspace{.5pt}\includegraphics[height=2.5cm]{../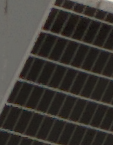}\\
\end{center}
\end{minipage}
\caption{\textbf{Real data.} \textbf{Left:} Tone mapped version of the \hdr{} image obtained by the proposed approach and its corresponding mask of unknown (black) and well-exposed (white) pixels. \textbf{Right:} Comparison of the results obtained by the proposed approach (first row) and PLEV (second row) in the extracts indicated in the top image. Please see the digital copy for better details reproduction.}
\label{fig:realExpsPalomas}
\end{figure*}

\subsubsection{Real data}
The feasibility of the SVE random pattern has been shown in~\cite{schoberl13} and that of the SVE regular pattern in~\cite{yasuma10}. Nevertheless, these acquisition systems are still not available for general usage.\footnote{
While writing the last version of this article the authors got aware of Sony's latest sensor IMX378.
This sensor has a special mode called SME-HDR, which is a variation of the SVE acquisition principle.
Whereas this sensor was adopted by the Google Pixel smartphone in 2016, the special SME-HDR mode is never activated by the Google Pixel phone, according to experts from the company DxO, and we found no way to activate it and access the raw image.
}
However, as stated in Section~\ref{sec:model}, the only variation between the classical and the SVE acquisition is the optical filter. Hence, the noise at a pixel $p$ captured using SVE with an optical gain factor $\fa_p$ and exposure time $\tau/\fa_p$ and a pixel captured with a classical camera using exposure time $\tau$ should be very close. We take advantage of this fact in order to evaluate the reconstruction performance of the proposed approach using real data. 

For this purpose, we generate an SVE image $\z_{sve}$  from four raw images $\{ \z_{raw}^i \}_{i=1,\dots,4}$ acquired with different exposure times. The four different exposure times simulate four different filters of the SVE optical mask. The value at position $(x,y)$ in $\z_{sve}$ is chosen at random among the four available values at that position $\{ \z_{raw}^i(x,y) \}_{i=1,\dots,4}$. Notice that the Bayer pattern is kept on $\z_{sve}$ by construction. The images $\{ \z_{raw}^i \}_{i=1,\dots,4}$ are acquired using a remotely controlled camera and a tripod so as to be perfectly aligned. 
This protocol does not allow us to take scenes with moving objects. Let us emphasize, however, that using a real SVE device, this, as well as the treatment of moving camera, would be a non-issue. 

Figures~\ref{fig:realExpsPalomas} and~\ref{fig:realExpsTelecom} show the results obtained from two real scenes, together with the masks of well-exposed (white) and unknown (black) pixels (the SVE raw images are included in Appendix B in the supplementary material). Recall that among the unknown pixels, some of them are saturated and some of them are under exposed. Square patches of size 6 and 8 were used for the examples in Figure~\ref{fig:realExpsTelecom} and Figure~\ref{fig:realExpsPalomas} respectively. Demosaicing~\cite{hamilton97} and tone mapping~\cite{mantiuk08} are used for displaying purposes.

We compare the results to those obtained by PLEV~\cite{aguerrebere14ICCP}. A comparison against the methods by Nayar and Mitsunaga and Sch\"oberl et al. is not presented since they do not specify  how to treat raw images with a Bayer pattern. The proposed method manages to correctly reconstruct the unknown pixels even in extreme conditions where more than $70\%$ of the pixels are missing, as for example the last extract in Figure~\ref{fig:realExpsTelecom}. 

These examples show the suitability of the proposed approach to reconstruct the irradiance information in both very dark and bright regions simultaneously. See for instance the example in Figure~\ref{fig:realExpsTelecom}, where the dark interior of the building (which can be seen through the windows) and the highly illuminated part of another building are both correctly reconstructed (see the electronic version of the article for better visualization). 
\begin{figure*}
\centering
\begin{minipage}[c]{.24\linewidth}
\begin{center}
\includegraphics[width=.99\linewidth]{../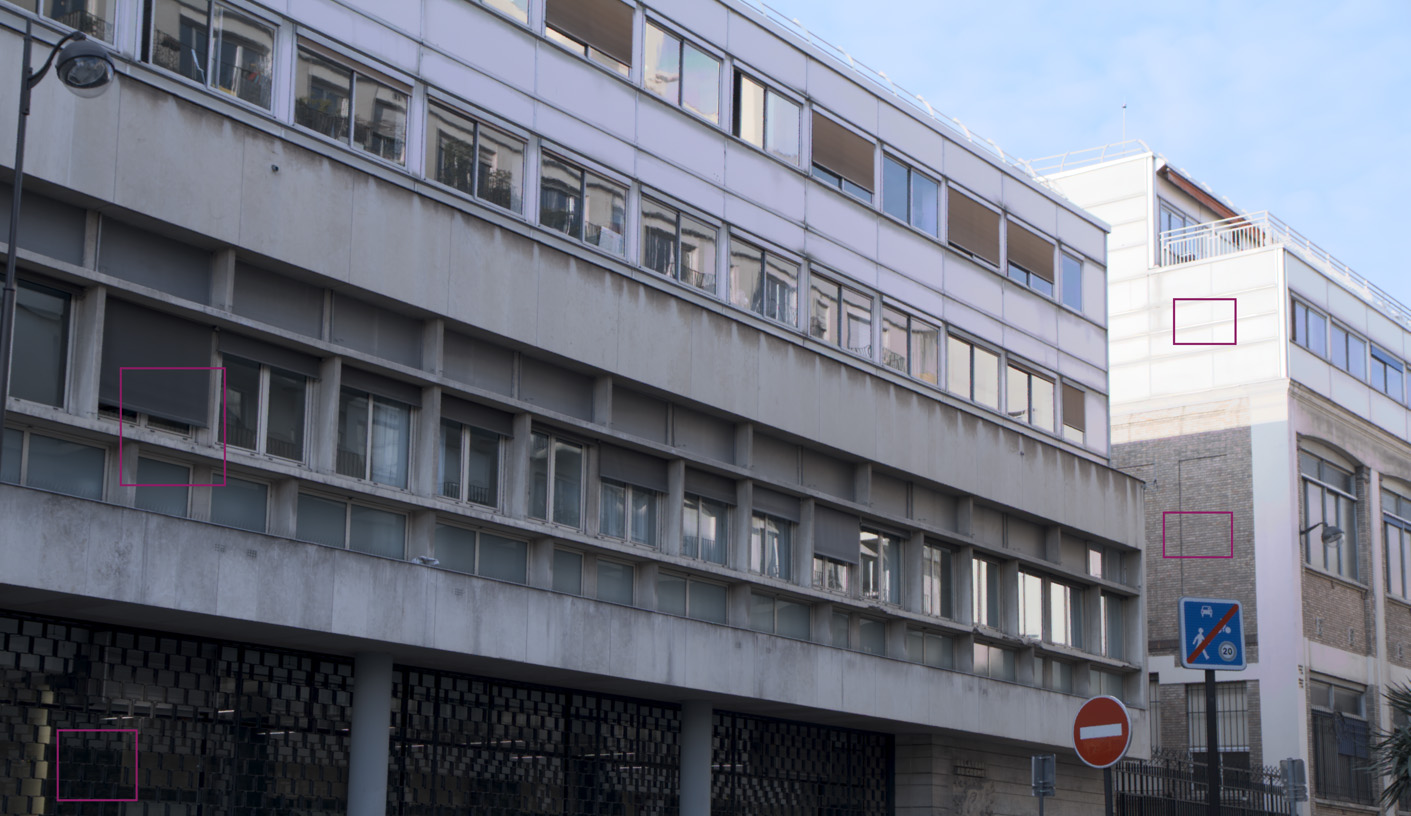}

\vspace{1pt}

\includegraphics[width=0.99\linewidth]{../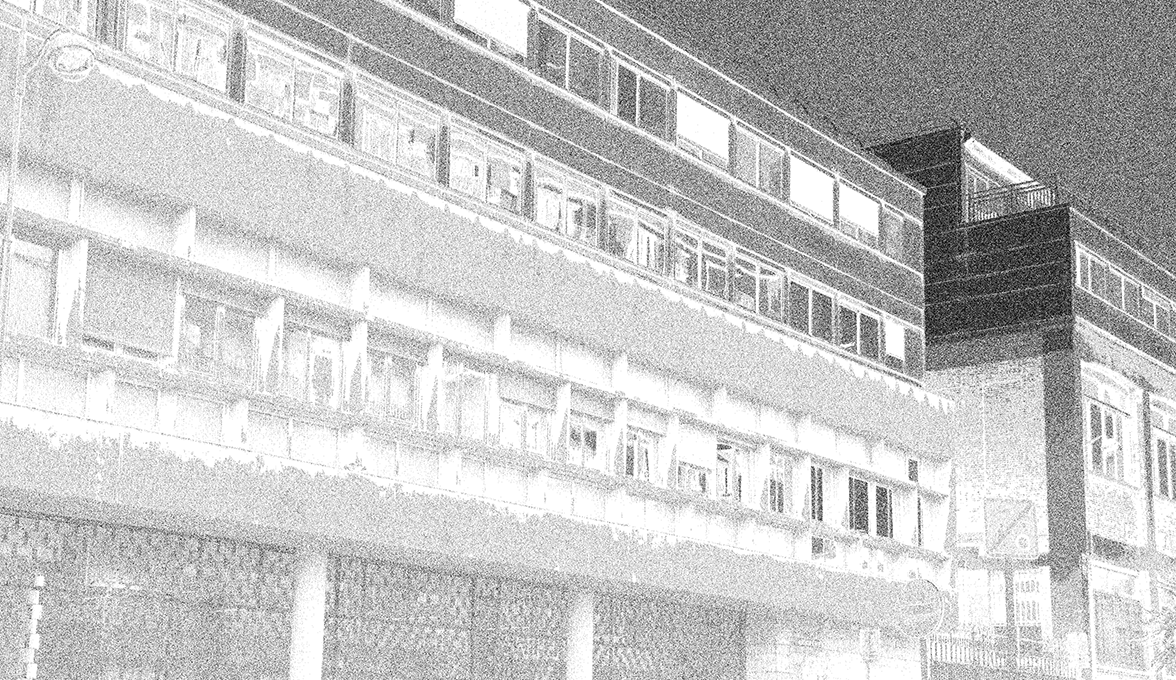}
\end{center}
\end{minipage}
\begin{minipage}[c]{.75\linewidth}
\begin{center}
\includegraphics[height=2.5cm]{../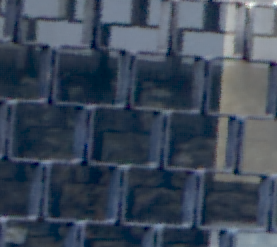} \includegraphics[height=2.5cm]{../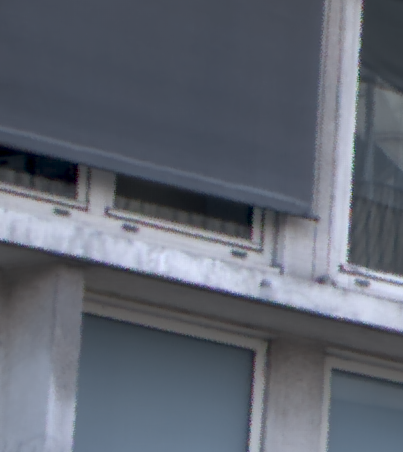} \includegraphics[height=2.5cm]{../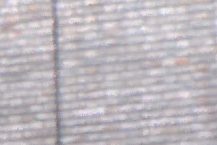} \includegraphics[height=2.5cm]{../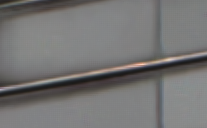}\\
\vspace{2pt}
\includegraphics[height=2.5cm]{../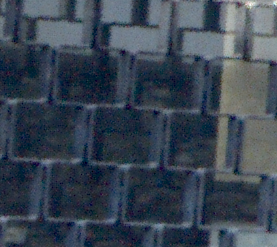} \includegraphics[height=2.5cm]{../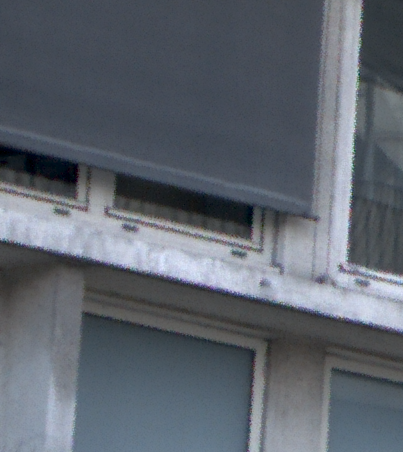} \includegraphics[height=2.5cm]{../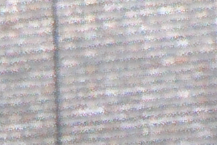} \includegraphics[height=2.5cm]{../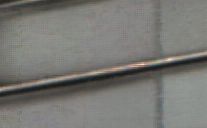}
\end{center}
\end{minipage}
\caption{\textbf{Real data.} \textbf{Left:} Tone mapped version of the \hdr{} image obtained by the proposed approach and its corresponding mask of unknown (black) and well-exposed (white) pixels. \textbf{Right:} Comparison of the results obtained by the proposed approach (first row) and PLEV (second row) in the extracts indicated in the top image. Please see the digital copy for better details reproduction.}
\label{fig:realExpsTelecom}
\end{figure*}

\section{Conclusions}
\label{sec:conclusions}
In this work we have presented a novel image restoration framework. It has the benefits of local patch characterization (that was key to the success of NLB as a state of the art denoising method), but manages to extend its use to more general restoration problems where the linear degradation operator is diagonal, by combining local estimation with Bayesian restoration based on hyperpriors. This includes problems such as zooming, inpainting and interpolation. In this way, all these restoration problems are set under the same framework. It does not include image deblurring or deconvolution, since the degradation operator is no longer diagonal. Correctly addressing deconvolution with large kernels with patch-based approaches and Gaussian prior models is a major challenge that will be the subject of future work.

We have presented a large series of experiments both on synthetic and real data that confirm the robustness of the proposed strategy based on hyperpriors. These experiments show that for a wide range of image restoration problems \hpnlb{} outperforms several state-of-the-art restoration methods.

This work opens several perspectives. The first one concerns the relevance of the Gaussian patch model and its relation to the underlying image patches manifold. If this linear approximation has proven successful for image restoration, its full relevance in other areas remains to be explored, especially in all domains requiring to compare image patches.     
Another important related question is the one of the estimation of the degradation model in images jointly degraded by noise, missing pixels, blur, etc. Restoration approaches generally rely on the precise knowledge of this model and of its parameters. In practice however, we often deal with images for which the acquisition process is unknown, and that have possibly been affected by post-treatments. In such cases, blind restoration remains an unsolved challenge.

Finally, we have presented a novel application of the proposed general framework to the generation of HDR images from a single variable exposure (SVE) snapshot. The SVE acquisition strategy allows the creation of HDR images from a single shot without the drawbacks of multi-image approaches, such as the need for global alignment and motion estimation to avoid ghosting problems. The proposed method manages to simultaneously denoise and reconstruct the missing pixels, even in the presence of (possibly complex) motions, improving the results obtained by existing methods. Examples with real data acquired in very similar conditions to those of the SVE acquisition show the capabilities of the proposed approach. 

\appendices

\section{Alternate minimization scheme convergence}
 \label{ap:mapEst}
We study in the following the convergence of the alternate minimization algorithm~\ref{algo:alternate}.   
To show the main convergence result, we need the following lemma
\begin{lemma}
The function $f$ is coercive on $\R^{n(M+1)}\times S_n^{++}(\R)$. 
\end{lemma}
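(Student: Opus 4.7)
The plan is to show that every sublevel set $\{f \le M\}$ is contained in a compact subset of $\R^{n(M+1)}\times S_n^{++}(\R)$, i.e., that $f\to+\infty$ as $\|\{\f_i\}\|+\|\mean\|+\|\L\|$ blows up \emph{or} as $\L$ approaches the boundary of $S_n^{++}(\R)$ (smallest eigenvalue $\to 0$). The key observation is that $f$ can be split as the sum of a pure $\L$-term
\[
h(\L) \;:=\; -\tfrac{\nu-n+M}{2}\log|\L| \;+\; \tfrac{1}{2}\operatorname{tr}(\nu\S_0\L),
\]
plus terms that are manifestly nonnegative: the data-fidelity Mahalanobis norm $\frac12 \sum_i (\y_i-\U_i\f_i)^T\S_{N_i}^{-1}(\y_i-\U_i\f_i)$, the patch-prior term $\frac12\sum_i(\f_i-\mean)^T\L(\f_i-\mean)$, and the mean-prior term $\frac{\kappa}{2}(\mean-\mean_0)^T\L(\mean-\mean_0)$, because $\S_{N_i}^{-1}\succ 0$ and $\L\succ 0$.

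First I would establish that $h$ is coercive on $S_n^{++}(\R)$. Writing the eigenvalues of $\L$ as $0<\lambda_1\le\cdots\le\lambda_n$ and using $\operatorname{tr}(\S_0\L)\ge \lambda_{\min}(\S_0)\operatorname{tr}(\L)=\lambda_{\min}(\S_0)\sum_i\lambda_i$, one gets
\[
h(\L) \;\ge\; \sum_{i=1}^{n}\Bigl(\tfrac{\nu\lambda_{\min}(\S_0)}{2}\lambda_i - \tfrac{\nu-n+M}{2}\log\lambda_i\Bigr).
\]
Since $\S_0\succ 0$ (it is a covariance hyperparameter), $\lambda_{\min}(\S_0)>0$, and since $\nu>n-1$ and $M\ge 1$ we have $\nu-n+M>0$; each summand is a function of the form $at-c\log t$ with $a,c>0$, which tends to $+\infty$ both as $t\to 0^+$ and as $t\to+\infty$. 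Hence $h(\L)\to+\infty$ whenever either $\lambda_{\min}(\L)\to 0$ or $\lambda_{\max}(\L)\to +\infty$.

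Next, on a sublevel set $f\le M$, nonnegativity of the three other blocks gives $h(\L)\le M$, so by the previous step the eigenvalues of $\L$ are trapped in a compact interval $[c,C]\subset(0,+\infty)$. With $\lambda_{\min}(\L)\ge c>0$ on that set, the mean-prior term yields $\frac{\kappa c}{2}\|\mean-\mean_0\|^2 \le \frac{\kappa}{2}(\mean-\mean_0)^T\L(\mean-\mean_0)\le M$, which bounds $\mean$. Finally, given $\mean$ bounded, the patch-prior term gives $\frac{c}{2}\sum_i\|\f_i-\mean\|^2 \le \frac12\sum_i(\f_i-\mean)^T\L(\f_i-\mean)\le M$, which bounds each $\f_i$. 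Thus the sublevel set is bounded and lies in a compact subset of $\R^{n(M+1)}\times S_n^{++}(\R)$, proving coercivity.

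The main obstacle is purely the boundary behaviour of $S_n^{++}(\R)$: one must verify that $h(\L)$ dominates as $\L$ becomes singular, and this requires the hypotheses $\S_0\succ 0$ and $\nu-n+M>0$. Note that data-fidelity alone cannot control $\f_i$ when $\U_i$ has a nontrivial kernel (e.g., for missing pixels), which is precisely why the hyperprior is essential: it is the Mahalanobis term with $\L$ bounded below that delivers coercivity in $\{\f_i\}$ and $\mean$.
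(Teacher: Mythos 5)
Your proof is correct and rests on the same decomposition as the paper's: drop the three nonnegative quadratic terms to isolate the pure $\L$-part $h(\L)=-\tfrac{\nu-n+M}{2}\log|\L|+\tfrac12\operatorname{tr}(\nu\S_0\L)$, show that this part is coercive in $\L$, and then use its global minimum together with the Mahalanobis terms to control $\mean$ and the $\f_i$. Where you genuinely diverge is in the formulation and in the level of care. The paper argues pointwise along sequences with $\|(\{\f_i\},\mean,\L)\|\to+\infty$ and, after bounding $f$ below by $m_\L$ plus the quadratic terms, asserts that these terms ``clearly'' blow up when $\|\f_i\|$ or $\|\mean\|$ does; strictly speaking this needs $\L$ to stay away from singularity along the sequence, a point the paper leaves implicit. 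Your sublevel-set argument supplies exactly this: the eigenvalue bound $h(\L)\ge\sum_i\bigl(a\lambda_i-c\log\lambda_i\bigr)$ with $a=\nu\lambda_{\min}(\S_0)/2>0$ and $c=(\nu-n+M)/2>0$ traps the spectrum of $\L$ in a compact interval $[c',C']\subset(0,\infty)$ on $\{f\le M\}$, after which $\lambda_{\min}(\L)\ge c'$ turns the prior terms into genuine $\ell^2$ bounds on $\mean$ and the $\f_i$. This buys you something the paper's version does not explicitly deliver: sublevel sets are compact \emph{inside} the open cone $S_n^{++}(\R)$, i.e.\ accumulation points of the minimizing sequence cannot drift to a singular $\L$ --- which is precisely what the subsequent convergence proposition needs. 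You also make explicit the hypotheses doing the work ($\S_0\succ0$ and $\nu-n+M>0$, the latter following from $\nu>n-1$ and $M\ge1$), which the paper uses silently. In short: same skeleton, but your execution is tighter and slightly stronger.
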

\begin{proof}
We need to show that
\begin{align*}
\lim_{\|(\{\f_i\},\mean,\L )\|\rightarrow +\infty}f(\{\f_i\},\mean,\L ) &= +\infty. 
\end{align*}
Now, $\|(\{\f_i\},\mean,\L )\|\rightarrow +\infty$ if and only if $\|{\f_i}\| \rightarrow +\infty$ or $\|\mean\|\rightarrow +\infty$ or $\|\L\|\rightarrow +\infty$.

The matrix $\L$ being positive-definite, the terms  $\frac 1 2 \sum_{i=1}^M (\f_i - \mean)^T \L (\f_i - \mean)$ and $\frac {\kappa}{2}(\mean-\mean_0)^T\L(\mean-\mean_0)$ are both positive. Thus
\begin{eqnarray*}
f( \{\f_i\},\mean,\L ) 
&\geq& -{\frac{\nu -n +M}{2}}\log |\L|\\ 
&+&\frac {1}{2}\mathrm{trace}[\nu\S_0 \L ].  
\end{eqnarray*}
Now, this function of $\L$ is convex and coercive on $S_n^{++}(\R)$, which implies that $f( \{\f_i\},\mean,\L )\rightarrow +\infty$ as soon as $\|\L\|\rightarrow +\infty$. It also follows that the previous function of $\L$ has a global minimum that we denote by $m_{\L}$. 
We can now write
\begin{align*}
  f(\{\f_i\},\mean,\L ) &\geq m_{\L} +\frac 1 2 \sum_{i=1}^M (\f_i - \mean)^T \L (\f_i - \mean)  \\
&+\frac {\kappa}{2}(\mean-\mean_0)^T\L(\mean-\mean_0)
\end{align*}
and this function of $(\{\f_i\},\mean)$ clearly tends towards $+\infty$ as soon as $\|{\f_i}\| \rightarrow +\infty$ or $\|\mean\|\rightarrow +\infty$.
\end{proof}

We now show the main convergence result for our alternate minimization algorithm. The proof adapts the arguments in~\cite{Gorski2007} to our case. 
\addtocounter{proposition}{-1} % to get the same number as in the core of the text 

\begin{proposition}
 The sequence $f( \{\f_i^l\},\mean^l,\L^l )$ converges monotonically when $l\rightarrow +\infty$.  The sequence $\{\{\f_i^l\},\mean_l,\L^l\}$ generated by the alternate minimization scheme has at least one accumulation point. The set of its accumulation points forms a connected and compact set of partial optima and stationary points of $f$, all having the same function value. 
\end{proposition}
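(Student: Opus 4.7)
The plan is to invoke the Gorski--Pfeuffer--Klamroth framework for alternate convex search on biconvex functions and verify its three main hypotheses for our $f$: biconvexity, coercivity, and uniqueness of the partial minimizers. The first two are already at hand: biconvexity is exactly the content of the two optimality propositions stated earlier (each $\argmin$ is given by closed-form expressions valid on convex sets), and coercivity on $\R^{n(M+1)} \times S_n^{++}(\R)$ is the preceding lemma. Uniqueness of the partial minimizers follows from strict convexity of each subproblem: the first subproblem (in $(\{\f_i\}, \mean)$ with $\L$ fixed) has positive-definite Hessian because $\L \in S_n^{++}(\R)$, and the second (in $\L$ with $(\{\f_i\}, \mean)$ fixed) is strictly convex on $S_n^{++}(\R)$ since $-\log|\L|$ is strictly convex there.

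Monotonic convergence of $f(\{\f_i^l\}, \mean^l, \L^l)$ comes from the definition of the scheme: each half-step is a minimization in one block, so the sequence of function values is non-increasing. Coercivity implies boundedness from below on any sublevel set, hence the sequence converges to a finite limit $f^\star$. Moreover, the iterates $\{(\{\f_i^l\}, \mean^l, \L^l)\}$ remain in the sublevel set $\{f \leq f(\{\f_i^0\}, \mean^0, \L^0)\}$, which is bounded by coercivity. To conclude compactness inside the open cone $S_n^{++}(\R)$, I would note that as $\L$ approaches the boundary of $S_n^{++}(\R)$ (some eigenvalue going to $0$ or diverging), the term $-\frac{\nu-n+M}{2}\log|\L| + \frac{1}{2}\operatorname{trace}(\nu \S_0 \L)$ tends to $+\infty$ (using $\S_0 \succ 0$ to bound the trace term for large eigenvalues). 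Thus any accumulation point lies in the interior, and Bolzano--Weierstrass gives existence.

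For the structural claims on the accumulation set $\mathcal{A}$, I would follow Gorski et al. closely. First, all accumulation points attain the same value $f^\star$ by continuity of $f$ and monotonic convergence of $f^l$. Second, any accumulation point is a partial optimum: if $(\{\f_i^{l_k}\}, \mean^{l_k}, \L^{l_k}) \to (\{\f_i^\star\}, \mean^\star, \L^\star)$, then by uniqueness of the partial minimizers and continuity of the $\argmin$ maps (which follows from strict convexity and the closed-form expressions of the propositions), the limit must satisfy both optimality conditions, hence is a partial optimum. Since $f$ is differentiable on $\R^{n(M+1)} \times S_n^{++}(\R)$ and the partial optimum conditions yield $\nabla_{(\{\f_i\},\mean)} f = 0$ and $\nabla_{\L} f = 0$, every accumulation point is also a stationary point of $f$. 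Compactness of $\mathcal{A}$ follows from its being a closed subset of the compact sublevel set.

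The connectedness of $\mathcal{A}$ is the step I expect to be the main obstacle; it requires establishing that consecutive iterates become arbitrarily close, i.e.
\[
\|(\{\f_i^{l+1}\}, \mean^{l+1}, \L^{l+1}) - (\{\f_i^l\}, \mean^l, \L^l)\| \longrightarrow 0.
\]
This does not follow from monotone decrease alone; one needs a growth/strong-convexity estimate on each subproblem. I would argue as in Gorski et al.: on the compact sublevel set, the partial Hessians are uniformly bounded below by a positive constant, so each half-step provides a descent of order $\|\Delta\|^2$; summability of the descents (from the convergent series $\sum_l (f^l - f^{l+1}) < \infty$) then forces $\|\Delta\| \to 0$. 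Once this is in place, a standard topological argument (Ostrowski's lemma) yields connectedness of $\mathcal{A}$: the set of accumulation points of a bounded sequence whose consecutive terms become arbitrarily close is connected. This closes the proof.
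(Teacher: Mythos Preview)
Your proposal is correct and follows the same Gorski--Pfeuffer--Klamroth scaffolding as the paper: monotone decrease plus coercivity gives convergence of values and boundedness of iterates, continuity of the explicit update maps yields that accumulation points are partial optima and hence stationary points, and Ostrowski's lemma gives connectedness once consecutive increments vanish. Your extra care in showing that accumulation points stay in the interior of $S_n^{++}(\R)$ (via the barrier $-\log|\L|$) is a detail the paper leaves implicit.

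The one place you diverge from the paper is the argument for $\|(\{\f_i^{l+1}\},\mean^{l+1},\L^{l+1}) - (\{\f_i^l\},\mean^l,\L^l)\| \to 0$. You propose a quantitative route: uniform strong convexity of each subproblem on the compact sublevel set gives a descent of order $\|\Delta\|^2$, and summability of $f^l - f^{l+1}$ forces $\|\Delta\|\to 0$. This works, but requires verifying uniform lower bounds on both partial Hessians across the sublevel set (for the $\L$-block this means controlling $\L^{-1}\otimes\L^{-1}$). The paper takes a shorter path that exploits the closed-form updates more directly: it packages one full iteration as a continuous map $G$, shows that every accumulation point is a fixed point of $G$, and then argues by contradiction---if $\|\Delta\|\not\to 0$, one could extract $x^{l_k}\to x^\star$ with $x^{l_k+1}\to x'\neq x^\star$, but $x^{l_k+1}=G(x^{l_k})\to G(x^\star)=x^\star$. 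Your approach is more robust (it would survive without explicit formulas), while the paper's is cleaner here precisely because Propositions~1 and~2 hand you a continuous $G$ for free.
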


\begin{proof}
  The sequence $f( \{\f_i^l\},\mean^l,\L^l )$ obviously decreases at each step by construction. The coercivity and continuity of $f$ imply that this sequence is also bounded from below, and thus converges. The convergence of $f( \{\f_i^l\},\mean^l,\L^l )$ implies that the sequence $\{\{\f_i^l\},\mean_l,\L^l\}$  is bounded. It follows that it has at least one accumulation point $(\{\f_i^{\star}\},\mean^{\star},\L^{\star})$ and that there exists a strictly increasing sequence $(l_k)_{k\in \sN}$ of integers such that $\{\{\f_i^{l_k}\},\mean^{l_k},\L^{l_k}\}_{k\in \sN}$ converges to $\{\{\f_i^{\star}\},\mean^{\star},\L^{\star}\}$. 

Now, we can show that such an accumulation point is a partial optimum of $f$, \textit{i.e.} that $f(\{\f_i^{\star}\},\mean^{\star},.)$ attains its minimum at $\L^*$ and $f(.,.,\L^*)$ attains its minimum at $(\{\f_i^{\star}\},\mean^{\star})$. By construction, 
\begin{align*}
  f(\{\f^{l_{k}}\},\mean^{l_{k}},\L^{l_{k}}) \leq  f(\{\f^{l_{k}}\},\mean^{l_{k}},\L), \;\;\;\forall \L \in S_n^{++}(\R)
\end{align*}
which implies by continuity of $f$ that
\begin{align}
\label{eq:partial1}
  f(\{\f^{*}\},\mean^{*},\L^{*}) = \argmin_{\L \in S_n^{++}(\R)}  f(\{\f^{*}\},\mean^{*},\L).
\end{align} 

 Let us denote $G(\{\f\},\mean,\L) = (\{\f'\},\mean',\L')$ with
\begin{align*}
(\{\f'\},\mean') &= \argmin_{(\{\f,\}\mean)} f(\{\f\},\mean,\L)\\
\L' &= \argmin_{\L} f(\{\f'\},\mean',\L).
\end{align*}
The alternate minimization scheme consists in updating $G$ at each iteration.  
From Equations~\eqref{eq:muhat},~\eqref{eq:Ci1}, and~\eqref{eq:Lhat}, we see that $G$ is explicit and continuous. Since $\{l_k\}_{k\in \sN}$ is strictly increasing, for each $k\in{\sN}^*$, $l_k \geq l_{k-1}+1$. The sequence $\{f( \{\f_i^l\},\mean^l,\L^l )\}_{l\in\sN}$ decreases, so  
\begin{align*}
f(G(\{\f^{l_{k-1}}\},\mean^{l_{k-1}},\L^{l_{k-1}})) &=
f(\{\f^{l_{k-1}+1}\},\mean^{l_{k-1}+1},\L^{l_{k-1}+1}) \\
&\geq  f(\{\f^{l_{k}}\},\mean^{l_{k}},\L^{l_{k}})\\
&\geq f(G(\{\f^{l_{k}}\},\mean^{l_{k}},\L^{l_{k}})).
\end{align*}
Therefore, as $k\rightarrow +\infty$, since $G$ is continuous, it follows  that 
\begin{align*}
  f(G(\{\f^*\},\mean^*,\L^*)) &= f(\{\f^*\},\mean^*,\L^*).
\end{align*}
Now, writing $(\{\f^{**}\},\mean^{**},\L^{**}) = G(\{\f^*\},\mean^*,\L^*)$, we get
\begin{align*}
  f(\{\f^*\},\mean^*,\L^*) &\geq \argmin_{(\f,\mean)} f(\{\f\},\mean,\L^*)=f(\{\f^{**}\},\mean^{**},\L^{*})\\
&\geq \argmin_{\L} f(\{\f^{**}\},\mean^{**},\L)=  f(\{\f^{**}\},\mean^{**},\L^{**}).
\end{align*}
We can conclude that all these terms are equal and in particular 
\begin{align}
\label{eq:partial2}
  f(\{\f^*\},\mean^*,\L^*) &= f(\{\f^{**}\},\mean^{**},\L^{*}) = \argmin_{(\f,\mean)} f(\{\f\},\mean,\L^{*}).
\end{align}

From~\eqref{eq:partial1} and~\eqref{eq:partial2} we deduce that the accumulation point $(\{\f^{\star}\},\mean^{\star},\L^{\star})$ is a partial optimum of $f$ and since $f$ is differentiable, it is also a stationary point of $f$. 
Moreover, since $f(.,.,\L^*)$ is strictly convex and has a unique minimum, it follows from~\eqref{eq:partial2} that  $(\{\f^{**}\},\mean^{**}) = (\{\f^*\},\mean^*)$. As a consequence,
$\L^{**} = \argmin_{\L} f(\{\f^{**}\},\mean^{**},\L) = \argmin_{\L} f(\{\f^{*}\},\mean^{*},\L) = \L^{*}$. Therefore, the accumulation point $(\{\f^{\star}\},\mean^{\star},\L^{\star})$ (and actually any accumulation point of the sequence) is also a fixed point of function $G$. 
 
We have shown that accumulation points of the sequence $\{\{\f^l\},\mean^l,\L^l \}$ are partial optima of $f$ and fixed points of the function $G$. The set of accumulation points is obviously compact.  
Let us show that it is also a connected set. First, observe that whatever the norm $\|\|$, the sequence  $\|(\{\f^{l+1}\},\mean^{l+1},\L^{l+1} ) - (\{\f^l\},\mean^l,\L^l)\|$ converges to $0$ when $l\rightarrow \infty$. If it was not the case, it would be possible to extract a subsequence $(\{\f^{l_k}\},\mean^{l_k},\L^{l_k} )$ converging to an accumulation point $(\{\f^{*}\},\mean^{*},\L^{*} )$ while $(\{\f^{l_k+1}\},\mean^{l_k+1},\L^{l_k+1} )$ converges to a different accumulation point $(\{\f^{'}\},\mean^{'},\L^{'} )$, but we know that it is impossible since  $(\{\f^{l_k+1}\},\mean^{l_k+1},\L^{l_k+1} ) = G(\{\f^{l_k}\},\mean^{l_k},\L^{l_k} )$ would also tend toward $(\{\f^{*}\},\mean^{*},\L^{*} )$. The sequence $(\{\f^{l}\},\mean^{l},\L^{l} )$ being bounded and such that $\|(\{\f^{l+1}\},\mean^{l+1},\L^{l+1} ) - (\{\f^l\},\mean^l,\L^l)\|$ converges to $0$, the set of its accumulation points is connected (see~\cite{ostrowski1960solution}).
The fact that all accumulation points have the same function value is obvious since the sequence $\{f( \{\f^l\},\mean^l,\L^l )\}_{l\in\sN}$ decreases.
\end{proof}

\section*{Acknowledgement}
We would like to thank the reviewers for their thorough and fruitful contributions, as well as Mila Nikolova and Alasdair Newson for their insightful comments and the authors of~\cite{lebrun13IPOL,wang13b,zoran11_web} for kindly providing their code. 
This work has been partially funded by the French Research Agency (ANR) under grant nro ANR-14-CE27-001 (MIRIAM), and by the ``Investissement d'avenir'' project, reference ANR-11-LABX-0056-LMH.

\ifCLASSOPTIONcaptionsoff
  \newpage
\fi

\bibliographystyle{IEEEtran}
\bibliography{references}

\begin{IEEEbiography}[{\includegraphics[width=1in,height=1.25in,clip,keepaspectratio]{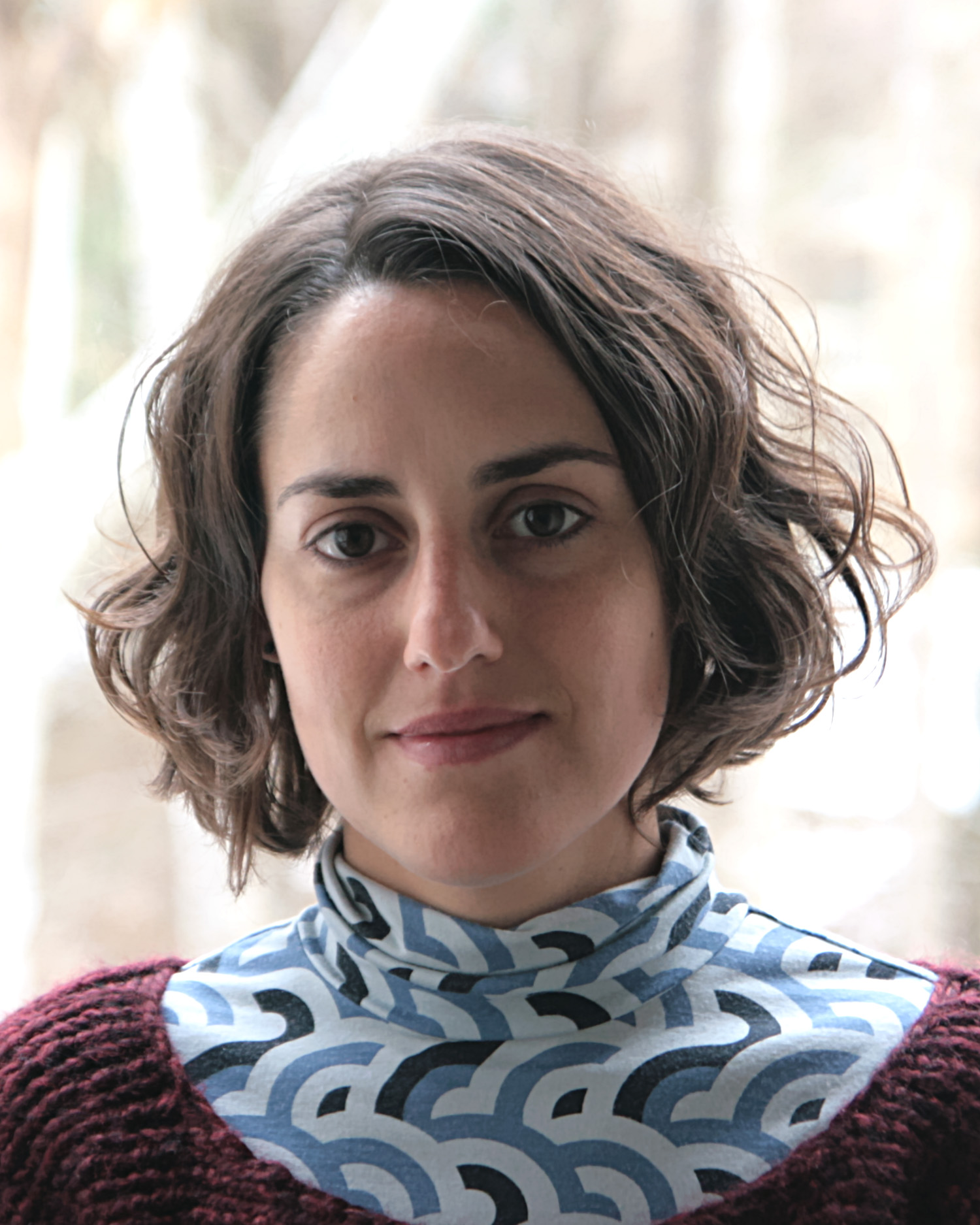}}]{Cecilia Aguerrebere}
received the B.Sc., M.Sc. and Ph.D. degrees in electrical engineering from Universidad de la Rep\'ublica, Uruguay, in 2006, 2011 and 2014 respectively, the M.Sc. degree in applied mathematics from ENS Cachan, France, in 2011, and the Ph.D. degree in Signal and Image Processing from T\'el\'ecom ParisTech, France, in 2014 (joint Ph.D program with Universidad de la Rep\'ublica, Uruguay). From August 2015 she is with the Electrical and Computer Engineering Department, Duke University, where she holds a Postdoctoral Research Associate position.
\end{IEEEbiography}
\begin{IEEEbiography}[{\includegraphics[width=1in,height=1.25in,clip,keepaspectratio]{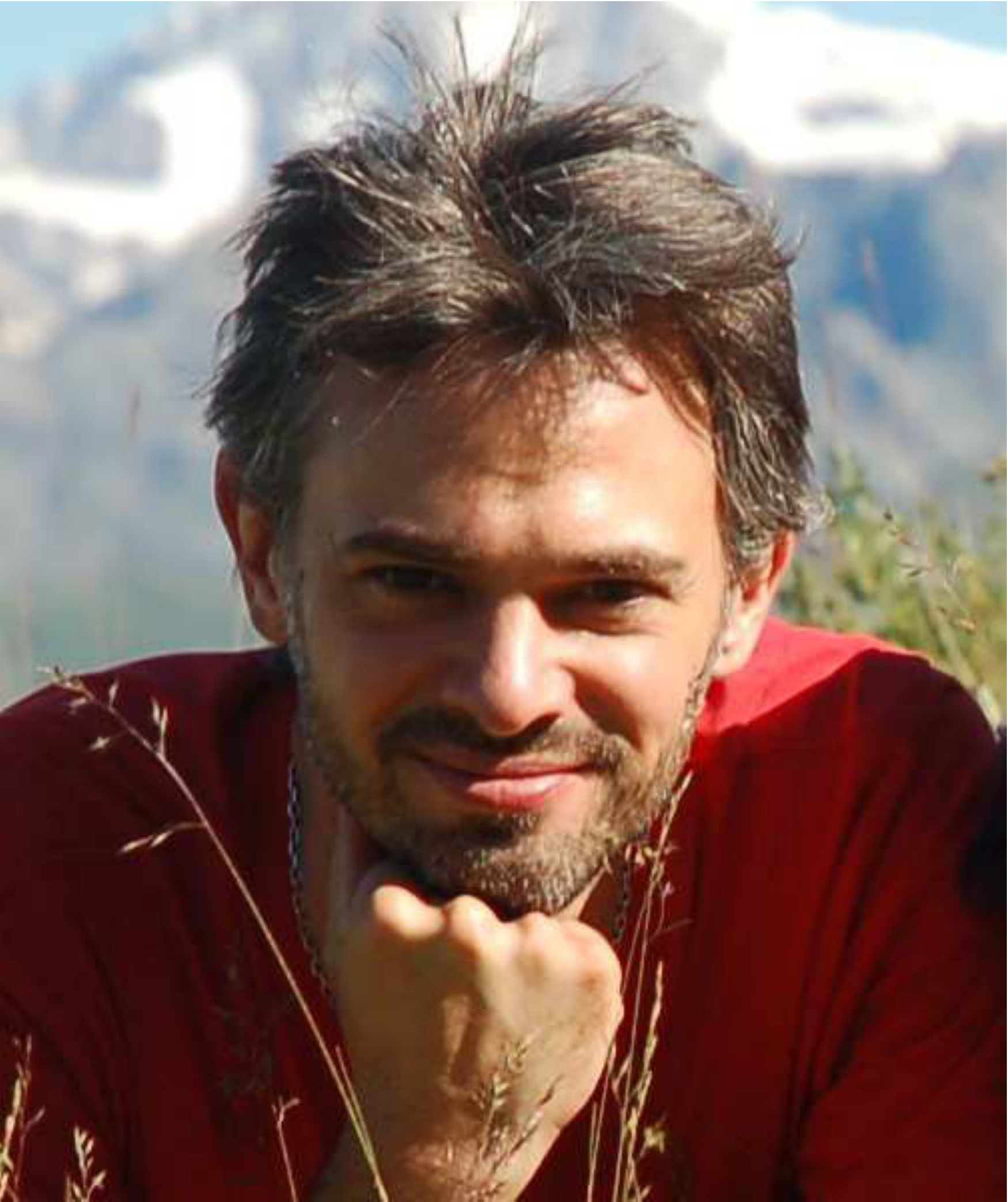}}]{Andr\'es Almansa} received his HDR, Ph.D. and M.Sc./Engineering degrees in Applied Mathematics and Computer Science from Universit\'e Paris-Descartes, ENS Cachan (France) and Universidad de la Republica (Uruguay), respectively, where he is an Associate Professor since 2004. His current interests as a CNRS Research Scientist at Telecom ParisTech include image restoration and analysis, subpixel stereovision and applications to earth observation, high quality digital photography and film restoration.
\end{IEEEbiography}
\begin{IEEEbiography}[{\includegraphics[width=1in,height=1.25in,clip,keepaspectratio]{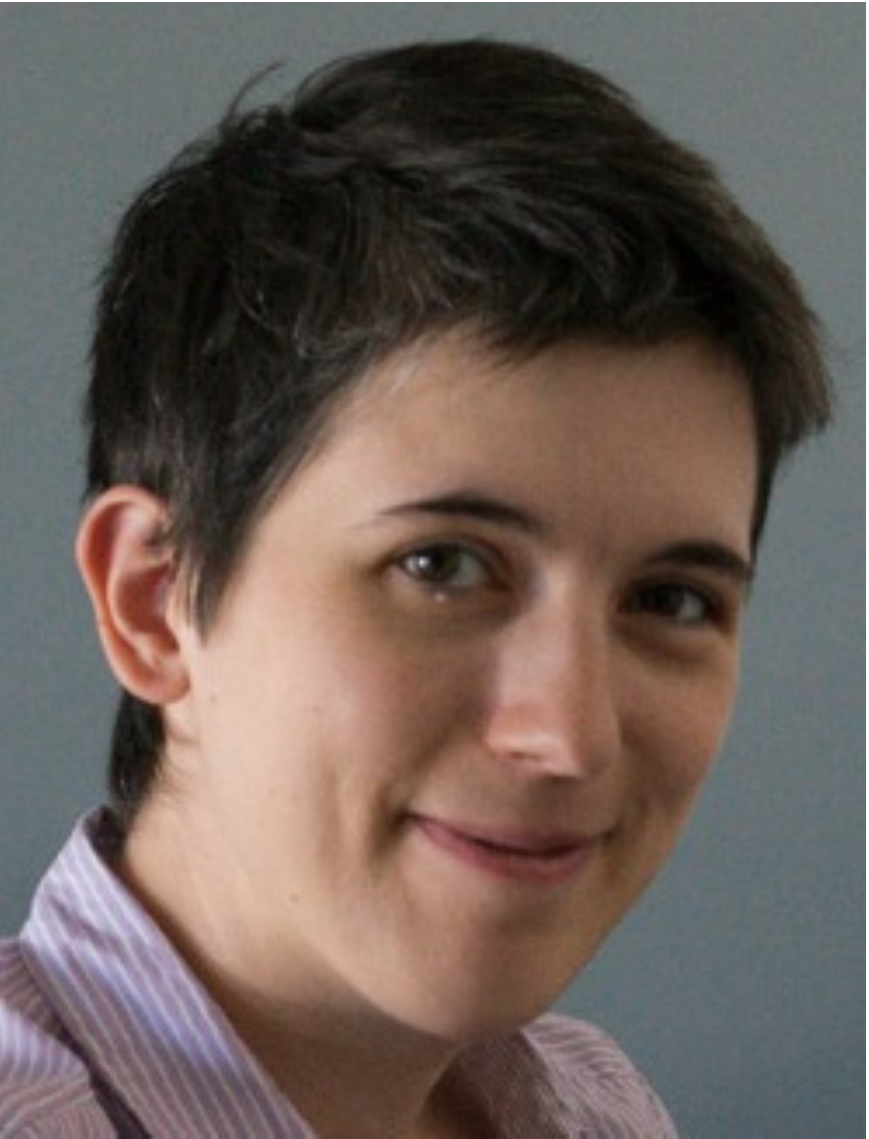}}]{Julie Delon} studied mathematics from the \'Ecole Normale Sup\'erieure de Cachan and received the Ph. D. degree from ENS Cachan, France, in 2004 and the HDR degree from ENS Cachan in 2011. Between 2005 and 2013, she was a CNRS Researcher with T\'el\'ecom ParisTech, Paris, France. Since 2013, she is a Professor of applied mathematics with Paris-Descartes University. Her current research interests include mono and multi-image restoration, optimal transport, and stochastic approaches in computer vision. She was coordinator of the young researcher ANR project FREEDOM between 2007 and 2011.
\end{IEEEbiography}
\begin{IEEEbiography}[{\includegraphics[width=1in,height=1.25in,clip,keepaspectratio]{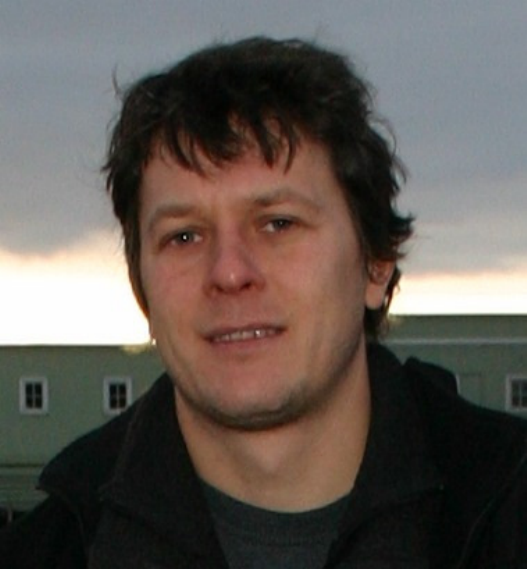}}]{Yann Gousseau} received the engineering degree from the \'Ecole Centrale de Paris, France, in 1995, and the Ph.D. degree in applied mathematics from the University of Paris-Dauphine in 2000. He is currently a professor at T\'el\'ecom ParisTech. His research interests include the mathematical modeling
of natural images and textures, stochastic geometry, image analysis, computer vision and image processing.
\end{IEEEbiography}
\begin{IEEEbiography}[{\includegraphics[width=1in,height=1.25in,clip,keepaspectratio]{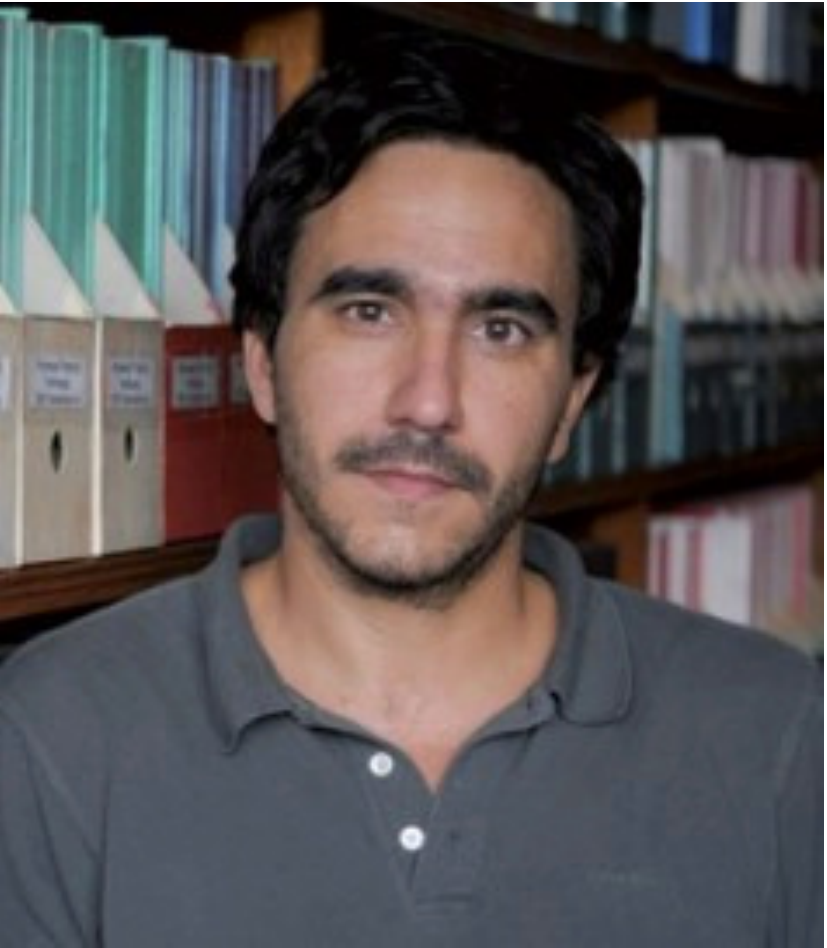}}]{Pablo Mus\'e}
received his Electrical Engineering degree from Universidad de la Rep\'ublica, Uruguay, in 1999, his M.Sc. degree in Mathematics, Vision and Learning and his Ph.D. in Applied Mathematics from ENS de Cachan, France, in 2001 and 2004 respectively. From 2005 to 2006 he was with Cognitech, Inc., Pasadena, CA, USA, where he worked on computer vision and image processing applications. In 2006 and 2007, he was a Postdoctoral Scholar with the Seismological Laboratory, California Institute of Technology, Pasadena, working on remote sensing using optical imaging, radar and GPS networks. Since 2008, he has been with the Division of Electrical Engineering, School of Engineering, Universidad de la Rep\'ublica, where he is currently a Full Professor of signal processing.
\end{IEEEbiography}

\end{document}